\newcommand{\Gray}[1]{{\color{gray}#1}}
\newcommand{\eg}{\textit{e.g.}}
\newcommand{\ie}{\textit{i.e.}}
\newcommand{\wrt}{\textit{w.r.t.}}
\renewcommand{\paragraph}[1]{\textbf{#1}\quad}
\newtheorem{definition}{Definition}
\newtheorem{theorem}{Theorem}
\newtheorem*{theorem*}{Theorem}
\newtheorem{lemma}{Lemma}
\newtheorem{corollary}{Corollary}
\title{%
    Schema Inference for Interpretable Image Classification
}
\author{%
    Haofei~Zhang\thanks{Equal contribution.}, Mengqi~Xue\footnotemark[1], Xiaokang~Liu, Kaixuan~Chen \& Jie~Song\thanks{Corresponding author.}\\
    Zhejiang University\\
    \texttt{\{haofeizhang,mqxue,yijiyeah,chenkx,sjie\}@zju.edu.cn}\\
    \AND
    Mingli~Song \\
    Shanghai Institute for Advanced Study, Zhejiang University\\
    \texttt{brooksong@zju.edu.cn}
}
\begin{document}

\maketitle

\begin{abstract}
In this paper, we study a novel inference paradigm, termed as \textbf{schema inference}, that learns to deductively infer the explainable predictions by rebuilding the prior deep neural network~(DNN) forwarding scheme, guided by the prevalent philosophical cognitive concept of \emph{schema}.
We strive to reformulate the conventional model inference pipeline into a graph matching policy that associates the extracted visual concepts of an image with the pre-computed scene impression, by analogy with human reasoning mechanism via impression matching.
To this end, we devise an elaborated architecture, termed as \emph{SchemaNet}, as a dedicated instantiation of the proposed \emph{schema inference} concept, that models both the visual semantics of input instances and the learned abstract imaginations of target categories as topological relational graphs.
Meanwhile, to capture and leverage the compositional contributions of visual semantics in a global view, we also introduce a universal \emph{Feat2Graph} scheme in \emph{SchemaNet} to establish the relational graphs that contain abundant interaction information.
Both the theoretical analysis and the experimental results on several benchmarks demonstrate that the proposed \emph{schema inference} achieves encouraging performance and meanwhile yields a clear picture of the deductive process leading to the predictions.
Our code is available at \url{https://github.com/zhfeing/SchemaNet-PyTorch}.

\end{abstract}

\section{Introduction}
\begin{quote}
    \emph{``Now this representation of a general procedure of the imagination for providing a concept with its image is what I call the schema for this concept\footnotemark.''}
    
    \emph{\hfill --- Immanuel Kant}
    \label{quote}
\end{quote}
\footnotetext{In \emph{Critique of Pure Reason} (A140/B180).}

\begin{figure}[!t]
\centering%
    \includegraphics[width=1.0\linewidth]{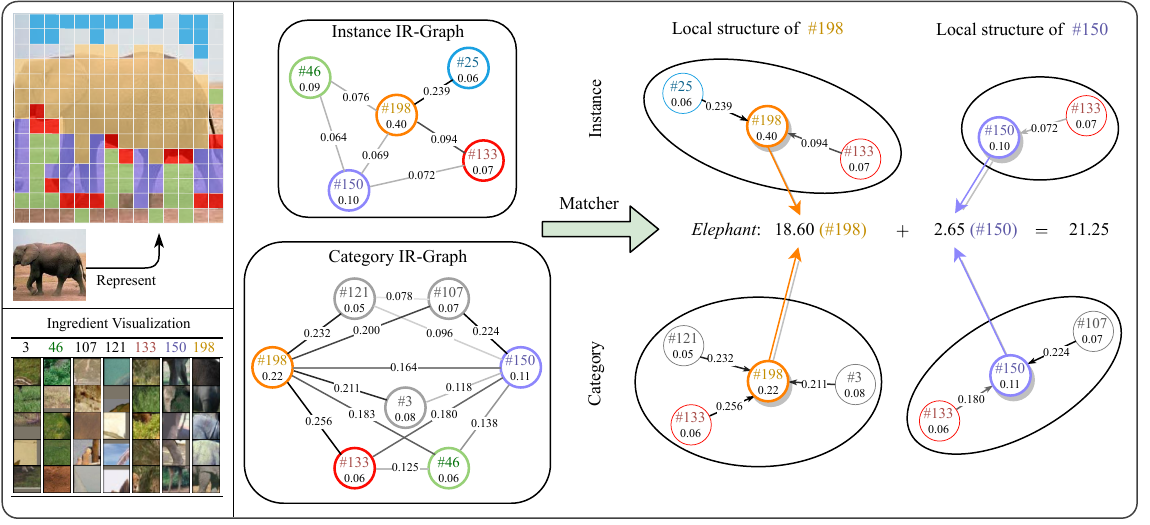}%
\caption{%
An example showing how an instance IR-Graph is matched to the class imagination.
The vertices of IR-Graphs represent visual semantics, and the edges indicate the vertex interactions.
The graph matcher captures the similarity between the local structures of joint vertices~(\eg, vertex \textcolor[HTML]{C48E00}{\#198} and vertex \textcolor[HTML]{5D58A6}{\#150}) by aggregating information from their neighbors as class evidences.
The final prediction is defined as the sum of all evidence.
}%
\label{fig:intro}%
\end{figure}

Deep neural networks~(DNNs) have demonstrated the increasingly prevailing capabilities in visual representations as compared to conventional hand-crafted features.
Take the visual recognition task as an example.
The canonical deep learning~(DL) scheme for image recognition is to yield an effective visual representation from a stack of non-linear layers along with a fully-connected~(FC) classifier at the end~\citep{resnet-he2016deep,vit-dosovitskiy2021an,mlp-tolstikhin2021mlp,yang2022deep}, where specifically the inner-product similarities are computed with each category embedding as the prediction.
Despite the great success of DL, existing deep networks are typically required to simultaneously perceive low-level patterns as well as high-level semantics to make predictions~\citep{zeiler2014visualizing,krizhevsky2017imagenet}.
As such, both the procedure of computing visual representations and the learned category-specific embeddings are opaque to humans, leading to challenges in security-matter scenarios, such as autonomous driving and healthcare applications.

Unlike prior works that merely obtain the targets in the black-box manner, here we strive to devise an innovative and generalized DNN inference paradigm by reformulating the traditional one-shot forwarding scheme into an interpretable DNN reasoning framework, resembling what occurs in deductive human reasoning.
Towards this end, inspired by the \textbf{schema} in Kant's philosophy that describes human cognition as the procedure of associating an image of abstract concepts with the specific sense impression, we propose to formulate DNN inference into an interactive matching procedure between the local visual semantics of an input instance and the abstract category imagination, which is termed as \emph{schema inference} in this paper, leading to the accomplishment of interpretable deductive inference based on visual semantics interactions at the micro-level.

To elaborate the achievement of the proposed concept \textbf{schema inference}, we take here image classification, the most basic task in computer vision, as an example to explain our technical details.
At a \emph{high level},
the devised \emph{schema inference} scheme leverages a pre-trained DNN to extract \emph{feature ingredients} which are, in fact, the semantics represented by a cluster of deep feature vectors from a specific local region in the image domain.
Furthermore, the obtained \emph{feature ingredients} are organized into an \emph{ingredient relation graph}~(IR-Graph) for the sake of modeling their interactions that are characterized by the similarity at the semantic-level as well as the adjacency relationship at the spatial-level.
We then implement the category-specific imagination as an \emph{ingredient relation atlas}~(IR-Atlas) for all target categories induced from observed data samples.
As a final step, the \emph{graph similarity} between an instance-level IR-Graph and the category-level IR-Atlas is computed as the measurement for yielding the target predictions.
As such, instead of relying on deep features, the desired outputs from schema inference contribute only from the relationship of visual words, as shown in \Cref{fig:intro}.

More specifically, our dedicated \emph{schema}-based architecture, termed as \emph{SchemaNet}, is based on vision Transformers~(ViTs)~\citep{vit-dosovitskiy2021an,deit-touvron2021training}, which are nowadays the most prevalent vision backbones.
To effectively obtain the feature ingredients, we collect the intermediate features of the backbone from probe data samples clustered by $k$-means algorithm.
IR-Graphs are established through a customized \emph{Feat2Graph} module that transfers the discretized ingredients array to graph vertices, and meanwhile builds the connections, which indicates the ingredient interactions relying on the self-attention mechanism~\citep{vaswani2017attention} and the spatial adjacency.
Eventually, graph similarities are evaluated via a shallow graph convolutional network~(GCN).

Our work relates to several existing methods that mine semantic-rich visual words from DNN backbones for self-explanation~\citep{brendel2018approximating,chen2019looks,Nauta_2021_CVPR,xue4210199learn,yang2022factorizing}.
Particularly, \emph{BagNet}~\citep{brendel2018approximating} uses a DNN as visual word extractor that is analogous to traditional bag-of-visual-words~(BoVW) representation~\citep{bovw-yang2007evaluating} with SIFT features~\citep{sift-lowe2004distinctive}.
Moreover, \citet{chen2019looks} develop \emph{ProtoPNet} framework that evaluates the similarity scores between image parts and learned class-specific prototypes for inference.
Despite their encouraging performance, all these existing methods suffer from the lack of considerations in the compositional contributions of visual semantics, which is, however, already proved critical for both human and DNN inference~\citep{deng2022discovering}, and are not applicable to schema inference in consequence.
We discuss more related literature in~\Cref{app:related_work}.

In sum, our contribution is an innovative \emph{schema inference} paradigm that reformulates the existing black-box network forwarding into a deductive DNN reasoning procedure, allowing for a clear insight into how the class evidences are gathered and deductively lead to the target predictions.
This is achieved by building the dedicated IR-Graph and IR-Atlas with the extracted \emph{feature ingredients} and then performing graph matching between IR-Graph and IR-Atlas to derive the desired predictions.
We also provide a theoretical analysis to explain the interpretability of our schema inference results.
Experimental results on CIFAR-10/100, Caltech-101, and ImageNet demonstrate that the proposed schema inference yields results superior to the state-of-the-art interpretable approaches.
Further, we demonstrate that by transferring to unseen tasks without fine-tuning the matcher, the learned knowledge of each class is, in fact, stored in IR-Atlas rather than the matcher, thereby exhibiting the high interpretability of the proposed method.

\section{SchemaNet}
\begin{figure}[!t]
\centering%
\includegraphics[width=\linewidth]{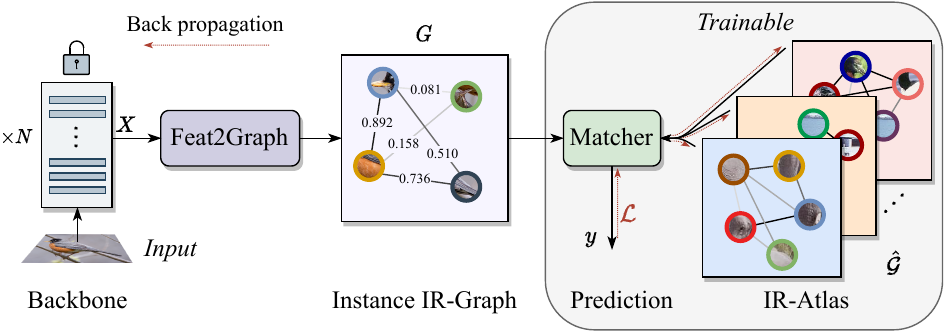}%
\caption{%
    The overall pipeline of our proposed SchemaNet.
    Firstly, intermediate feature $X$ from the backbone is fed into the \emph{Feat2Graph} module.
    The converted IR-Graph is then matched to a set of imaginations, \ie, category-level IR-Atlas induced from observed data, for making the prediction.
}%
\label{fig:method}%
\end{figure}

In this section, we introduce our proposed SchemaNet in detail.
The overall procedure is illustrated in~\Cref{fig:method}, including a Feat2Graph module that converts deep features to instance IR-Graphs, a learnable IR-Atlas, and a graph matcher for making predictions.
The main idea is to model an input image as an instance-level graph in which nodes are local semantics captured by the DNN backbone and edges represent their interactions.
Meanwhile, a category-level graph is maintained as the imagination for each class driven by training samples.
Finally, by measuring the similarity between the instance and category graphs, we are able to interpret how predictions are made.

\subsection{Preliminary}
\label{sec:preliminary}
We first give a brief review of the ViT architecture.
The simplest ViT implementation for image classification is proposed by~\citet{vit-dosovitskiy2021an}, which treats an image as a sequence of independent $16\times 16$ patches.
The embeddings of all patches are then directly fed to a Transformer-based network~\citep{vaswani2017attention}, \ie, a stack of multiple encoder layers with the same structure: a multi-head self-attention mechanism~(MHSA) followed by a multilayer perceptron~(MLP) with residual connections.
Additionally, they append a class token~(CLS) to the input sequence for classification alone rather than the average pooled feature.
Following this work, \citet{deit-touvron2021training} propose DeiT appending another distillation token~(DIST) to learn soft decision targets from a teacher model.
This paper will mainly focus on DeiT backbones due to the relatively simple yet efficient network architecture.

Let $X\in\mathbb{R}^{(n + \zeta)\times d}$ denotes the input feature~(a sequence of $n$ visual tokens and $\zeta$ auxillary tokens, \eg, CLS and DIST, each of which is a $d$-dimensional embedding vector) to a MHSA module.
The output can be simplified as $\mathrm{MHSA}(X) = \sum_{h=1}^H \tilde{\Psi}_h X W_h$, where $\tilde{\Psi}_h\in\mathbb{R}^{(n + \zeta)\times (n + \zeta)}$ denotes the self-attention matrix normalized by row-wise softmax of head $h\in\{1, \ldots, H\}$, and $W_h\in\mathbb{R}^{d\times d}$ is a learnable projection matrix.
As the MLP module processes each token individually, the visual token interactions take place only in the MHSA modules, facilitating the extraction of their relationships from the perspective of ViTs.

For convenience, we define some notations related to the self-attention matrix.
Let $\bar{\Psi}$ be the average attention matrix of all heads and $\Psi$ be the symmetric attention matrix: $\Psi = (\bar{\Psi} + {\bar{\Psi}}^\top) / 2$.
We partition $\Psi$ into four submatrices:
\begin{equation}
    \Psi = \begin{bmatrix}
        \Psi^* & \Psi^A \\
        {\Psi^A}^\top & \Psi^V 
    \end{bmatrix} \text{,}
    \label{eq:attn_matrices}
\end{equation}
where $\Psi^A\in\mathbb{R}^{\zeta \times n}$ is the attention to the auxiliary tokens, $\Psi^V\in\mathbb{R}^{n \times n}$ represents the relationships of visual tokens, and $\Psi^*\in\mathbb{R}^{\zeta \times \zeta}$.
Finally, let $\psi^\text{CLS}\in\mathbb{R}^{n}$ represent the attention values to the CLS token extracted from $\Psi^A$.

\subsection{Feat2Graph}
\label{sec:feat2graph}
As shown in~\Cref{fig:method}, given a pre-trained backbone and an input image, the intermediate feature $X$ is converted to an IR-Graph $G$ for inference by the Feat2Graph module which includes three steps:
(1) discretizing $X$ into a sequence of feature ingredients with specific semantics;
(2) mapping the ingredients to weighted vertices of IR-Graph;
(3) assigning a weighted edge to each vertex pair indicating their interaction.
We start by defining IR-Graph and IR-Atlas mathematically.
\begin{definition}[IR-Graph]
    An IR-Graph is an undirected fully-connected graph $G=(E, V)$, in which vertiex set $V$ is the set of feature ingredients for an instance or a specific category, and edge set $E$ encodes the interactions of vertex pairs.
    \label{def:ir-graph}
\end{definition}
In particular, to indicate the vertex importance~(for instance, ``bird head'' should contribute to the prediction of bird more than ``sky'' in human cognition), a non-negative weight $\lambda\in\mathbb{R}_{+}$ is assigned to each vertex.
Let $\varLambda\in\mathbb{R}_{+}^{|V|}=\{\lambda_i\}_{i=1}^{|V|}$ be the collection of all vertex weights.
Besides, the interaction between vertex $i$ and $j$ is quantified by a non-negative weight $e_{i,j} \in\mathbb{R}_{+}$.
With a slight abuse of notation, we define $E\in\mathbb{R}_{+}^{|V| \times |V|}$ as the weighted adjacency matrix in the following sections.

The IR-Atlas is defined to represent the imagination of all categories:
\begin{definition}[IR-Atlas]
    An IR-Atlas $\hat{\mathcal{G}}$ of $C$ classes is a set of category-level IR-Graphs, in which element $\hat{G}_c = (\hat{E}_c, \hat{V}_c)$ for class $c$ has learnable vertex weights $\hat{\varLambda}_c$ and edge weights $\hat{E}_c$.
    \label{def:ir-atlas}
\end{definition}

\paragraph{Discretization.}
The main purpose of feature discretization is to mine the most common patterns appearing in the dataset, each of which corresponds to a human understandable semantic, named visual word, such as ``car wheel'' or ``bird head''.
However, local regions with the same semantic may show differently because of scaling, rotation, or even distortion.
In our approach, we utilize the DNN backbone to extract a relatively uniform feature instead of the traditional SIFT feature utilized in~\citep{bovw-yang2007evaluating}.
To be specific, a visual vocabulary $\Omega=\{\omega_i\}_{i=1}^{M}$~($\omega_i \in \mathbb{R}^d$) of size $M$ is constructed by $k$-means clustering running on the collection $\mathbb{X}$ of visual tokens extracted from the probe dataset\footnotemark.
Further, a deep feature $X$~(CLS and DIST are removed) is discretized to a sequence of feature ingredients by replacing each element $x$ with the index of the closest visual word
\begin{equation}
    \mathrm{Ingredient}(x) = \mathop{\arg\min}_{i\in\{1, \ldots, M\}} \|x - \omega_i\|_2 \text{.}
    \label{eq:ingredient}
\end{equation}
Strictly, the \emph{ingredient} is referred to as the \emph{index} of visual word $\omega$.
With the entire ingredient set $\mathbb{M} = \{1, \ldots, M\}$, the discretized sequence is denoted as $\tilde{X} = (\tilde{x}_1, \ldots, \tilde{x}_n)$, where $\tilde{x}_i\in\mathbb{M}$ is computed from~\Cref{eq:ingredient}.
The detailed settings of $M$ are presented in~\Cref{app:vocabulary_size}.

\footnotetext{For probe dataset with $D$ instances, the collection $\mathbb{X}$ has $n \times D$ visual tokens~(ignoring CLS and DIST).}

\paragraph{Feat2Vertex.}
After we have computed the discretized feature $\tilde{X}$, the vertices of this feature are the unique ingredients: $V = \mathrm{Unique}(\tilde{X})$.
The importance of each vertex is measured from two criterions: the contribution to the DNN's final prediction and the appearance statistically.
Formally, for vertex $v\in\mathbb{M}$, the importance is defined as
\begin{equation}
    \lambda_{v} = \alpha_1 \lambda_{v}^\text{CLS} + \alpha_2 \lambda_{v}^\text{bag} = \alpha_1 \sum_{i\in\varXi(v|\tilde{X})} \psi^\text{CLS}_i + \alpha_2 |\varXi(v|\tilde{X})| \text{,}
    \label{eq:feat2vertex}
\end{equation}
where $\varXi(v|\tilde{X})$ is the set of all the appeared position of ingredient $v$ in $\tilde{X}$, $\psi^\text{CLS}_i$ is the attention between the $i$-th visual token and the CLS token, and $\alpha_{1, 2} \geq 0$ are learnable weights balancing the two terms.
When $\alpha_1 = 0$, $\varLambda$ is equivalent to BoVW representation.

\paragraph{Feat2Edge.}
With the self-attention matrix $\Psi^V$ extracted from~\Cref{eq:attn_matrices}, the interactions between vertices can be defined and computed with efficiency.
For any two different vertices $u, v\in V$, the edge weight is the comprehensive consideration of the similarity in the view of ViT and spatial adjacency.
The first term is the average attention between all repeated pairs $\varPi[(u, v)|\tilde{X}]$:
\begin{equation}
    e^\text{attn}_{u, v} = \frac{1}{|\varPi[(u, v)|\tilde{X}]|} \sum_{(i, j)\in \varPi[(u, v)|\tilde{X}]} \Psi^V_{i,j} \text{,}
    \label{eq:edge_attn}
\end{equation}
where $\varPi[(u, v)|\tilde{X}]$ is the Cartesian product of $\varXi(u|\tilde{X})$ and $\varXi(v|\tilde{X})$, and $\Psi^C_{i,j}$ is the attention between the visual tokens at $i$ and $j$ positions.
Furthermore, we define the adjacency as
\begin{equation}
    e^\text{adj}_{u, v} = \frac{1}{|\varPi[(u, v)|\tilde{X}]|} \sum_{(i, j)\in \varPi[(u, v)|\tilde{X}]}
    \frac{1}{\epsilon + \|\mathrm{Pos}(i) - \mathrm{Pos}(j)\|_2} \text{,}
    \label{eq:edge_adj}
\end{equation}
where function $\mathrm{Pos}(\cdot)$ returns the original 2D coordinates of the input visual token with regard to the patch array after the patch embedding module of ViTs.
Eventually, the interaction between vertices $u$ and $v$ is the weighted sum of the two components with learnable $\beta_{1, 2}$:
\begin{equation}
    e_{u, v} = \beta_1 e^\text{attn}_{u, v} + \beta_2 e^\text{adj}_{u, v} \text{.}
    \label{eq:feat2edge}
\end{equation}
\Cref{eq:edge_attn,eq:edge_adj} are both invariant when exchanging vertex $u$ and $v$, so our IR-Graph is equivalent to undirected graph.

It is worth noting that only \textit{semantics} and \textit{their relationships} are preserved in IR-Graphs for further inference rather than deep features.

\subsection{Matcher}
\label{sec:matcher}
\begin{wrapfigure}[20]{r}{.40\linewidth}%
\vspace{-1em}%
\centering%
\includegraphics[width=\linewidth]{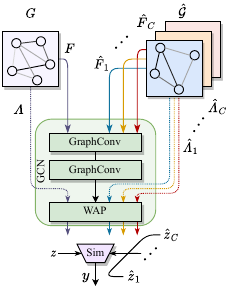}%
\vspace{-1em}%
\caption{Illustration of the matcher.}%
\label{fig:matcher}%
\end{wrapfigure}

After converting an input image to IR-Graph, the matcher finds the most similar category-level graph in IR-Atlas.
The overall procedure is shown in~\Cref{fig:matcher}, which is composed of a GCN module and a similarity computing module~(Sim) that generates the final prediction.
Detailed settings of the matcher are in~\Cref{app:gcn_settings}.

For feeding IR-Graphs to the GCN module, we assign each ingredient $m\in \mathbb{M}$~(\ie, graph vertices) with a trainable embedding vector, each of which is initialized from an $d_G$-dimensional random vector drawn independently from multivariate Gaussian distribution $\mathcal{N}(\bm{0}, I_{d_G})$.

In the GCN module, we adopt GraphConv~\citep{morris2019weisfeiler} with slight modifications for weighted edges.
Let $F\in\mathbb{R}^{|V|\times d_G}$ be the input feature of all vertices to a GraphConv layer, the output is computed as
\begin{equation}
    \mathrm{GraphConv}(F) = \mathrm{Norm}\left(\sigma\left((I_{d_G} + E)FW\right)\right) \text{,}
    \label{eq:graph_conv}
\end{equation}
where $\sigma$ denotes a non-linear activation function, $\mathrm{Norm}$ denotes feature normalization, and $W\in\mathbb{R}^{d_G \times d_G}$ is a learnable projection matrix.
After passing through total $L_\text{G}$ layers of GraphConv, a weighted average pooling~(WAP) layer summarizes the vertex embeddings $F^{(L_G)}$ weighted by $\varLambda$, yielding the graph representation $z = \varLambda F^{(L_G)}$.
Now we have $z$ for $G$ and $\hat{Z}\in\mathbb{R}^{C\times d_G} = (\hat{z}_1, \ldots,  \hat{z}_C)$ for all graphs in $\hat{\mathcal{G}}$.
By computing the inner-product similarities, the final prediction logits is defined as $y = \hat{Z} z^\top$.

\paragraph{Prediction interpretation.}
Given an instance graph $G$ and a category graph $\hat{G}$, we now analyze how the prediction is made based on their vertices and edges.
We start with analyzing the similarity score $s=\hat{z}z^\top$, where $z = \varLambda F^{(L_G)}$ and $\hat{z} = \hat{\varLambda} \hat{F}^{(L_G)}$ are weighted sum of the vertex embeddings respectively.
For the sake of further discussion, let $f_v^{(l)}$ denotes the embedding vector of vertex $v$ output from the $l$-th GraphConv layer, and $\lambda_v^{(l)}$ denotes its weight.
The original vertex embedding of $v$ is defined as $f_v^{(0)}$, which is identical for the same ingredient in any two graphs.
Besides, let $\Phi = V\cap \hat{V}$ be the set of shared vertices in $G$ and $\hat{G}$.
The computation of $s$ can be expanded as
\begin{equation}
    s = \sum_{(u, v)\in\hat{V}\times V} \hat{\lambda}_u \lambda_v \hat{f}_u^{(L_G)} {f_v^{(L_G)}}^\top \text{.}
    \label{eq:expand_similarity}
\end{equation}
Since directly analyzing~\Cref{eq:expand_similarity} is hard, we show an approximation and proof it in~\Cref{app:proof_theorem1}.
\begin{theorem}
    For a shallow GCN module, \Cref{eq:expand_similarity} can be approximated by
    \begin{equation}
        s = \sum_{\phi\in\Phi} \hat{\lambda}_\phi \lambda_\phi \hat{f}_\phi^{(L_G)} {f_\phi^{(L_G)}}^\top \text{.}
        \label{eq:simplified_similarity}
    \end{equation}
    Particularly, if $L_G=0$ and $\alpha_1=0$, our method is equivalent to BoVW with a linear classifier.
    \label{theorem:1}
\end{theorem}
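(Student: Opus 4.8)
The plan is to split the double sum in \Cref{eq:expand_similarity} into \emph{diagonal} terms, where the two vertices carry the same ingredient index, and \emph{off-diagonal} terms, where they differ, and then argue that only the diagonal part survives. The structural fact I would exploit is that the initial vertex embeddings are \emph{shared} across graphs: for any ingredient $m\in\mathbb{M}$ appearing in both $G$ and $\hat{G}$, the layer-$0$ feature $f_m^{(0)}$ in $G$ coincides with $\hat{f}_m^{(0)}$ in $\hat{G}$, since both are the same trainable vector drawn once from $\mathcal{N}(\bm{0}, I_{d_G})$. Consequently, whenever $u=v$ the common index must lie in $V\cap\hat{V}=\Phi$, so the diagonal part of the sum is \emph{exactly} the right-hand side of \Cref{eq:simplified_similarity}; the whole task reduces to showing the off-diagonal mass is negligible.

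First I would treat the base case $L_G=0$, where $f_v^{(0)}$ and $\hat{f}_u^{(0)}$ are raw Gaussian vectors. For distinct ingredients $u\neq v$, the inner product $\hat{f}_u^{(0)}{f_v^{(0)}}^\top$ is a sum of $d_G$ independent zero-mean products, hence has expectation $0$ and standard deviation $O(\sqrt{d_G})$, whereas each diagonal self-product $f_\phi^{(0)}{f_\phi^{(0)}}^\top$ concentrates at $d_G$. In the high-dimensional regime the off-diagonal contributions are thus negligible relative to the diagonal ones, and \Cref{eq:expand_similarity} collapses to \Cref{eq:simplified_similarity}.

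Next I would extend to a \emph{shallow} GCN. One layer of \Cref{eq:graph_conv} sends $f_v^{(0)}$ to a normalized activation of $f_v^{(0)}W+\sum_u E_{v,u}f_u^{(0)}W$, so each propagated embedding is the vertex's own initial vector plus a neighbor-weighted mixture. Because the depth $L_G$ is small, the self-contribution carried by the identity term in $I_{d_G}+E$ dominates, so for a shared ingredient $\phi$ the cross-graph product $\hat{f}_\phi^{(L_G)}{f_\phi^{(L_G)}}^\top$ stays large and positive even though $G$ and $\hat{G}$ have different edge weights, while embeddings of distinct ingredients remain approximately orthogonal up to the mild correlations injected by a few rounds of message passing. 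I expect this to be the main obstacle: unlike the clean $L_G=0$ case, the propagated features of the \emph{same} ingredient are no longer identical across the two graphs (their neighborhoods differ), so one must show both that the diagonal inner products do not collapse and that the accumulated off-diagonal correlations stay bounded. This is exactly where the shallowness hypothesis does the work, and I would make it quantitative by bounding the off-diagonal mass by a quantity that grows with $L_G$ and with the spectral norm of $E$ but vanishes in the high-$d_G$ limit.

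Finally, for the special case $L_G=0$ and $\alpha_1=0$ I would simply substitute. With $\alpha_1=0$, \Cref{eq:feat2vertex} gives $\lambda_v=\alpha_2|\varXi(v|\tilde{X})|$, \ie, the (scaled) occurrence count of ingredient $v$, which is precisely the BoVW histogram. With $L_G=0$, the surviving diagonal terms reduce, after the orthogonality approximation, to $s\approx d_G\sum_{\phi\in\Phi}\hat{\lambda}_\phi\lambda_\phi$, a plain inner product between the instance histogram $\varLambda$ and the learnable per-class weights $\hat{\varLambda}_c$. Reading $\hat{\varLambda}_c$ as the weight vector of class $c$ then exhibits the predictor as BoVW followed by a linear classifier, completing the statement.
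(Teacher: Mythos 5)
Your proposal is correct and follows essentially the same route as the paper: split \Cref{eq:expand_similarity} into same-ingredient and distinct-ingredient terms, use the near-orthogonality of independent Gaussian initial embeddings (the paper's \Cref{lemma:1}, with $\mathbb{E}[f f^\top] = d_G$ versus zero-mean cross terms) to discard the off-diagonal mass, and obtain the BoVW linear classifier by substituting $\alpha_1=0$, $L_G=0$ so that $\lambda_\phi$ becomes the occurrence count and $\hat{\lambda}_\phi \|f_\phi^{(0)}\|_2^2$ plays the role of a learnable class weight. The only minor difference is in the depth-$L_G$ step: you argue the self-loop term of $I_{d_G}+E$ dominates and propose spectral-norm bounds, whereas the paper expands $\hat{f}_u^{(L_G)}{f_v^{(L_G)}}^\top$ over neighbor pairs at layer $L_G-1$ under a whitening assumption (which is also what yields its local-structure interpretation in \Cref{cor:1}); both rest on the same orthogonality heuristic, and you correctly flag the same obstacle the paper handles only semi-formally, namely that propagated embeddings of a shared vertex differ across the two graphs.
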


Further, as delineated in~\Cref{cor:1}, the interpretability of the graph matcher with a shallow GCN can be stated as:
(1) the final prediction score for a category is the summation of all present class evidence~(shared vertices $\phi$ in $\Phi$) represented by the vertex weights;
(2) the shared neighbors~(local structure) connected to the shared vertex $\phi$ also contribute to the final prediction.

\subsection{Training SchemaNet}
\label{sec:train_loss}
Except for training SchemaNet by cross-entropy loss $\mathcal{L}_{CE}$ between the predictions and ground truths, we further constrain the complexity of IR-Atlas.
More formally, the complexity is defined for both edges and vertex weights of all graphs in IR-Atlas as
\begin{equation}
    \mathcal{L}_v = \frac{1}{C} \sum_{c=1}^C \mathcal{H}(\hat{\varLambda}_c),\quad
    \mathcal{L}_e = \frac{1}{C|\hat{V}|}\sum_{c=1}^C\sum_{u\in\hat{V}} \mathcal{H}(\hat{E}_{c,u})
 \text{,}
\end{equation}
where function $\mathcal{H}(x)$ computes the entropy of the input vector $x\in\mathbb{R}_+^{k}$ normalized by its sum of $k$ components, and $\hat{E}_{c,u}$ denotes the weighted edges connected to vertex $u$ in $\hat{G}_c$ of class $c$.
The final optimization goal is
\begin{equation}
    \mathcal{L} = \mathcal{L}_{CE} + \gamma_v \mathcal{L}_v + \gamma_e \mathcal{L}_e \text{,}
    \label{eq:loss}
\end{equation}
where $\gamma_v$ and $\gamma_e$ are hyperparameters.
The overall training procedure is shown in~\Cref{app:training_algorithm}.

\paragraph{Sparsification.}
Each graph in IR-Atlas is initialized as a fully-connected graph with random vertex and edge weights.
However, this requires $\mathcal{O}(C|V|^2)$ memory space for storage and training the whole set of edges.
To alleviate this issue, we initialize IR-Atlas by averaging the instance IR-Graphs for each class and remove the edges connected to vertices whose weights are below a given threshold $\delta_t=0.01$ from the category-specific graph.
Such a procedure will not only dramatically decrease the learnable parameters but also boost the final performance, as shown in~\Cref{tab:compare}.

\section{Experiments}
\subsection{Implementation}
\label{sec:imp}
\paragraph{Datasets.}
We evaluate our method on CIFAR-10/100~\citep{krizhevsky2009learning}, Caltech-101~\citep{caltech101}, and ImageNet~\citep{deng2009imagenet}.
Particularly, Caltech-101 has around 9k images, and we manually split it into a training set with 7.4k images and a test set with 1.3k images.

\paragraph{Selection of the hyperparameters.}
Several hyperparameters are involved in our method, including $\lambda_{v,e}$ in~\Cref{eq:loss} for adjusting the sparsity of IR-Atlas.
We set $\lambda_{v} = 0.5$ and $\lambda_{e} = 0.75$ as the default value for the following evaluation and the sensitive analyses are presented in~\Cref{sec:sensitivity}.
The initial values of learnable $\alpha_{1,2}$ and $\beta_{1,2}$ are set to $0.5$, and we show their learning curves in~\Cref{fig:app_curve}.

\subsection{Experimental Results}
\begin{table}[!t]
\caption{%
    Comparison results on CIFAR-10/100 and Caltech-101 with different backbones.
    We report the top-1 accuracy, number of learnable parameters, and FLOPs.
    The results in gray color are merely for listing the accuracy of ViTs rather than for making comparisons.
}%
\centering
\setlength{\tabcolsep}{5pt}%
\resizebox{1.0\linewidth}{!}{
    \begin{tabular}{ll | rrc|  rrc| rrc}
        \toprule
        \multicolumn{1}{c}{\multirow{2}{*}{\textbf{Backbone}}}
        & \multicolumn{1}{c|}{\multirow{2}{*}{\textbf{Method}}}
        & \multicolumn{3}{c|}{\textbf{CIFAR-10}}
        & \multicolumn{3}{c|}{\textbf{CIFAR-100}}
        & \multicolumn{3}{c}{\textbf{Caltech-101}} \\

        & & \multicolumn{1}{c}{Acc} & \multicolumn{1}{c}{\#param.} & \multicolumn{1}{c|}{FLOPs}
        & \multicolumn{1}{c}{Acc} & \multicolumn{1}{c}{\#param.} & \multicolumn{1}{c|}{FLOPs}
        & \multicolumn{1}{c}{Acc} & \multicolumn{1}{c}{\#param.} & \multicolumn{1}{c}{FLOPs} \\
        \midrule
        \multicolumn{1}{c}{-} & BoVW-SIFT
        & 20.20 & \multicolumn{1}{c}{-} & \multicolumn{1}{c|}{-}
        & \multicolumn{1}{c}{-} & \multicolumn{1}{c}{-} & \multicolumn{1}{c|}{-}
        & \multicolumn{1}{c}{-} & \multicolumn{1}{c}{-} & \multicolumn{1}{c}{-} \\
        \midrule

        \multicolumn{1}{c}{\multirow{6}{*}{DeiT-Tiny}}
        & Base
        & \Gray{96.69} & \Gray{5.53M} & \Gray{1.27G}
        & \Gray{82.88} & \Gray{5.54M} & \Gray{1.27G}
        & \Gray{92.53} & \Gray{5.54M} & \Gray{1.27G} \\
        & Backbone-FC
        & 94.58 & 1.93K & 1.06G
        & 76.74 & 19.3K & 1.06G
        & 76.71 & 19.5K & 1.06G \\
        & BoVW-Deep
        & 94.95 & 10.3K & 1.06G
        & 75.45 & 103K & 1.06G
        & 60.82 &  104K & 1.06G \\
        & BagNet
        & 70.27 & 5.73M & 1.10G
        & 42.90 & 5.84M & 1.10G
        & 72.54 & 5.85M & 1.10G \\
        & SchemaNet
        & \underline{95.92} & 396K & 1.19G
        & \underline{77.11} & 105M & 2.40G
        & \underline{83.24} & 106M & 2.40G \\
        & SchemaNet-Init
        & \textbf{95.96} & 234K & 1.06G
        & \textbf{78.45} & 573K & 1.06G
        & \textbf{87.57} & 564K & 1.06G \\
        \midrule

        \multicolumn{1}{c}{\multirow{6}{*}{DeiT-Small}}
        & Base
        & \Gray{97.77} & \Gray{21.7M} & \Gray{4.63G}
        & \Gray{87.65} & \Gray{21.7M} & \Gray{4.63G}
        & \Gray{94.96} & \Gray{21.7M} & \Gray{4.63G} \\
        & Backbone-FC
        & 96.62 & 3.85K & 3.87G
        & \underline{82.44} & 38.5K & 3.87G
        & \underline{86.94} & 38.9K & 3.87G \\
        & BoVW-Deep
        & 95.61 & 10.3K & 3.89G
        & 77.39 & 103K & 3.89G
        & 65.54 & 104K & 3.89G \\
        & BagNet
        & 83.90 & 22.1M & 3.95G
        & 50.49 & 22.2M & 3.95G
        & 78.13 & 22.2M & 3.95G \\
        & SchemaNet
        & \textbf{97.42} & 396K & 4.00G
        & 82.21 & 105M & 5.21G
        & 84.66 & 106M & 5.21G \\
        & SchemaNet-Init
        & \underline{97.35} & 235K & 3.89G
        & \textbf{82.46} & 591K & 3.89G
        & \textbf{90.09} & 589K & 3.89G \\
        \midrule

        \multicolumn{1}{c}{\multirow{6}{*}{DeiT-Base}}
        & Base
        & \Gray{98.41} & \Gray{85.8M} & \Gray{17.6G}
        & \Gray{89.17} & \Gray{85.9M} & \Gray{17.6G}
        & \Gray{95.83} & \Gray{85.9M} & \Gray{17.6G} \\
        & Backbone-FC
        & 97.04 & 7.69K & 14.7G
        & \textbf{81.66} & 76.9K & 14.7G
        & \underline{88.20} & 77.7K & 14.7G \\
        & BoVW-Deep
        & 95.50 & 10.3K & 14.7G
        & 72.23 & 103K & 14.7G
        & 66.17 & 104K & 14.7G \\
        & BagNet
        & 90.71 & 86.6M & 14.9G
        & 67.84 & 86.8M & 14.9G
        & 86.55 & 86.8M & 14.9G \\
        & SchemaNet
        & \textbf{97.26} & 396K & 14.8G
        & 79.26 & 105M & 16.1G
        & 81.12 & 106M & 16.1G \\
        & SchemaNet-Init
        & \underline{97.07} & 235K & 14.7G
        & \underline{79.36} & 606K & 14.7G
        & \textbf{90.72} & 593K & 14.7G \\
        \bottomrule
    \end{tabular}%
}%
\label{tab:compare}
\end{table}

We evaluate our method and comparison baselines with the following settings:
\begin{itemize}
    \item \textbf{BoVW-SIFT:} the traditional BoVW approach that utilizing SIFT feature for constructing the visual vocabulary~\citep{bovw-yang2007evaluating}.
    \item \textbf{Base:} the base ViT model directly trained on the benchmark datasets with initial weights obtained from the official repository in~\citep{deit-touvron2021training}, which is our backbone.
    \item \textbf{Backbone-FC:} the frozen backbone with an FC layer. The intermediate features extracted from the frozen backbone are then fed to a global average pooling layer followed by a linear classifier, similar to the standard CNN protocol.
    \item \textbf{BoVW-Deep:} the BoVW approach with our extracted visual vocabulary.
    \item \textbf{BagNet:} the implementation of BagNet~\citep{brendel2018approximating} with ViT backbone which is constructed following our ``Base'' setting.
    \item \textbf{SchemaNet:} our proposed SchemaNet without initialization.
    \item \textbf{SchemaNet-Init:} our proposed SchemaNet initialized by the average instance IR-Graphs.
\end{itemize}

The comparison results are shown in~\Cref{tab:compare}, where we demonstrate the top-1 accuracy, the number of learnable parameters, as well as the FLOPs for each setting.
It is noticeable that the proposed SchemaNet consistently outperforms the baseline methods.
Specifically, for the backbone of DeiT-Tiny, ours achieves significant improvement~(\ie, about 25.7\% and 35.3\% absolute gain on CIFAR-10 and CIFAR-100, respectively) over BagNet.
With a larger backbone such as DeiT-Small and DeiT-Base, though the performance gap slightly shrinks, our SchemaNet with initializations still yields results superior to BagNet, by an average absolute gain of 12.7\%.
The reason is that, unlike BagNet that uses summation to obtain the similarity map as the BoVW representation~(without feature discretization), ``BoVW-Deep'' is implemented with the discretized visual words.
As such, both of the approaches do not consider the visual word interactions, leading to the inferior performance especially when the size of the visual vocabulary expands on CIFAR-100 and Caltech-101.

Moreover, compared with ``Backbone-FC'' that merely relies on intermediate deep features, the proposed SchemaNet also demonstrates encouraging results, achieving about a 1.4\% absolute gain.
Furthermore, when initialization and redundant vertices are removed as indicated in~\Cref{sec:imp}, our ``SchemaNet-Init'', with an even less learnable number of parameters, still leads to a higher accuracy.
The proposed method also delivers gratifying performance, on par with the prevalent deep ViT especially on CIFAR-10, but gives a clear insight on the reasoning procedure.
We present more experimental results in~\Cref{app:results}, including the comparison results on ImageNet~(\Cref{app:imnet}), employing other attribution methods~(\Cref{app:attribution}), attacking with adversarial images~(\Cref{app:adv_attack}), the extendability analysis of our SchemaNet~(\Cref{app:inc}), and the inference cost of all components~(\Cref{app:feat2graph_evaluation}).

\subsection{Evaluation of the Interpretability}
\label{sec:eval_interpretability}
To figure out how interpretability contributes to the model prediction, we adopt the positive and negative perturbation tests presented in~\citep{chefer2021transformer-vit_lrp} to give both quantitative and qualitative evaluations on the interpretability.
For a fair comparison, the attention values to the CLS token $\phi^\text{CLS}$ are extracted as the pixel relevance for BoVW-Deep, BagNet, and SchemaNet-Init with the DeiT-Tiny backbone.
During the testing stage, we gradually drop the pixels in relevance descending/ascending order~(for positive and negative perturbation, respectively) and measure the top-1 accuracy of the models.
We plot the accuracy curves in~\Cref{fig:perturbation}, showing that: (1) in the positive perturbation test, our method behaves similarly to BoVW while outperforming BagNet; (2) in the negative perturbation test, our method achieves better performance by a large margin (the area-under-the-curve of ours is about 5.71\% higher than BagNet on CIFAR-10, and 11.23\% higher than BagNet on CIFAR-100).

\begin{figure}[!t]
\centering%
\subfigure[Accuracy curves on CIFAR-10.]{%
    \includegraphics[width=0.5\linewidth]{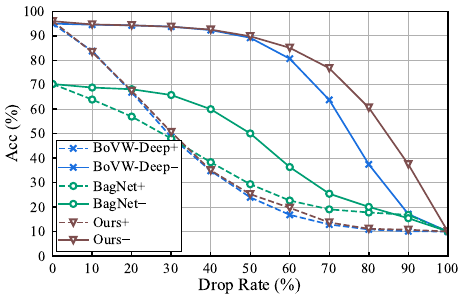}%
    \label{fig:perturbation_a}%
}%
\subfigure[Accuracy curves on CIFAR-100.]{%
    \includegraphics[width=0.5\linewidth]{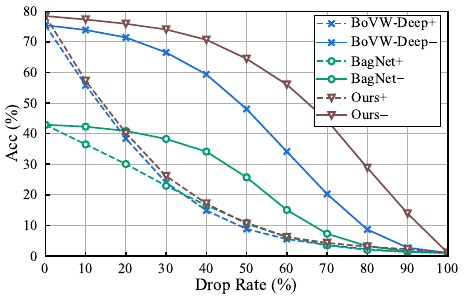}%
    \label{fig:perturbation_b}%
}%
\caption{%
Accuracy decay curves of perturbation tests. The curve name ending with ``$+$''/``$-$'' represents positive/negative perturbation, accordingly.
}%
\label{fig:perturbation}%
\end{figure}

\subsection{Visualization}
\begin{figure}[!t]
\centering%
\setlength{\tabcolsep}{1pt}%
\resizebox{\linewidth}{!}{
\begin{tabular}{c@{\hspace{2pt}}|@{\hspace{2pt}}ccccc}
    \toprule
    \textbf{IR-Atlas} & \multicolumn{5}{c}{\textbf{Instance IR-Graphs}} \\
    \midrule

    \includegraphics[width=30mm, height=50mm]{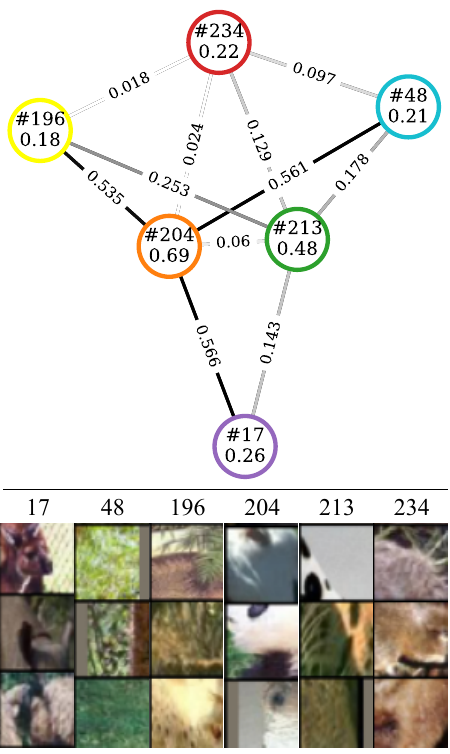}
    & \includegraphics[width=30mm]{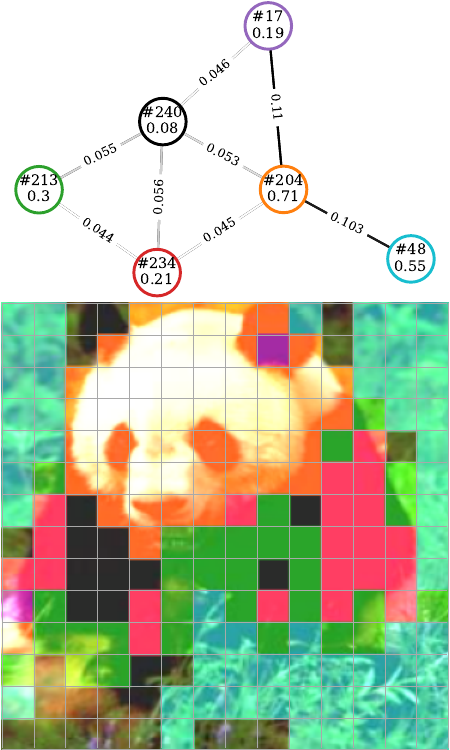}
    & \includegraphics[width=30mm]{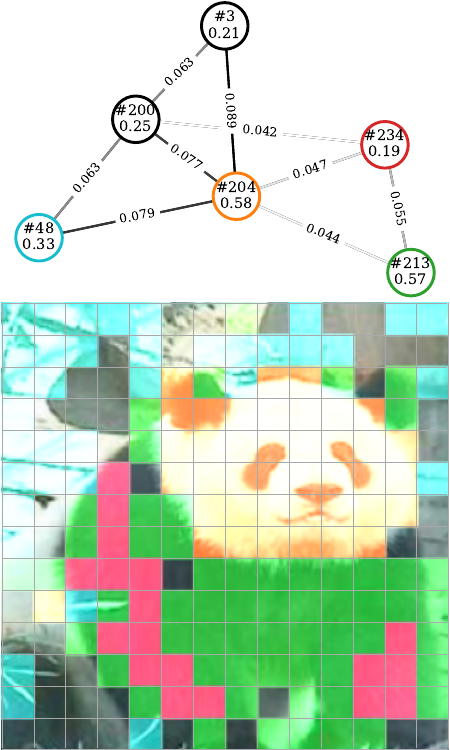}
    & \includegraphics[width=30mm]{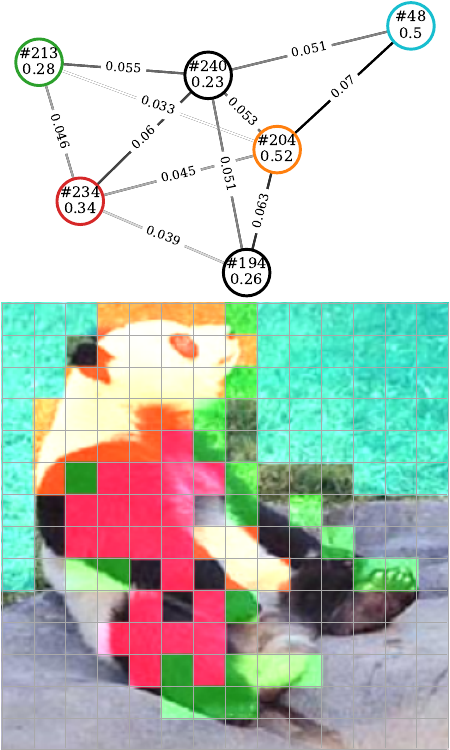}
    & \includegraphics[width=30mm]{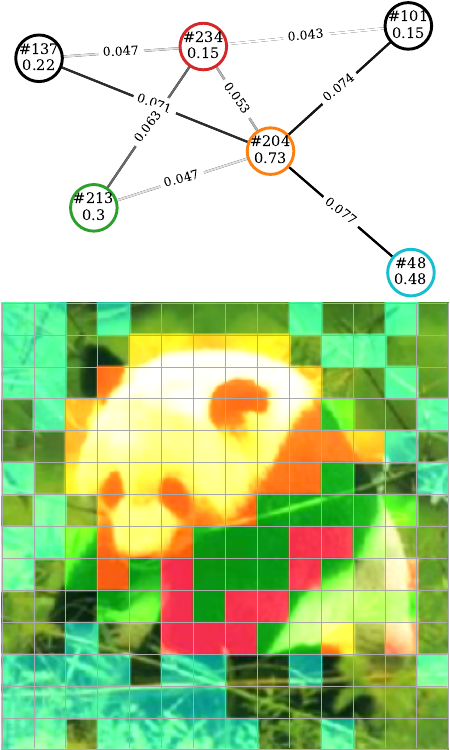}
    & \includegraphics[width=30mm]{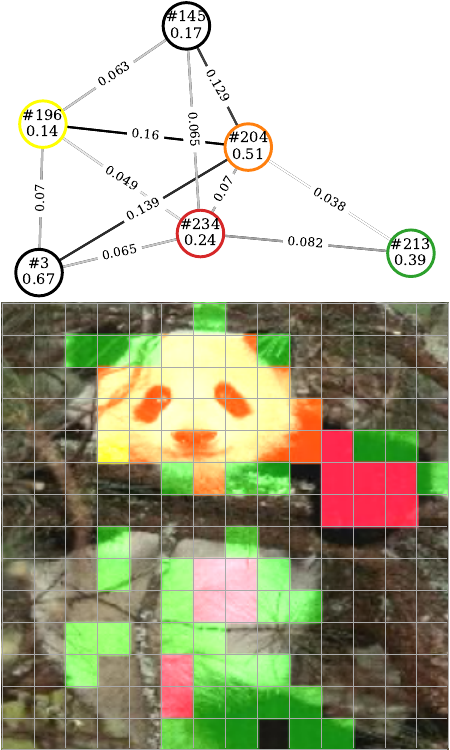} \\
    \midrule

    \includegraphics[width=30mm, height=50mm]{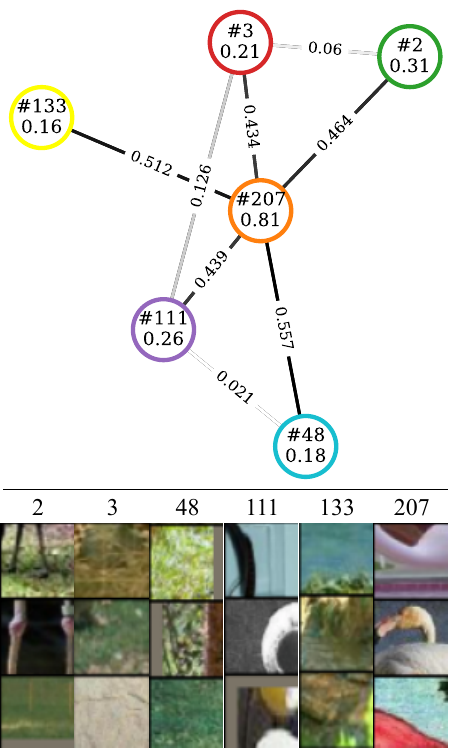}
    & \includegraphics[width=30mm]{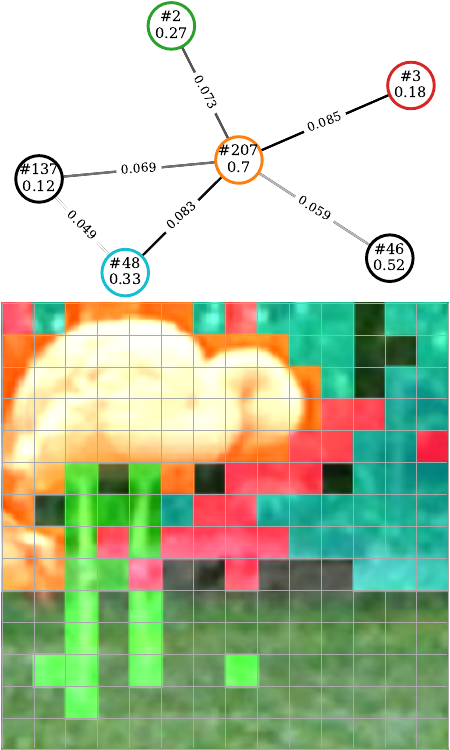}
    & \includegraphics[width=30mm]{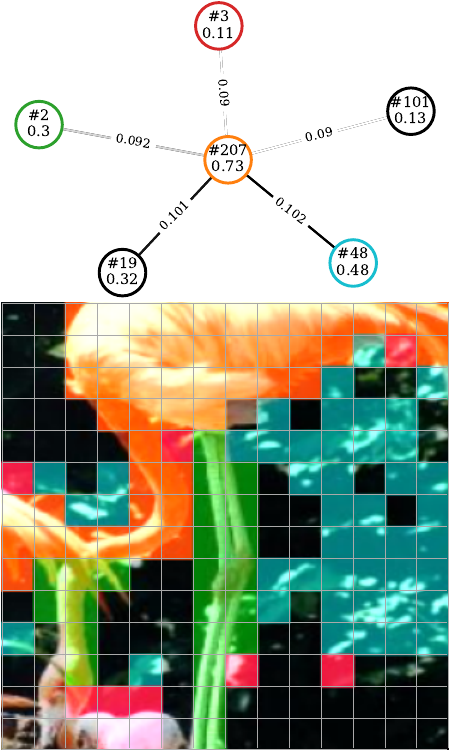}
    & \includegraphics[width=30mm]{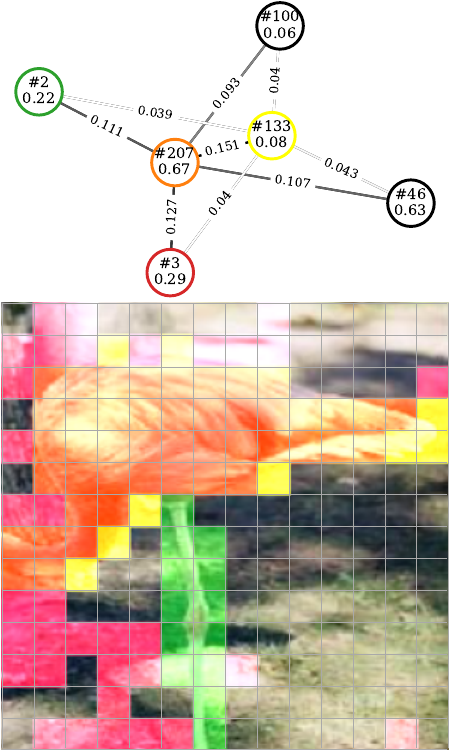}
    & \includegraphics[width=30mm]{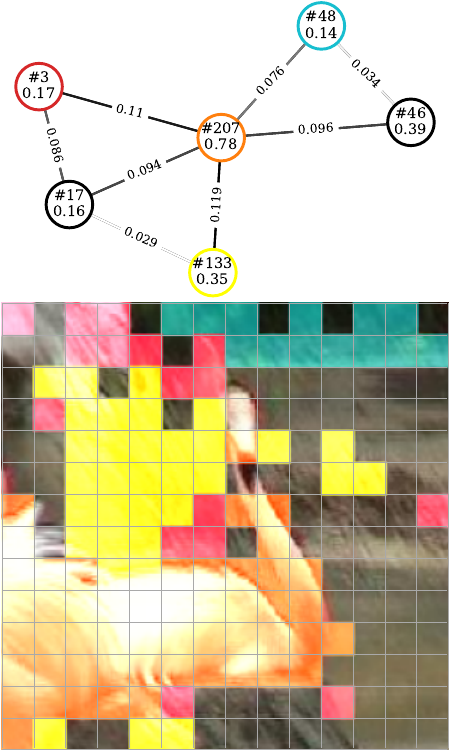}
    & \includegraphics[width=30mm]{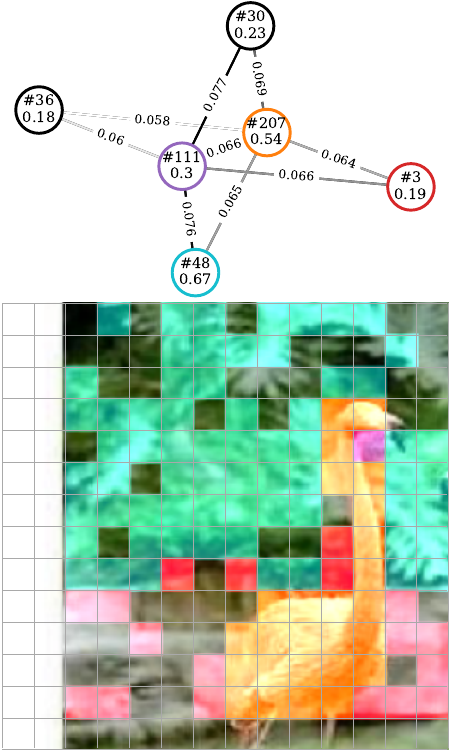} \\
    \midrule

    \includegraphics[width=30mm, height=50mm]{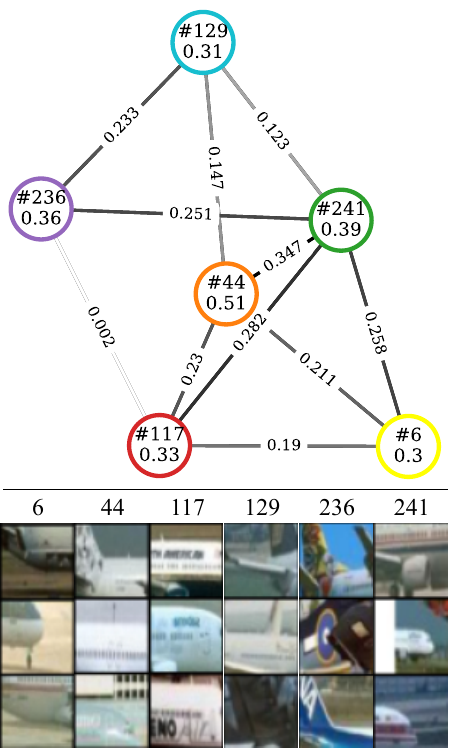}
    & \includegraphics[width=30mm]{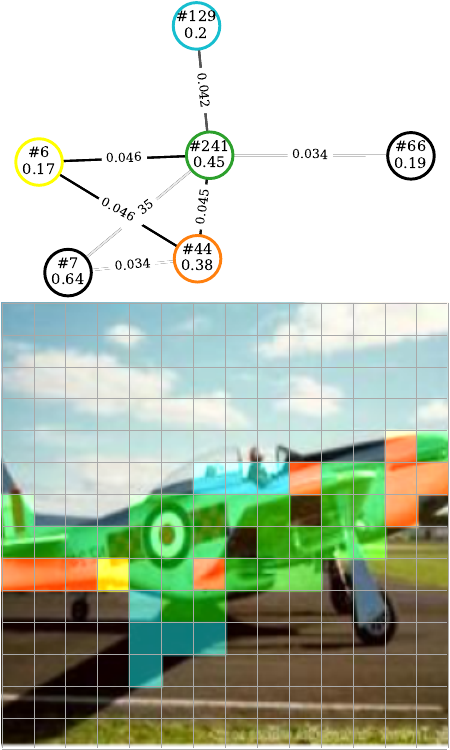}
    & \includegraphics[width=30mm]{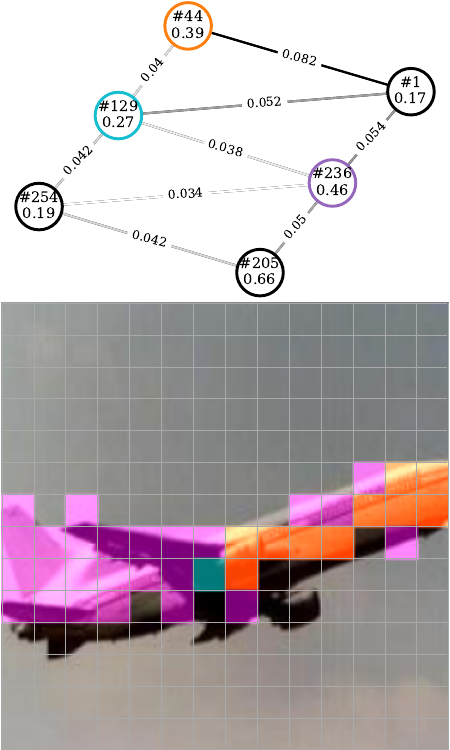}
    & \includegraphics[width=30mm]{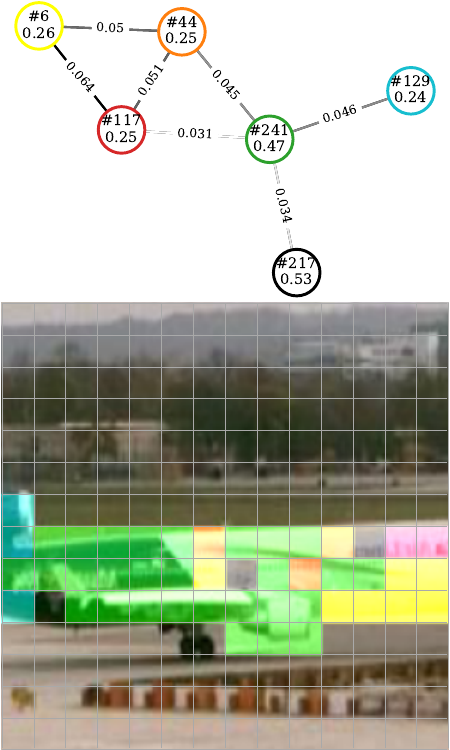}
    & \includegraphics[width=30mm]{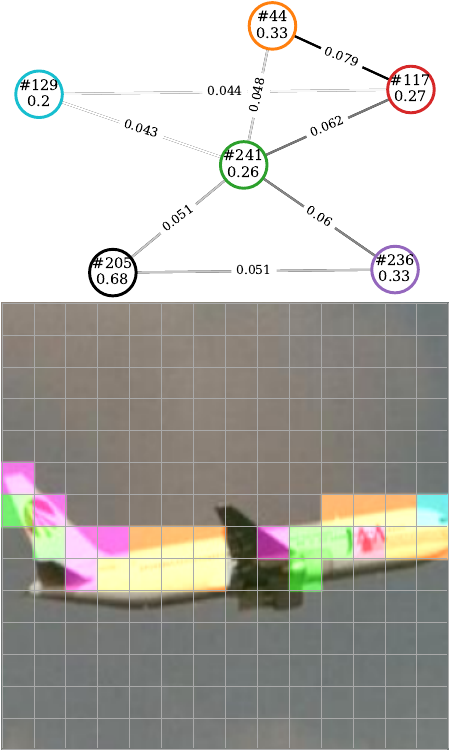}
    & \includegraphics[width=30mm]{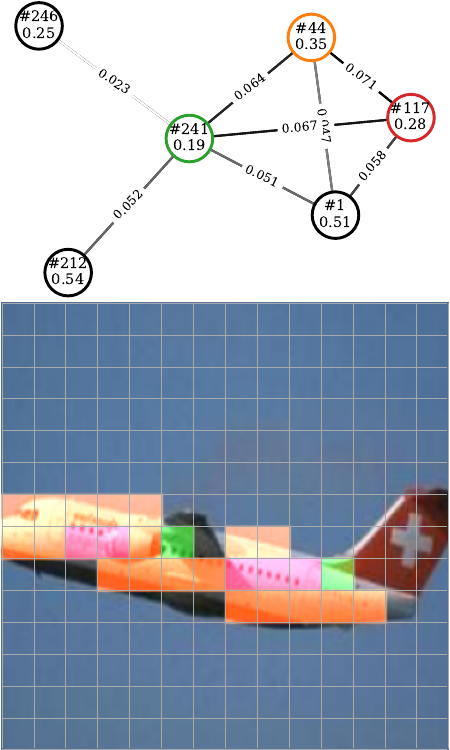} \\
    \bottomrule
    \end{tabular}%
}
\caption{%
Examples of the learned IR-Atlas and instance IR-Graphs randomly sampled from three categories (``panda'', ``flamingo'', and ``airplanes'') on Caltech-101. Please zoom in for a better view.
}%
\label{fig:vis}%
\end{figure}

In~\Cref{fig:vis}, we show the examples including instance IR-Graphs and learned IR-Atlas with DeiT-Tiny backbone trained on Caltech-101 dataset (for visualization purposes, the number of ingredients is set to 256, which are shown entirely in~\Cref{fig:app_vis_codes}).
Each row shows several instances of a specific class.
The first column includes the category graphs and excerpts of appeared ingredients in~\Cref{fig:app_vis_codes} for quicker reference.
We further render the vertices, \ie, ingredients, in the category graph with different colors, and the appeared ingredients in the instance images and graphs are colored uniformally corresponding to the category graph.

We interpret the visualization from three perspectives: the interpretability of the ingredients, the interpretability of the edges, and the consistency between instance and category graphs.
(1) Thanks to the powerful representation learning capability of DNNs, the extracted ingredients are able to represent explicit semantics~(such as ``fuselage'', ``bird legs'', etc.) with robustness.
Besides, the learned weight of each vertex is also consistent with human intuition as the object ingredients get higher weights than the background ones.
(2) The learned edge weights tend to connect object parts to the adjacent background and other object parts while ignoring the background connections, which helps distinguish fine-grain categories with the perception of the surroundings.
(3) As the category graphs are driven by the instances, they learn to capture and model the general character of instances and eventually form the abstract imaginations of all categories.
More examples are in~\Cref{app:vis}.

\section{Conclusion and Outlook}
In this paper, we propose a novel inference paradigm, named \emph{schema inference}, guided by Kant's philosophy towards resembling human deductive reasoning of associating the abstract concept image with the specific sense impression.
To this end, we reformulate the traditional DNN inference into a graph matching scheme by evaluating the similarity between instance-level IR-Graph and category-level imagination in a deductive manner.
Specifically, the graph vertices are visual semantics represented by common feature vectors from DNN's intermediate layer.
Besides, the edges indicate the vertex interactions characterized by semantic similarity and spatial adjacency, which facilitate capturing the compositional contributions to the predictions.
Theoretical analysis and experimental results on several benchmarks demonstrate the superiority and interpretability of schema inference.
In future work, we will implement schema inference to more complicated vision tasks, such as visual question answering, that enables linking the visual semantics to the phrases in human language, achieving a more powerful yet interpretable reasoning paradigm.

\clearpage

\section*{Acknowledgments}
This work is supported by National Natural Science Foundation of China (61976186, U20B2066, 62106220), Ningbo Natural Science Foundation (2021J189), the Starry Night Science Fund of Zhejiang University Shanghai Institute for Advanced Study (Grant No. SN-ZJU-SIAS-001), Open Research Projects of Zhejiang Lab (NO. 2019KD0AD01/018) and the Fundamental Research Funds for the Central Universities (2021FZZX001-23).

\bibliography{references/dnns.bib, references/xai.bib, references/others.bib, references/MYRE.bib}
\bibliographystyle{iclr2023_conference}

\clearpage
\appendix
\section{Additional Related Work}
\label{app:related_work}

In this section, we introduce literature that is related to our method.

\subsection{Visual Concept Analysis}
Visual concept analysis aims to interpret the behavior of pre-trained deep models by extracting and tracking intuitive visual concepts during the DNN's inference procedure.
Previous works use masks~\citep{zhang2020interpretable-analysis}, explanatory graphs~\citep{zhang2020extraction-analysis} and probabilistic models~\citep{konforti2020inference-analysis} to interpret the internal layers of convolutional neural networks~(CNNs)~\citep{liu2022dataset,yu2023dataset}.
Further, \citet{deng2022discovering} propose to capture and analyze the interaction of visual concepts contributing cooperatively to the prediction results for CNNs.
However, they use Shapley values to compute the interaction, which is computationally expensive.
Consequently, we adopt vision Transformers~(ViTs)~\citep{vit-dosovitskiy2021an,deit-touvron2021training} as the DNN backbone of our method instead of conventional CNNs thanks to the multi-head self-attention~(MHSA) mechanism~\citep{vaswani2017attention} that explicitly encodes the interactions of visual tokens in the similarity level.

More recently, VRX~\citep{ge2021peek-analysis} is proposed to use a graphical model to interpret the prediction of a pre-trained DNN, in which edges are constructed to represent the spatial relation of visual semantics.
They further utilize a GCN module to make a prediction with the input structural concept graphs by training to mimic DNN's prediction.
Nevertheless, their approach requires training a graph for every input sample, which is extremely expensive for applying to large-scale datasets.
Besides, VRX does not construct the class imagination that carries the category-specific knowledge, meaning that the GCN predictor must learn to memorize and distinguish all the class-specific graphs, which impairs the overall interpretability.
Our proposed schema inference, however, explicitly creates an interactive matching procedure between the instances and category imagination.
Moreover, our GNN model only captures the local structure for gathering class evidence, which is easier to accomplish.

\subsection{Visual Concept Learning}
Visual concept learning refers to a range of methods that mine semantic-rich visual words from a DNN, which are further utilized to make interpretable predictions towards a self-explanatory model in reality.
Concept bottleneck models~(CBMs)~\citep{koh2020concept-cbm,zarlenga2022concept-cem,deng2022discovering,wong2021explainable-cbm2} link the neurons with human-interpretable semantics explicitly, encouraging the trustworthiness of DNNs.
Human interventions of learned bottleneck layers can fix the misclassified concepts to improve the model performance.
Another line of visual concept learning, part-prototype-based methods, collectively makes predictions on target tasks with DNN and semantic-rich prototypes, which can be divided into two schools according to their prototypes.
(1) Bag-of-Visual-Word~(BoVW) approaches~\citep{brendel2018approximating,gidaris2020learning,PPR:PPR507364} obtain prototypes with semantics through passively clustering the hand-crafted features~\citep{bovw-yang2007evaluating} or deep features into a set of discrete embeddings~(visual words) that relate to specific visual semantics.
The prediction procedure of these approaches is analog to the BoW model in natural language processing~(NLP), in that an interpretable representation is constructed based on the statistics of the occurrence of the visual words, and then fed to an interpretable classifier, such as the linear classifier or decision trees.
(2) In contrast, part-prototypical networks~(ProtoPNets)~\citep{chen2019looks} and the following works~\citep{Nauta_2021_CVPR,xue2022protopformer,zhang2022protgnn,peters2022extending-proto-generative,rymarczyk2021protomil} jointly train DNN~(as the feature extractor) and parameterized prototypes. Then decisions are made based on the linear combination of similarity scores between prototypes and feature vectors at all the spatial locations.
Despite the relatively high performance, as discussed in~\citep{brendel2018approximating,DBLP:journals/corr/abs-2105-02968}, the similarity of the learned prototypes and deep features in the embedding space may be significantly different in the input space.
As such, we only compare those approaches in which prototypes are generated based on clustering for the same level of interpretability.
To the best of our knowledge, none of the existing works have explored a deductive inference paradigm based on the interaction of visual semantics.

\subsection{Scene Graph Generation}
A scene graph can be generated from an input image to excavate the collection of objects, attributes, and relationships in a whole scene.
Typical graph-based representations and learning algorithms~\citep{herzig2018mapping-scene,chang2021comprehensive-scene,shit2022relationformer-scene,zhong2021learning-scene,zareian2020learning-scene} adopt graph neural networks~(GNNs)~\citep{kipf2016semi-gnn,jing2021amalgamating,yang2020distillating,jing2021meta,yang2020factorizable,jing2022learning,yang2021spagan} to model the relationships between objects in a scene.
Similar to scene graphs, our proposed SchemaNet also utilizes GNNs to represent the relationships between ingredients related to local semantics.
However, the nodes in GNNs of SchemaNet represent more fine-grained representations, \ie, local semantics of objects, compared to those relating to the whole objects in scene graphs.
Moreover, in SchemaNet, the trained GNN is responsible for estimating the similarity between an instance IR-Graph and category IR-Graphs and making classification based on the similarity scores.

\subsection{Feature Attribution for Explainable AI}
In the ``Feat2Vertex'' module, we extract the attention to the CLS token as one important component of the ingredient importance.
Currently, plenty of approaches have been proposed for indicating local relevance~(forming saliency maps) to the DNN's prediction, termed feature attribution methods.
Most existing works can be roughly divided into three classes: perturbation-based, gradient-based, and decomposition-based approaches.
Perturbation-based approaches~\citep{strumbelj2010efficient-shapley,zeiler2014visualizing,pmlr-v97-ancona19a,zintgraf2017visualizing} compute attribution by evaluating output differences via removing or altering input features.
Gradient-based approaches~\citep{Simonyan14a,shrikumar2017learning,sundararajan2017axiomatic-ig,selvaraju2017grad,feng2022model} compute gradients \wrt~the input feature through backpropagation.
Decomposition-based approaches~\citep{bach2015pixel-lrp,MONTAVON2017211,chefer2021transformer-vit_lrp} propagate the final prediction to the input following the Deep Taylor Decomposition~\citep{MONTAVON2017211}.
Besides, CAM~\citep{zhou2016learning-cam} and ABN~\citep{fukui2019attention} provide interpretable predictions with a learnable attribution module.
As most previous methods focus on CNNs, \citet{chefer2021transformer-vit_lrp} propose ViT-LRP tailored for vision Transformers.
However, most of the methods mentioned above are designed to generate a saliency map for a particular class, making them inefficient in our implementation due to the $\mathcal{O}(C)$ computational complexity or dependency on gradients.

\section{Proof of Theorem 1}
\label{app:proof_theorem1}
We first restate the theorem:
\begin{theorem*}[\Cref{theorem:1} restated]
    For a shallow GCN module, \Cref{eq:expand_similarity} can be approximated by
    \begin{equation}
        s = \sum_{\phi\in\Phi} \hat{\lambda}_\phi \lambda_\phi \hat{f}_\phi^{(L_G)} {f_\phi^{(L_G)}}^\top \text{.}
    \end{equation}
    Particularly, if $L_G=0$ and $\alpha_1=0$, our method is equivalent to BoVW with a linear classifier.
\end{theorem*}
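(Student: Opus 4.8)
The plan is to show that the double sum in \Cref{eq:expand_similarity} over all pairs $(u,v)\in\hat{V}\times V$ concentrates on its diagonal, i.e.\ on pairs where $u$ and $v$ are the same ingredient, which forces $u=v=\phi\in\Phi=V\cap\hat{V}$. Two facts drive the collapse: distinct ingredients receive independent embeddings $f^{(0)}$ sampled from $\mathcal{N}(\bm{0},I_{d_G})$, so they are nearly orthogonal when $d_G$ is large; and for a shallow GCN each output embedding $f_v^{(L_G)}$ is dominated by the self-contribution carried through the identity part of $(I_{d_G}+E)$ in \Cref{eq:graph_conv}, so distinct ingredients remain nearly orthogonal after the convolutions while the same ingredient keeps a large self-inner-product.

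First I would unroll the recursion in \Cref{eq:graph_conv}. Setting aside $\sigma$ and $\mathrm{Norm}$ for the leading-order count, one layer acts as $F^{(l)}\mapsto(I_{d_G}+E)F^{(l)}W$, so after $L_G$ layers the embedding of a vertex $v$ is a weighted combination of the seed embeddings $f_w^{(0)}$ over vertices $w$ within graph distance $L_G$ of $v$; the self-term $f_v^{(0)}$ enters with coefficient one from the identity, and every neighbor term carries a coefficient assembled from edge weights $e_{\cdot,\cdot}$. For shallow $L_G$ these neighbor corrections are lower order than the retained self-term.

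Next I would exploit near-orthogonality of the seeds: since the $f_m^{(0)}$ are independent isotropic Gaussians, $\mathbb{E}\bigl[f_u^{(0)}{f_v^{(0)}}^\top\bigr]=0$ for $u\neq v$ and $\mathbb{E}\bigl[\|f_v^{(0)}\|^2\bigr]=d_G$, with concentration yielding an off-diagonal-to-diagonal ratio of order $1/\sqrt{d_G}$. Substituting the unrolled embeddings into \Cref{eq:expand_similarity}, every cross term with $u\neq v$ is built from inner products of embeddings seeded by distinct ingredients and is therefore negligible, whereas a diagonal term $u=v=\phi$ keeps the large self-self product $\hat{f}_\phi^{(L_G)}{f_\phi^{(L_G)}}^\top$. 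Collecting the survivors leaves precisely the sum over $\phi\in\Phi$ in \Cref{eq:simplified_similarity}.

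Finally I would specialize to $L_G=0$ and $\alpha_1=0$. With $L_G=0$ there is no convolution, so $f_\phi^{(0)}$ is used directly and, being identical for the same ingredient across graphs, gives $\hat{f}_\phi^{(0)}{f_\phi^{(0)}}^\top=\|f_\phi^{(0)}\|^2$; with $\alpha_1=0$, \Cref{eq:feat2vertex} reduces the instance weight to $\lambda_\phi=\alpha_2|\varXi(\phi|\tilde{X})|$, the occurrence count of the ingredient. Hence $s=\sum_{\phi\in\Phi}\hat{\lambda}_\phi\,\alpha_2|\varXi(\phi|\tilde{X})|\,\|f_\phi^{(0)}\|^2$ is exactly a bag-of-visual-words histogram scored by a linear classifier with per-word weights $\hat{\lambda}_\phi\|f_\phi^{(0)}\|^2$, giving the claimed BoVW equivalence. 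The hard part will be making the shallow-GCN approximation rigorous for $L_G>0$: the $\sigma$ and $\mathrm{Norm}$ steps break the clean linear unrolling, so one must argue that a few layers cannot inject enough neighbor mass to overturn the diagonal dominance, and that the trained embeddings remain close enough to their near-orthogonal initialization for the off-diagonal terms to stay controlled.
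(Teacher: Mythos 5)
Your proposal follows essentially the same route as the paper's proof: both rest on the near-orthogonality of independently sampled Gaussian seed embeddings (the paper's Lemma~1 gives $\mathbb{E}[fW^\top W g^\top]=0$ for $u\neq v$ versus $\|W\|_F^2$ on the diagonal), both argue diagonal dominance of the double sum after expressing $f_v^{(L_G)}$ as an aggregation over neighbors with the identity self-loop carrying the dominant term, and both specialize to $L_G=0$, $\alpha_1=0$ in the same way to recover BoVW with a linear classifier. The only cosmetic difference is that you unroll the recursion all the way to the seeds while the paper steps back a single layer under a whitening assumption on the intermediate features; the rigor gap you flag (nonlinearity and normalization breaking the linear unrolling) is present in the paper's argument as well, which it addresses only empirically via the vertex-similarity visualizations.
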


For further discussion, we first prove the following lemma:
\begin{lemma}
\label{lemma:1}
For random vectors $f$, $g\in\mathbb{R}^{d_G}$ drawn \textbf{independently} from multivariate Gaussian distribution $\mathcal{N}(\bm{0}, I)$, we have
\begin{equation}
    \begin{aligned}
        \mathbb{E}_{f, g}\left[ f W^\top W g^\top \right] & = 0 \\
        \mathbb{E}_{f}\left[ f W^\top W f^\top \right] & = \|W\|_F^2 \text{,}
    \end{aligned}
\end{equation}
where $\|\cdot\|_F$ is the matrix Frobenius norm, and $W\in\mathbb{R}^{d_G\times d_G}$ is a projection matrix.
\end{lemma}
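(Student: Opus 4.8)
The plan is to reduce both identities to elementary moment properties of the standard Gaussian by introducing the matrix $A := W^\top W$ and then expanding each quadratic form coordinate-wise. First I would write, by linearity of expectation over a finite sum, $\mathbb{E}[f A g^\top] = \sum_{i,j} A_{ij}\, \mathbb{E}[f_i g_j]$ and $\mathbb{E}[f A f^\top] = \sum_{i,j} A_{ij}\, \mathbb{E}[f_i f_j]$. This moves the entire problem onto computing the scalar second moments $\mathbb{E}[f_i g_j]$ and $\mathbb{E}[f_i f_j]$, which are fixed by the distributional assumptions.

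For the cross term the decisive ingredient is that $f$ and $g$ are drawn \emph{independently}, so $\mathbb{E}[f_i g_j] = \mathbb{E}[f_i]\,\mathbb{E}[g_j]$, and since each coordinate of a centered Gaussian has mean zero, every summand vanishes; hence $\mathbb{E}_{f,g}[f W^\top W g^\top] = \sum_{i,j} A_{ij}\,\mathbb{E}[f_i]\,\mathbb{E}[g_j] = 0$. Note that this step uses only $\mathbb{E}[f]=\bm{0}$ together with independence, not the full covariance structure.

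For the diagonal term I would invoke the covariance of $\mathcal{N}(\bm{0}, I)$, namely $\mathbb{E}[f_i f_j] = \delta_{ij}$ (unit variance, zero off-diagonal covariance). Substituting collapses the double sum to the diagonal, $\mathbb{E}_f[f A f^\top] = \sum_{i} A_{ii} = \operatorname{tr}(A)$, and I would finish by identifying $\operatorname{tr}(W^\top W) = \sum_{i,j} W_{ij}^2 = \|W\|_F^2$ from the definition of the Frobenius norm.

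The argument is computational and I do not anticipate a genuine obstacle; the only point requiring care is the diagonal quadratic form, where one must retain the contributions $\mathbb{E}[f_i^2] = 1$ rather than treating $f A f^\top$ as though it factored like the independent cross term. Making this rigorous in one line is cleanest by stating the general identity $\mathbb{E}[f A f^\top] = \operatorname{tr}\!\left(A\,\mathrm{Cov}(f)\right) + \mathbb{E}[f]\,A\,\mathbb{E}[f]^\top$ and specializing to $\mathrm{Cov}(f) = I$ and $\mathbb{E}[f] = \bm{0}$, which recovers $\operatorname{tr}(A)$ directly.
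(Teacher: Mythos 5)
Your proposal is correct and follows essentially the same route as the paper: expand the quadratic form coordinate-wise, kill the cross term using independence and zero mean, and reduce the diagonal term to $\operatorname{tr}(W^\top W)=\|W\|_F^2$ via $\mathbb{E}[f_i f_j]=\delta_{ij}$. The only cosmetic difference is that you package the computation through $A=W^\top W$ and the trace identity, whereas the paper expands directly into the entries $w_{r,s}$; the substance is identical.
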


\begin{proof}
To begin with, we expand term $f W^\top W g^\top$ as
\begin{equation}
    f W^\top W g^\top = \sum_{r=1}^{d_G} \sum_{s=1}^{d_G} \sum_{t=1}^{d_G} f_s g_t w_{r,s} w_{r,t} \text{.}
\end{equation}
Therefore, the expectation
\begin{equation}
    \mathbb{E}_{f, g}\left[f W^\top W g^\top \right]
    = \sum_{r=1}^{d_G} \sum_{s=1}^{d_G} \sum_{t=1}^{d_G} w_{r,s} w_{r,t}
    \mathbb{E}_{f_s}\left[ f_s \right] \mathbb{E}_{g_t}\left[ g_t \right] = 0 \text{,}
\end{equation}
while
\begin{equation}
    \begin{aligned}
    \mathbb{E}_{f}\left[f W^\top W f^\top \right] & = 
    \sum_{r=1}^{d_G} \mathbb{E}_{f}\left[ \sum_{s=1}^{d_G} \sum_{t=1}^{d_G} w_{r,s} w_{r,t} f_s f_t\right] \\
    & = \sum_{r=1}^{d_G} \mathbb{E}_{f}\left[ \sum_{s=1}^{d_G} w_{r,s}^2 f_s^2\right] \\
    & = \sum_{r=1}^{d_G} \sum_{s=1}^{d_G} w_{r,s}^2 = \|W\|_F^2 \text{.}
    \end{aligned}
\end{equation}

\end{proof}
Particularly, if $W$ is an identity matrix
\begin{equation}
    \mathbb{E}_{f}\left[f W^\top W f^\top \right] = d_G \text{.}
    \label{eq:app-exp-f-identity}
\end{equation}
Further, if the components of matrix $W$ are drawn i.i.d. from Gaussian distribution $\mathcal{N}(0, 1)$ and are independent with $f$,
\begin{equation}
    \mathbb{E}_{f, W}\left[f W^\top W f^\top \right] = d_G^2 \text{.}
    \label{eq:app-exp-f-gaussian}
\end{equation}

Now we proof~\Cref{theorem:1}.
\begin{proof}
We start with a simple case that the depth of GCN module is zero.

As $G$ and $\hat{G}$ share the same original vertex embeddings, we have
\begin{equation}
    s = \sum_{(u, v)\in \hat{V}\times V} \hat{\lambda_u} \lambda_v \hat{f}_u^{(0)} {f_v^{(0)}}^\top =
    \sum_{(u, v)\in \hat{V}\times V} \hat{\lambda_u} \lambda_v f_u^{(0)} {f_v^{(0)}}^\top \text{.}
\end{equation}

\begin{figure}[!t]
\centering%
\subfigure[Raw vertex embeddings.]{%
    \includegraphics[width=0.5\linewidth]{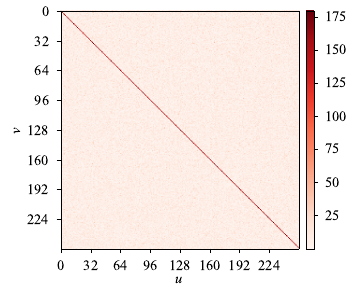}%
    \label{fig:app_sim_a}%
}\hfill%
\subfigure[Last layer embeddings.]{%
    \includegraphics[width=0.5\linewidth]{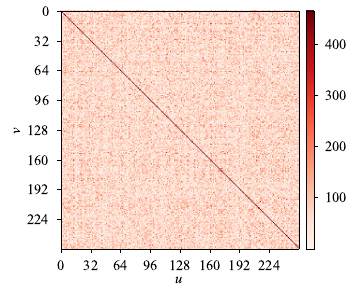}%
    \label{fig:app_sim_b}%
}%
\caption{%
Vertex embedding similarities between a vertex $v$ in the instance IR-Graph and a vertex $u$ in the category IR-Atlas, which are extracted from the original vertex embeddings and output from the last GraphConv layer.
We show the average similarity over 1000 samples on the Caltech-101 dataset with a visual vocabulary size of 256.
}%
\label{fig:app_sim}%
\end{figure}

According to~\Cref{lemma:1}, the expectation of the same vertex term $\mathbb{E}\left[ ff^\top\right] = d_G$ is more significant than different vertices terms $\hat{\lambda_u} \lambda_v f_u^{(0)} {f_v^{(0)}}^\top$ for $u\neq v$ when $G$ and $\hat{G}$ are constrained with sparsity.
In~\Cref{fig:app_sim_a}, we visualize the vertex similarities $f_u^{(0)} {f_v^{(0)}}^\top$, showing that the similarities between different vertices are significantly lower than that of the same vertex.

Consequently, \Cref{eq:expand_similarity} in the case of $L_G=0$ can be simplified as
\begin{equation}
    s = \sum_{\phi \in \Phi} \hat{\lambda}_{\phi} \lambda_{\phi} \|f_{\phi}^{(0)}\|_2^2 \text{,}
\end{equation}
where $\Phi = V \cap \hat{V}$ is the set of shared vertices of $G$ and $\hat{G}$.

With an extra assumption that $\alpha_1=0$, $\lambda_{\phi}$ therefore represents the count of visual word $\phi$.
Combing the learnable term $\hat{\lambda}_{\phi} \|f_{\phi}^{(0)}\|_2^2$ as $w_\phi$, the prediction of the matcher is
\begin{equation}
    y = \varLambda W_\text{FC} \text{,}
\end{equation}
where $W_\text{FC}\in\mathbb{R}^{|V|\times C}$ is the learnable projection matrix of a linear classifier.

Now, we prove the whole theorem.
For different vertices $u\in \hat{V}$ and $v\in V$, the summation term $s_{u,v} = \hat{\lambda}_u \lambda_v \hat{f}_u^{(L_G)} {f_v^{(L_G)}}^\top$ can be rewritten as the aggregation form with feature output from the $(L_G - 1)$-th layer:
\begin{equation}
    s_{u,v} = \hat{\lambda}_u \lambda_v \sum_{(i, j)\in \dot{\mathcal{N}}_{\hat{G}}(u) \times \dot{\mathcal{N}}_{G}(v)}
    \hat{e}_{u, i} e_{v, j} \hat{f}_i^{(L_G - 1)} W^{(L_G)} {W^{(L_G)}}^\top {f_j^{(L_G - 1)}}^\top \text{,}
\end{equation}
where $\dot{\mathcal{N}}_{\hat{G}}(u)$ represent the neighbors of vertex $u$ in Graph $\hat{G}$ and with $u$ itself, and $e_{u, u} = 1$ as the self loop.

With the assumption that $\hat{f}_i^{(L_G - 1)}$ and $f_j^{(L_G - 1)}$ are whitened random vectors~(we say a random vector $x\in\mathbb{R}^d$ is whitened if all the components are zero mean and independent from each other, which can be achieved by normalization), the summation term $\hat{f}_i^{(L_G - 1)} W^{(L_G)} {W^{(L_G)}}^\top {f_j^{(L_G - 1)}}^\top$ with vertex $i=j$ will be significantly larger than those with different vertices.
We further illustrate the vertex similarities output from the final layer in~\Cref{fig:app_sim_b}, showing that $\hat{f}_u^{(L_G)} {f_v^{(L_G)}}^\top$ is conspicuous for vertices $u=v$.
\begin{corollary}
    Consequently, $s_{u,v}$ is significant if $\dot{\mathcal{N}}_{\hat{G}}(u)$ and $\dot{\mathcal{N}}_{G}(v)$ have shared vertices, relative large edge weight product $\hat{e}_{u, i}e_{v, j}$, and vertex weight product $\hat{\lambda}_u \lambda_v$, which means $u$ and $v$ have similar \textbf{local structure}, particularly in the case that $u$ and $v$ are the same vertex.
    \label{cor:1}
\end{corollary}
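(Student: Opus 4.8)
The plan is to obtain \Cref{cor:1} by reading off which terms of the neighbour-pair expansion of $s_{u,v}$ survive once \Cref{lemma:1} is applied one layer down. I would take as the starting point the aggregation form derived just above the corollary,
\begin{equation*}
s_{u,v} = \hat{\lambda}_u \lambda_v \sum_{(i,j)\in \dot{\mathcal{N}}_{\hat{G}}(u)\times\dot{\mathcal{N}}_{G}(v)} \hat{e}_{u,i}\, e_{v,j}\, \hat{f}_i^{(L_G-1)} W^{(L_G)} {W^{(L_G)}}^\top {f_j^{(L_G-1)}}^\top \text{,}
\end{equation*}
and view it as a sum indexed by pairs of neighbours. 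The entire content of the corollary is that this sum is dominated by its \emph{diagonal} sub-collection $i=j$, i.e.\ by the shared neighbours $\phi\in\dot{\mathcal{N}}_{\hat{G}}(u)\cap\dot{\mathcal{N}}_{G}(v)$.

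Next I would invoke \Cref{lemma:1} under the stated whitening assumption on the layer-$(L_G-1)$ embeddings. For distinct neighbours $i\neq j$ the vectors $\hat{f}_i^{(L_G-1)}$ and $f_j^{(L_G-1)}$ descend from independent random seeds, so the quadratic form $\hat{f}_i^{(L_G-1)} W^{(L_G)} {W^{(L_G)}}^\top {f_j^{(L_G-1)}}^\top$ has expectation $0$; for $i=j=\phi$ both factors descend from the \emph{same} seed embedding of ingredient $\phi$, so by the second identity of \Cref{lemma:1} the form stays large (exactly $\|W^{(L_G)}\|_F^2$ in the degenerate layer-$0$ case and close to it for a shallow stack). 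Substituting these two regimes collapses the double sum, up to a negligible off-diagonal remainder, to
\begin{equation*}
s_{u,v} \approx \hat{\lambda}_u \lambda_v \sum_{\phi\in\dot{\mathcal{N}}_{\hat{G}}(u)\cap\dot{\mathcal{N}}_{G}(v)} \hat{e}_{u,\phi}\, e_{v,\phi}\, \hat{f}_\phi^{(L_G-1)} W^{(L_G)} {W^{(L_G)}}^\top {f_\phi^{(L_G-1)}}^\top \text{.}
\end{equation*}
Reading off the three factors in this expression — the vertex-weight product $\hat{\lambda}_u\lambda_v$, the edge-weight products $\hat{e}_{u,\phi}e_{v,\phi}$, and the size of the neighbourhood overlap $\dot{\mathcal{N}}_{\hat{G}}(u)\cap\dot{\mathcal{N}}_{G}(v)$ — yields precisely the claim of \Cref{cor:1} that $s_{u,v}$ is large exactly when $u$ and $v$ share local structure. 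The case $u=v$ then follows because the self-loops ($e_{u,u}=1$) place $u$ in both neighbourhoods, so the overlap is never empty and typically large, recovering the shared-vertex dominance that underlies \Cref{theorem:1}.

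The hard part will be making ``significant'' quantitative, since \Cref{lemma:1} only controls first moments. To upgrade the vanishing \emph{expectation} of the off-diagonal terms into a genuinely negligible \emph{contribution}, I would bound their variance and argue, using the sparsity enforced on the IR-Atlas (which keeps each neighbourhood small) together with the whitening of the intermediate features, that the aggregate off-diagonal fluctuation is dominated with high probability by the diagonal mass. The most delicate assumption is the whitening itself: it must be preserved across the nonlinearity $\sigma$ and the normalization in \Cref{eq:graph_conv}, and it is only for a shallow stack that I would expect it to hold approximately, propagated layer by layer. Accordingly I would present \Cref{cor:1} as a controlled approximation rather than an exact identity, and support the same-vertex dominance empirically, as in \Cref{fig:app_sim_b}.
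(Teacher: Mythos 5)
Your proposal follows essentially the same route as the paper: it starts from the same neighbour-pair aggregation form of $s_{u,v}$, applies \Cref{lemma:1} under the whitening assumption to argue that the diagonal ($i=j$) terms dominate the off-diagonal ones, and reads off the dependence on the neighbourhood overlap, edge-weight products, and vertex-weight product, with the same empirical support from \Cref{fig:app_sim_b}. Your added remarks on variance control and on whitening surviving the nonlinearity and normalization are more careful than the paper's treatment but do not change the argument.
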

In conclusion,~\Cref{eq:expand_similarity} can be simplified as the summation of the joint vertices $\Phi=\hat{V} \cap V$ of the instance and category graph $G$ and $\hat{G}$.
\end{proof}

\section{Training Algorithm}
\label{app:training_algorithm}

The training algorithm of our proposed SchemaNet is shown in~\Cref{alg:1} with initializing IR-Atlas and sparsification.
\begin{algorithm}[ht]
\caption{SchemaNet optimizer with initialization and sparsification.}
\label{alg:1}
\begin{algorithmic}[1]

\Require $\mathcal{D} = \{(x_i, y_i)\}_{i=1}^D$: the training dataset with $D$ samples;
$\hat{\mathcal{G}}=\{\hat{G}_c\}_{c=1}^C$: initial IR-Atlas;
$\mathrm{Backbone}(\cdot)$: the ViT backbone;
$\mathrm{Matcher}(\cdot, \cdot)$: the graph matcher;
$\Omega$: the visual vocabulary;
$\Theta$: the set of all trainable parameters;
$\delta_t$: the sparsification threshold.

\Procedure{Initialization}{$\mathcal{D}$, $\Omega$}
    \State Sample a subset $\tilde{\mathcal{D}} \subset \mathcal{D}$ as the probe dataset.
    \State $S_1, \ldots, S_C \gets \varnothing$, $\hat{\mathcal{G}}\gets \varnothing$
    \For{$(x, y)\in\tilde{\mathcal{D}}$}
        \State $G\gets$ \Call{Feat2Graph}{$\mathrm{Backbone}(x)$, $\Omega$}
        \State $S_{y} \gets S_{y} \cup {G}$
    \EndFor
    \For{$c=1, \ldots, C$}
        \State $\hat{\mathcal{G}} \gets$ \Call{Average}{$S_c$}  \Comment compute average graph for each category
    \EndFor
    \State \Return $\hat{\mathcal{G}}$
\EndProcedure

\Procedure{Training}{$\hat{\mathcal{G}}$, $\mathcal{D}$, $\Omega$}
    \For{$(x, y)\in\mathcal{D}$}
        \For{$\hat{e}_{i, j}\in \hat{E}$} \Comment Removing redundant edges
            \If{$\hat{\lambda}_i < \delta_t$ or $\hat{\lambda}_j < \delta_t$}
                \State $\hat{e}_{i, j} \gets$ NIL
            \EndIf
        \EndFor
        \State $G\gets$ \Call{Feat2Graph}{$\mathrm{Backbone}(x)$, $\Omega$}
        \State $\hat{y} \gets \mathrm{Matcher}(G, \hat{\mathcal{G}})$
        \State Compute the final loss and gradient $\nabla_\Theta$ \wrt~parameters $\Theta$.
        \State Update parameters $\Theta$ with AdamW optimizer.
    \EndFor
\EndProcedure

\State $\hat{\mathcal{G}}\gets$ \Call{Initialization}{$\mathcal{D}$, $\Omega$}
\State \Call{Training}{$\hat{\mathcal{G}}$, $\mathcal{D}$, $\Omega$}

\end{algorithmic}
\end{algorithm}

\section{Effective Receptive Field of ViTs}
\label{app:erf}
\begin{figure}[!t]
\centering
\subfigure[DeiT-Tiny]{%
    \includegraphics[width=0.33\linewidth]{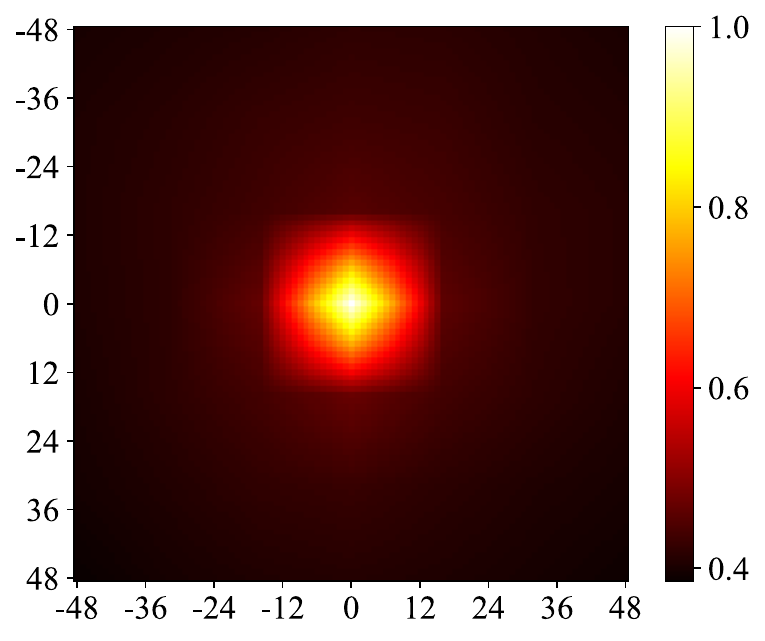}%
}%
\subfigure[DeiT-Small]{%
    \includegraphics[width=0.33\linewidth]{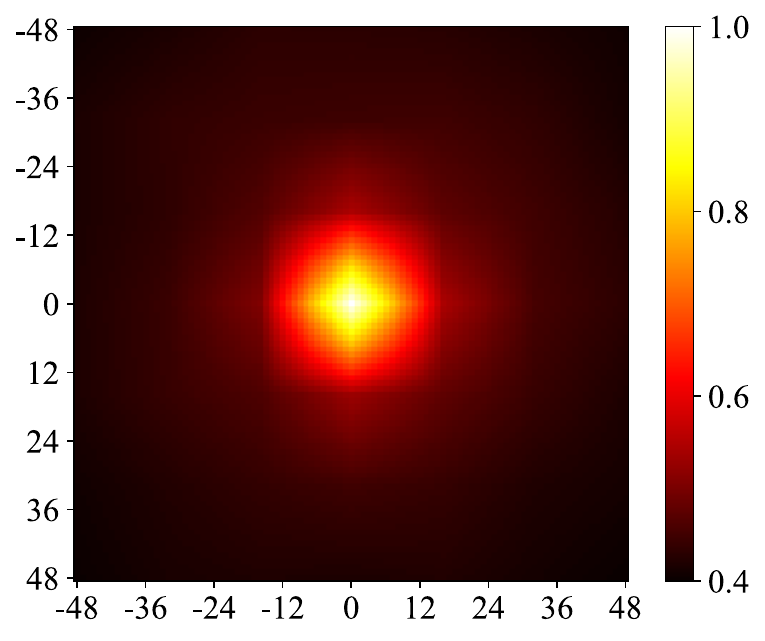}%
}%
\subfigure[{DeiT-Base}]{%
    \includegraphics[width=0.33\linewidth]{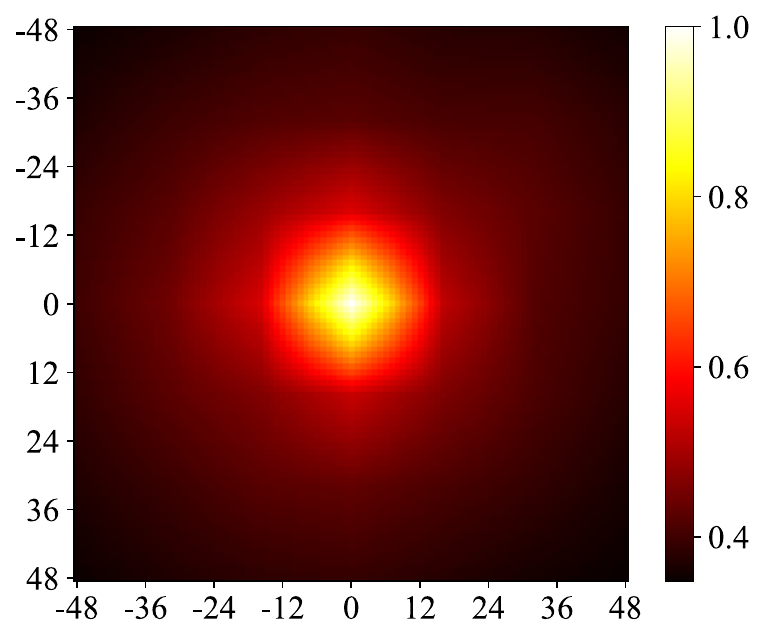}%
}%
\centering
\caption{The heat maps of the receptive field of an anchor token over its neighborhood region. Averaged visualization results of $64$ random images of three ViTs (DeiT-Ti, DeiT-S, and DeiT-B) from Caltech-101 dataset are demonstrated. The full image size is $224 \times 224$, and we crop a small region with a size of $96 \times 96$ with the anchor token as the center for better visualization.}
\label{fig:receptive_heatmap}
\end{figure}

In this section, we discuss the effective receptive field~(ERF) of ViTs, which is crucial as the visual token in the intermediate layers of ViTs may relate to other positions due to MHSA, affecting the interpretability of the ingredients~(visual word semantics).
Although~\citet{luo2016understanding} propose to measure the ERF for CNNs, it cannot be directly implemented to Transformer-base models.
In this section, we propose a relatively simple yet effective approach to measuring the ERF of ViTs.

\begin{definition}[Transformer ERF]
    Supposed that visual sequence $X\in\mathbb{R}^{n\times d}$ is fed into a ViT backbone with $N$ layers, $x_*$ is a randomly chosen anchor token in $X$, and $\varepsilon\in\mathbb{R}^{d}$ is the random vector drawn from Gaussian distribution $\mathcal{N}(0, I)$ normalizing to the unit vector.
    The Transformer ERF is the average Euclidean distance between the closest token $y$ to $x_*$ in the 2D token array and $x_*$ so that when $y$ is disturbed to $\hat{y}=y + \epsilon$ the output change of token $x_*$ is less than a predefined threshold $\delta_r > 0$.
\end{definition}

\Cref{fig:receptive_heatmap} shows the visualization heatmap \wrt~the change of the output anchor with different ViT backbones adopted in our method.
As we can observe, for an input image size of $224\times 224$, the output tokens with relatively significant changes are mainly distributed in the circle with a radius of $25$.
Therefore, in our method, all the ingredients are in charge of $50\times 50$ patches corresponding to the input image.

\section{Experimental Details}

\subsection{Visual Vocabulary}
\label{app:vocabulary_size}
\begin{wraptable}[8]{r}{.35\linewidth}%
\centering%
\setlength{\tabcolsep}{10pt}%
\vspace{-1.5em}
\caption{Visual vocabulary size.}%
\label{tab:vocabulary_size}%
\begin{tabular}{lr}
    \toprule
    \multicolumn{1}{c}{\textbf{Dataset}} & \multicolumn{1}{c}{\textbf{Size} $M$} \\
    \midrule
    CIFAR-10 & 128 \\
    CIFAR-100 & 1024 \\
    Caltech-101 & 1024 \\
    \bottomrule
\end{tabular}
\end{wraptable}

Thanks to the relatively uniform representation of visual semantics generated by DNNs, our visual vocabulary size $M$ can be set to around $10\times C$ with competitive performance. \Cref{tab:vocabulary_size} shows the default choice of $M$ on different datasets.
Results with different vocabulary sizes are presented in~\Cref{sec:ablation}.
Furthermore, all the intermediate features are extracted from the 9-th layer of the backbone ViTs, including DeiT-Tiny, DeiT-Small, and DeiT-Base.

\subsection{IR-Graph}
Before an instance-level graph $G$ and a category-level graph $\hat{G}$ are fed to the graph matcher, their vertex weights and edge weights are normalized as follows.
The vertex weights in $\varLambda$ are divided by the sum of $\varLambda$
\begin{equation}
    \varLambda_\text{norm} = \frac{\varLambda}{\sum_{i=1}^{|V|} \lambda_i} \text{.}
\end{equation}
The adjacency matrix $E$ is divided by the row-wise summation:
\begin{equation}
    E_\text{norm} = \begin{bmatrix}
        \frac{E_{1, :}}{\sum_{i=1}^{|V|} e_{1, i}} \\
        \vdots \\
        \frac{E_{|V|, :}}{\sum_{i=1}^{|V|} e_{|V|, i}} \\
    \end{bmatrix} \text{.}
\end{equation}
Finally, the symmetric adjacency matrix $E_\text{sym}$ is defined as
\begin{equation}
    E_\text{sym} = \frac{1}{2} (E_\text{norm} + E_\text{norm}^\top)
\end{equation}

\subsection{GCN Settings}
\label{app:gcn_settings}
Now we describe the detailed settings of the GCN module in the graph matcher.
We adopt LayerNorm~\citep{ba2016layer} as $\mathrm{Norm}(\cdot)$ function and rectified linear unit~(ReLU) as the activation in~\Cref{eq:graph_conv}.
Besides, the embedding dimension $d_G$ is set to 256 for all the experiments.

\subsection{Training details}
The matcher and IR-Atlas in our method are optimized by AdamW~\citep{adamW} with a learning rate of $10^{-3}$, weight decay of $5\times 10^{-4}$, and cosine annealing as the learning rate decay schedule.
We implement our method with Pytorch~\citep{paszke2019pytorch} and train all the settings for 50 epochs with the batch size of $64$ on one NVIDIA Tesla A100 GPU.
All input images are resized to $224\times 224$ pixels before feeding to our SchemaNet.
We adopt ResNet-style data augmentation strategies: random-sized cropping and random horizontal flipping.

\section{Additional Results}
\label{app:results}
This section contains additional experimental results, highlighting the efficiency, robustness, and extendability of our proposed schema inference.

\subsection{Results on ImageNet}
\label{app:imnet}
We further implement SchemaNet on mini-ImageNet~\citep{mini_imnet-vinyals2016matching} and ImageNet-1k~\citep{deng2009imagenet}.
We adopt DeiT-Small as the backbone, and the visual vocabulary size is set to 1024 for mini-ImageNet and 8000 for ImageNet-1k~(due to the memory constraint).
Particularly, as implementing 1000 fully-connected category-level IR-Graphs is expensive, we keep at most 500 valid vertices for each class while all other vertices are pruned during the training process based on the initialized vertex weights.
All other settings are identical to the experiments on Caltech-101.
The results are presented in~\Cref{tab:app_imnet}, drawing consistent conclusions as the main results in~\Cref{tab:compare}.
\begin{table}[ht]
\caption{Comparison results~(top-1 accuracy) on ImageNet-1k and mini-ImageNet.}
\label{tab:app_imnet}
\centering
\setlength{\tabcolsep}{5pt}%
\resizebox{1.0\linewidth}{!}{
\begin{tabular}{l cccccc}
    \toprule
    \multicolumn{1}{c}{\textbf{Dataset}}
    & \multicolumn{1}{c}{\textbf{Base}}
    & \multicolumn{1}{c}{\textbf{Backbone-FC}}
    & \multicolumn{1}{c}{\textbf{BoVW-Deep}}
    & \multicolumn{1}{c}{\textbf{BagNet}}
    & \multicolumn{1}{c}{\textbf{SchemaNet}}
    & \multicolumn{1}{c}{\textbf{SchemaNet-Init}} \\
    \midrule 
    mini-ImageNet & \Gray{95.35} & 90.04 & \underline{90.24} & 85.64 & 89.40 & \textbf{91.02} \\
    ImageNet-1k & \Gray{79.90} & \underline{69.23} & 58.95 & 68.92 & - & \textbf{74.05} \\
    \bottomrule
\end{tabular}
}
\end{table}

\subsection{Comparison Results of Using Attribution Methods}
\label{app:attribution}
Instead of using raw attention extracted from the backbone in~\Cref{eq:feat2vertex}, we further employ feature attribution methods for computing the ingredient importance.
Specifically, we compare the results using raw attention, CAM~\citep{zhou2016learning-cam}, and Transformer-LRP~\citep{chefer2021transformer-vit_lrp} in~\Cref{tab:app_attr}.
We can observe that using raw attention is superior to the attribution methods in terms of accuracy and running time.
Such results can be explained that:
(1) the attribution methods compute a saliency map for each class~(particularly, ViT-LRP conduct $C$-times backpropagation, consuming enormous time), which is significantly slower than our implementation;
(2) further, as the backbone predicts soft probability distributions other than one-hot targets, the computed saliency maps for similar categories will be highlighted to some extent, impairing the graph matcher.

\begin{table}[ht]
\centering
\caption{Comparison results~(top-1 accuracy and running time) of using different attribution methods on CIFAR-10 and CIFAR-100 datasets.}
\label{tab:app_attr}
\begin{tabular}{l cc cc cc}
    \toprule
    \multicolumn{1}{c}{\multirow{2}{*}{\textbf{Dataset}}}
    & \multicolumn{2}{c}{\textbf{Raw Attention}}
    & \multicolumn{2}{c}{\textbf{CAM}}
    & \multicolumn{2}{c}{\textbf{ViT-LRP}} \\
    \cmidrule(lr){2-3}
    \cmidrule(lr){4-5}
    \cmidrule(lr){6-7}
    & Acc & Time & Acc & Time & Acc & Time \\
    \midrule 
    CIFAR-10 & \textbf{95.96} & 0.8h & 93.58 & 3.6h & 91.28 & 53.8h \\
    CIFAR-100 & \textbf{78.45} & 3.7h & 77.55 & 15.3h & - & - \\
    \bottomrule
\end{tabular}
\end{table}

\subsection{Adversarial Attacks}
\label{app:adv_attack}
To analyze the robustness of our proposed SchemaNet, we evaluate the pre-trained SchemaNet~(from ImageNet-1k described in~\Cref{app:imnet}) on two popular adversarial benchmarks: ImageNet-A~\citep{djolonga2021robustness-imagenet_a} and ImageNet-R~\citep{hendrycks2021many-imagenet_r}.
The comparison results are shown in~\Cref{tab:app_adv}, revealing that our method is more robust than the baselines.

\begin{table}[ht]
\caption{Adversarial attack results~(top-1 accuracy) on ImageNet-A and ImageNet-R datasets.}
\label{tab:app_adv}
\centering
\begin{tabular}{l cccc}
    \toprule
    \multicolumn{1}{c}{\textbf{Dataset}}
    & \multicolumn{1}{c}{\textbf{Backbone-FC}}
    & \multicolumn{1}{c}{\textbf{BoVW-Deep}}
    & \multicolumn{1}{c}{\textbf{BagNet}}
    & \multicolumn{1}{c}{\textbf{SchemaNet-Init}} \\
    \midrule 
    ImageNet-A & 5.41 & 7.36 & 5.53 & \textbf{13.05} \\
    ImageNet-R & 31.21 & 24.17 & 30.93 & \textbf{33.46} \\
    \bottomrule
\end{tabular}
\end{table}

\subsection{Extendability}
\label{app:inc}
We evaluate the extendability of our method by extending a trained SchemaNet to unseen tasks, while keeping the original framework frozen.
As such, only the category-level IR-Graphs for the new tasks are optimized and inserted into the original IR-Atlas.

Specifically, the tasks are disjointly drawn from Caltech-101 dataset. The ``Base'' task has 21 classes, and task 1 to 4 has 20 classes. The backbone model, \ie, DeiT-Tiny, is trained on the ``Base'' task and then is kept unchanged for the new tasks.

The experimental results are presented in~\Cref{tab:app_incre}, revealing limited performance degradation for incoming new tasks.
Such results show that the graph matcher is only responsible for evaluating the graph similarity, while the category knowledge is stored in the imagination, \ie, IR-Atlas.

\begin{table}[ht]
\caption{Top-1 accuracy results when extending SchemaNet to unseen tasks. The accuracies are evaluated on the corresponding task, and the average accuracy over all the tasks is given as well.}
\label{tab:app_incre}
\centering
\begin{tabular}{c|c|rrrrr}
    \toprule
    & \multicolumn{1}{c|}{\textbf{Average}}
    & \multicolumn{1}{c}{\textbf{Base}} \\
    \midrule

    \textbf{Base}
    & 97.57 & 97.57
    & \multicolumn{1}{c}{\textbf{Task 1}} \\

    \textbf{+Task 1}
    & 94.68 & 95.95 & 93.69
    & \multicolumn{1}{c}{\textbf{Task 2}} \\

    \textbf{+Task 2}
    & 93.36 & 95.14 & 93.06 & 92.11
    & \multicolumn{1}{c}{\textbf{Task 3}} \\

    \textbf{+Task 3}
    & 93.63 & 95.55 & 93.06 & 92.47 & 93.75
    & \multicolumn{1}{c}{\textbf{Task 4}} \\

    \textbf{+Task 4}
    & 92.37 & 95.14 & 92.74 & 91.76 & 90.42 & 91.49 \\
    \bottomrule
\end{tabular}
\end{table}

\subsection{Feat2Graph Evaluation}
\label{app:feat2graph_evaluation}

We here present a performance bottleneck of our method, \ie, the \emph{Feat2Graph} module, in which a feature ingredient array is converted to IR-Graph without benefiting from GPU parallel acceleration.
Specifically, as duplicated visual words may exist after the feature discretization, which happens from time to time, we implement this module in C++ to achieve better random access performance.
The experiments are conducted on one NVIDIA Tesla A100 GPU platform with AMD EPYC 7742 64-Core Processor.

In~\Cref{tab:app_feat2edge}, we show the average running time of each component with an input batch size of 64.
We can see that without acceleration in parallel, the running time for ``Feat2Edge'' is significantly longer than others (about 1.75 ms per image).
However, the inference time is still acceptable for real-time applications.

\begin{table}[ht]
\caption{Time costing~(ms) of the components in SchemaNet with input batch size of 64.}
\label{tab:app_feat2edge}
\centering
\begin{tabular}{c rrrr}
    \toprule
    \multicolumn{1}{c}{\textbf{Dataset}}
    & \multicolumn{1}{c}{\textbf{Backbone}}
    & \multicolumn{1}{c}{\textbf{Feat2Vertex}}
    & \multicolumn{1}{c}{\textbf{Feat2Edge}}
    & \multicolumn{1}{c}{\textbf{Matcher}} \\
    \midrule

    \multicolumn{1}{c}{\textbf{CIFAR-10}}
    & 4.89 & 7.00 & 19.8 & 1.95 \\

    \multicolumn{1}{c}{\textbf{CIFAR-100}}
    & 4.31 & 9.50 & 101.7 & 1.30 \\

    \multicolumn{1}{c}{\textbf{Caltech-101}}
    & 4.12 & 8.12 & 111.7 & 1.42 \\
    \bottomrule
\end{tabular}
\end{table}

\section{Ablation Study}
\label{sec:ablation}
\begin{figure*}[t]
\subfigure[SchemaNet accuracy with different visual vocabulary size $M$.]%
{%
    \label{fig:ablation_vocabulary}%
    \includegraphics[width=0.31\linewidth]{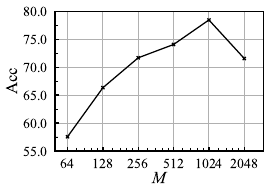}%
}\hfill%
\subfigure[SchemaNet accuracy when using GCN with various depth $L_G$.]%
{%
    \label{fig:ablation_gcn_layers}%
    \includegraphics[width=0.31\linewidth]{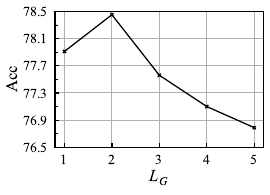}%
}\hfill%
\subfigure[Sensitivity analysis of hyperparameters involved in~\Cref{eq:loss}.]%
{%
    \label{fig:sens}%
    \includegraphics[width=0.31\linewidth]{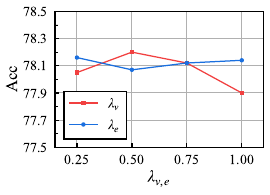}%
}%
\caption{Ablation study and sensitivity analysis of hyperparameters of our proposed SchemaNet on CIFAR-100 backed with DeiT-Tiny.}
\end{figure*}

\subsection{Visual Vocabulary Size}
\Cref{fig:ablation_vocabulary} shows SchemaNet accuracy when using different sizes of visual vocabulary.
We can observe that even with a relatively small size, \eg, $M=256$ on CIFAR-100, the performance is still competitive~(about 7\% absolute degradation).
Besides, for the case of $M=2048$, however, the performance suffers from low generalizability of the vocabulary.

\subsection{Number of GraphConv Layers}
The effect of using different depths of GCN is illustrated in~\Cref{fig:ablation_gcn_layers}.
As $L_G$ increases to 5, the accuracy on CIFAR-100 decreases rapidly, yielding that deep GCN matcher impairs the graph matching performance, which has been explored due to the over-smoothing caused by many convolutional layers~\citep{haoA20,LiHW18}.
In our method, however, the graph matcher with only two layers of GraphConv is proven to be adequate for high performance and interpretability.

\subsection{Ablation of Weight Components}
\begin{figure}[t]
\centering%
\includegraphics[width=0.5\linewidth]{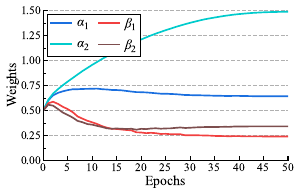}%
\caption{Learning curves of $\alpha_{1,2}$ and $\beta_{1, 2}$ with DeiT-Tiny backbone on Caltech-101 dataset.}%
\label{fig:app_curve}%
\end{figure}

\begin{wraptable}[12]{r}{0.5\linewidth}%
\centering%
\vspace{-2em}
\caption{Ablation study of the components defining the vertex and edge weights with DeiT-Tiny as the backbone DNN.}%
\resizebox{\linewidth}{!}{%
\setlength{\tabcolsep}{4pt}%
\begin{tabular}{lrr}
    \toprule
    \multicolumn{1}{c}{\multirow{2}{*}{\textbf{Settings}}}
    & \multicolumn{2}{c}{\multirow{1}{*}{\textbf{Accuracy}}} \\
    \cmidrule(lr){2-3}
    & CIFAR-10 & CIFAR-100 \\
    \midrule
    Learnable $\alpha_{1,2}$, $\beta_{1, 2}$ & \textbf{95.96} & \textbf{78.20} \\
    Fixed $\alpha_1=0$, $\alpha_2=1$ & 95.88 & 75.91 \\
    Fixed $\alpha_1=1$, $\alpha_2=0$ & 95.68 & 72.17 \\
    Fixed $\beta_1=0$, $\beta_2=1$ & 95.85 & 77.31 \\
    Fixed $\beta_1=1$, $\beta_2=0$ & 95.89 & 77.84 \\
    Fixed $\alpha_{1,2}=\beta_{1, 2}=0.5$ & 95.92 & 75.62 \\
    \bottomrule
\end{tabular}%
}
\label{tab:ablation_weights}%
\end{wraptable}

The effect of $\lambda^\text{CLS}$ and $\lambda^\text{bag}$ defined in~\Cref{eq:feat2vertex}, and $e^\text{attn}$ and $e^\text{adj}$ defined in~\Cref{eq:feat2edge} are shown in~\Cref{tab:ablation_weights}.
By setting the corresponding weight~($\alpha_{1,2}$ and $\beta_{1,2}$) to zero once at a time, we are able to analyze the components individually.
In general, removing any term will lead to varying degrees of performance degradation.
More significantly, when removing the term $\lambda^\text{bag}$, the accuracy decreases by 6.03\% on CIFAR-100, revealing that the statistics of the ingredients help filter noisy vertices.

The learning curves of the learnable weights are shown in~\Cref{fig:app_curve}, from which we can observe that the model tend to adopt a relatively larger $\alpha_2$ (for visual word count) while keep $\lambda_v^{\text{CLS}}$ term for filtering noisy and background ingredients.

\section{Sensitivity Analysis of Hyperparameters}
\label{sec:sensitivity}

\subsection{Sparsification Threshold} \Cref{tab:app_prune} presents the SchemaNet-Init top-1 accuracy trained with different sparsification threshold $\delta_t$. We can observe that an appropriate value of $\delta_t$, \eg, 0.01, will boost the model performance.

\begin{table}[ht]
\caption{Sensitivity analysis of sparsification threshold $\delta_t$.}
\label{tab:app_prune}
\centering
\begin{tabular}{c cccccc}
    \toprule
    $\delta_t$ & 0 & 0.001 & 0.01 & 0.02 & 0.05 & 0.1 \\
    \midrule
    Acc & 77.27 & 77.96 & 78.45 & 78.43 & 77.51 & 76.92 \\
    \bottomrule
\end{tabular}
\end{table}

\subsection{Loss Weights} \Cref{fig:sens} shows the sensitivity analysis of $\lambda_v$ and $\lambda_e$ in~\Cref{eq:loss} that constrains the complexity of IR-Atlas.
We evaluate the top-1 accuracy performance on the CIFAR-100 dataset with DeiT-Tiny as the backbone.
The curve of $\lambda_v$ shows that neither dense nor extreme sparse IR-Atlas impairs the performance, while our method is more robust when changing $\lambda_e$.

\section{More Visualizations}
\label{app:vis}

\subsection{Visualization and Analysis of Misclassified Examples}
\label{app:vis_misclassified}
\begin{figure}[t]
\centering%
\includegraphics[width=1.0\linewidth]{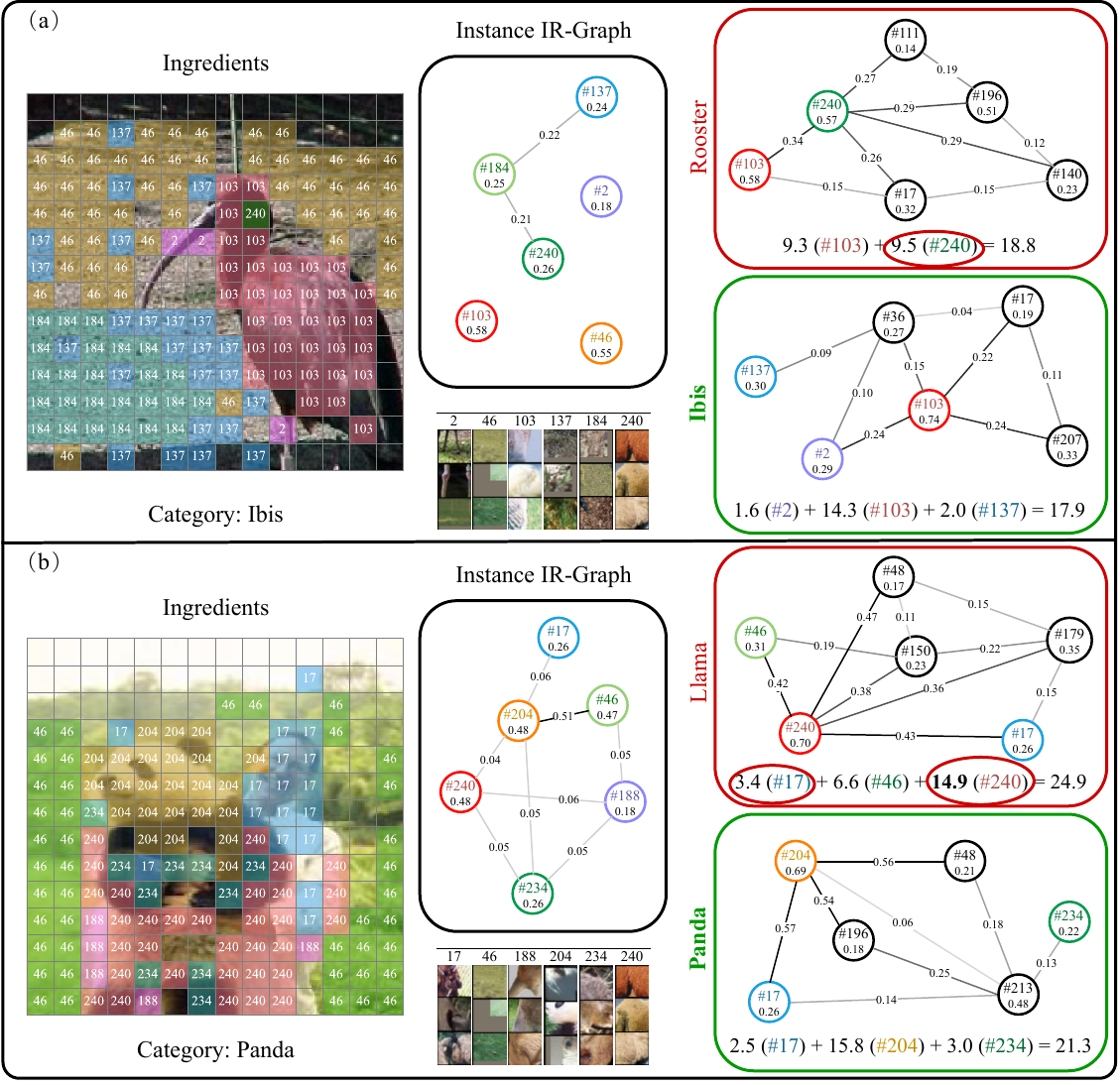}%
\caption{Illustration of misclassified samples. Better view in color.}%
\label{fig:app_mis}%
\end{figure}

\Cref{fig:app_mis} shows two misclassified examples along with the classification evidence.
In~\Cref{fig:app_mis}(a), the object is misclassified to a fine-grained category ``rooster'' because of a noise ingredient \#240, which should be \#103.
Unfortunately, ingredient \#240 is a crucial vertex in the rooster's IR-Graph, contributing about 9.5 absolute gains in the logit, leading to misclassification.
\Cref{fig:app_mis}(b) shows a more complicated example. We can observe that instead of discretizing the appeared human face to the ``face'' ingredients, the backbone provides features closer to \#17, which is more similar to the animal's face.
Moreover, some part of the object's body is assigned to \#240 rather than the panda's body~(\#204).
Thus, it creates a remarkable pattern, \ie, the interaction between \#17 and \#240, which is the crucial local structure in the llama's graph.
As a result, \#240 and its local structure contribute about 14.9 gains in the logit.
Although our schema inference framework is capable of revealing why an image is misclassified by highlighting the key points, in future work, we must explore a more compatible feature extractor that can generate more robust local features.

\subsection{Visualization of Ingredients}
In~\Cref{fig:app_vis_codes}, we visualize the whole set of ingredients on Caltech-101 for visualizations in~\Cref{fig:vis} and~\Cref{fig:app_vis_1,fig:app_vis_2,fig:app_vis_3}.
We extract 256 ingredients on Caltech-101 for a better view.
For each cluster center generated from $k$-means clustering, we select the top-40 tokens that are closest to it and show the corresponding image patch~($50\times 50$ in pixel, delineated in~\Cref{app:erf}).
We can observe that image patches of the same cluster share the same semantics.

\subsection{Visualization of IR-Atlas and Instance IR-Graphs}

We provide more visualization examples on Caltech-101 dataset, shown in~\Cref{fig:app_vis_1,fig:app_vis_2,fig:app_vis_3}.

\begin{sidewaysfigure}[!p]
\centering
\includegraphics[width=\linewidth]{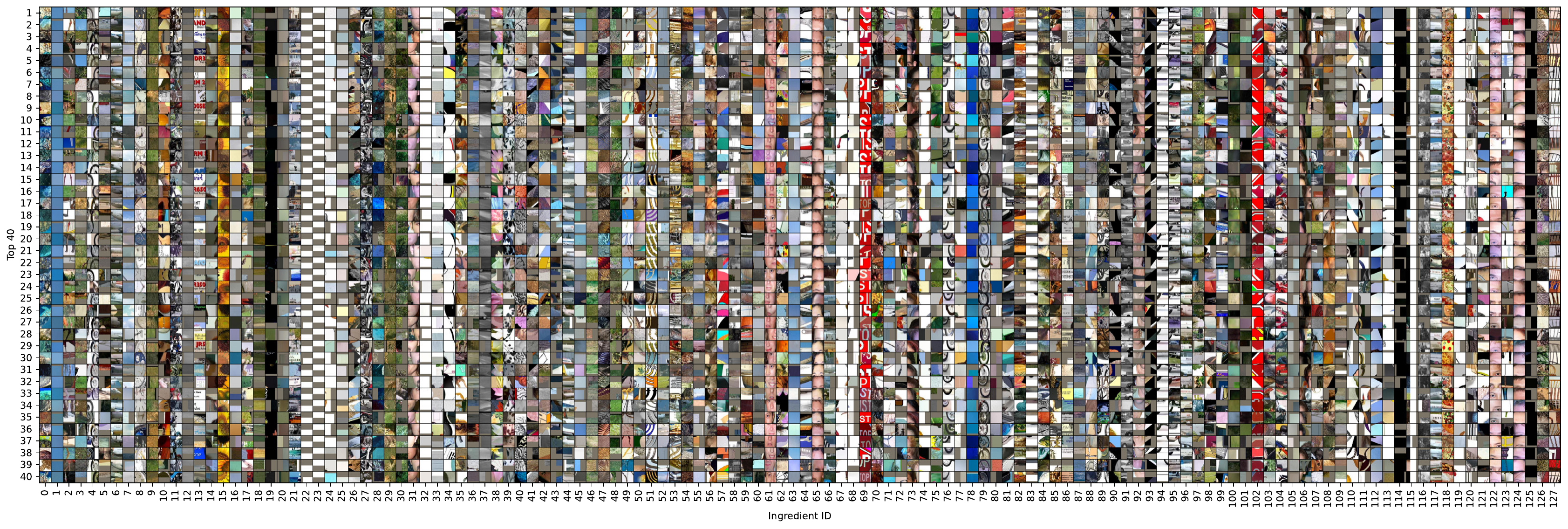}
\includegraphics[width=\linewidth]{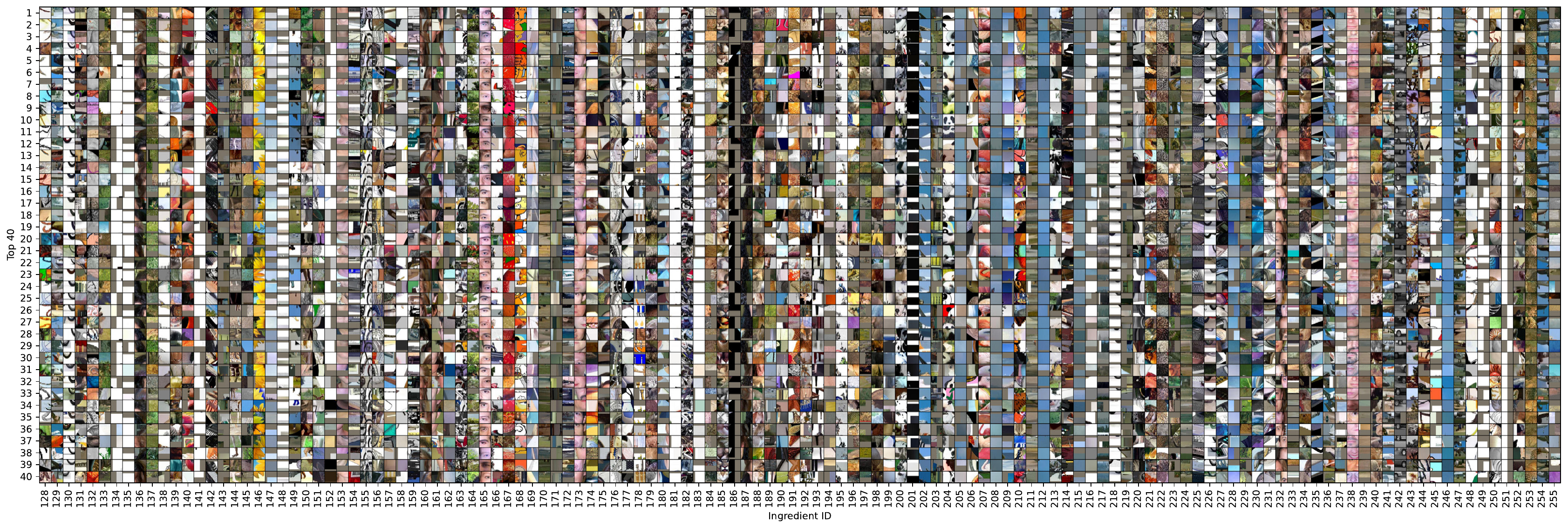}
\vspace{-2em}
\caption{Ingredient visualization on Caltech-101 dataset. Please zoom in for a better view.}
\label{fig:app_vis_codes}
\end{sidewaysfigure}

\begin{figure}[!p]
\centering%
\setlength{\tabcolsep}{1pt}%
\resizebox{\linewidth}{!}{
\begin{tabular}{c@{\hspace{2pt}}|@{\hspace{2pt}}ccccc}
    \toprule
    \textbf{IR-Atlas} & \multicolumn{5}{c}{\textbf{Instance IR-Graphs}} \\
    \midrule

    \includegraphics[width=30mm, height=50mm]{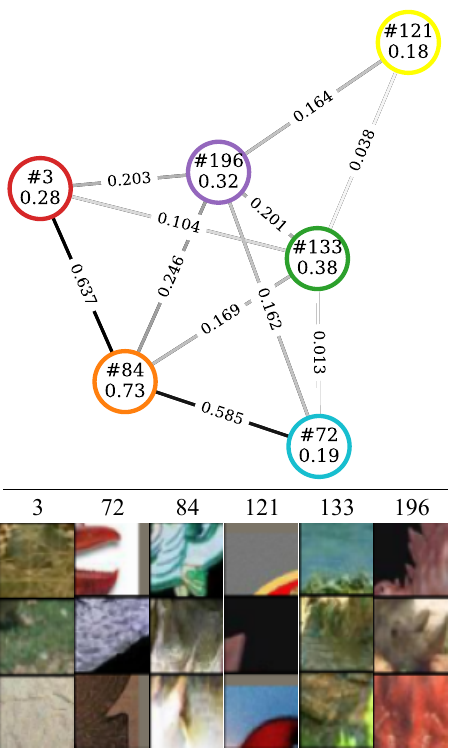}
    & \includegraphics[width=30mm]{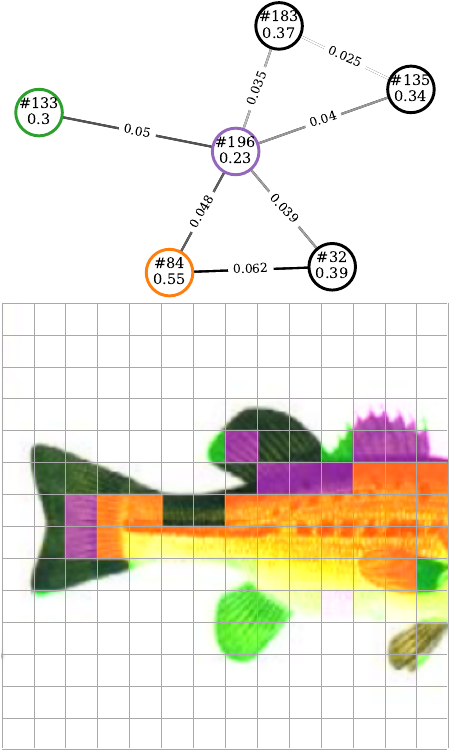}
    & \includegraphics[width=30mm]{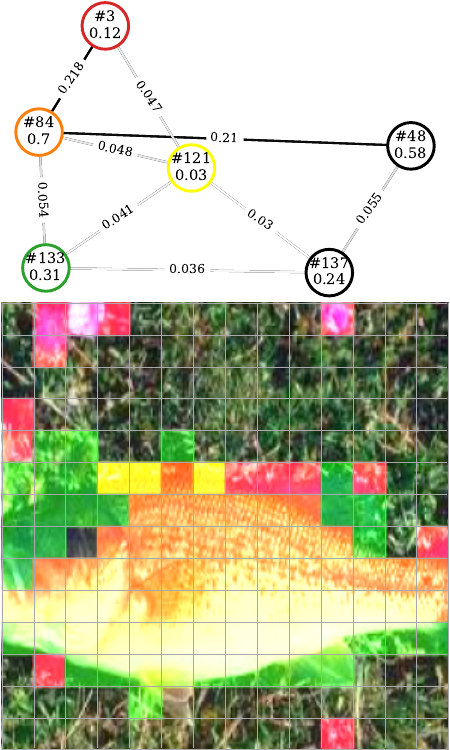}
    & \includegraphics[width=30mm]{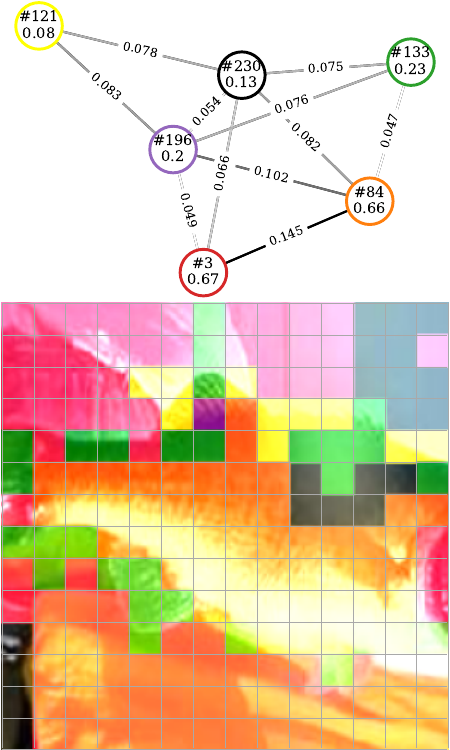}
    & \includegraphics[width=30mm]{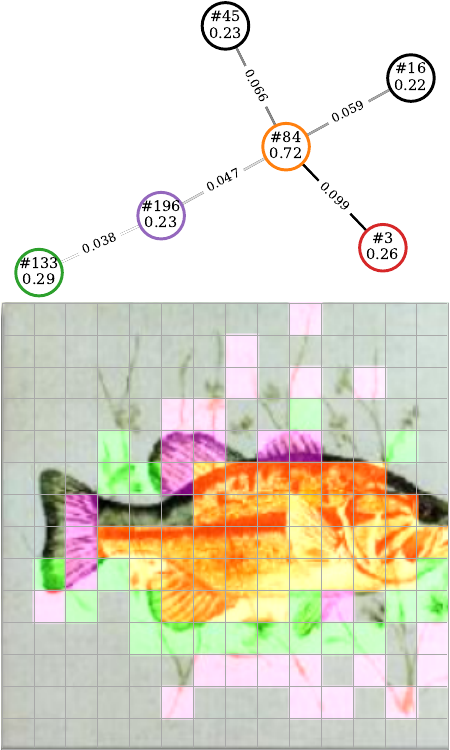}
    & \includegraphics[width=30mm]{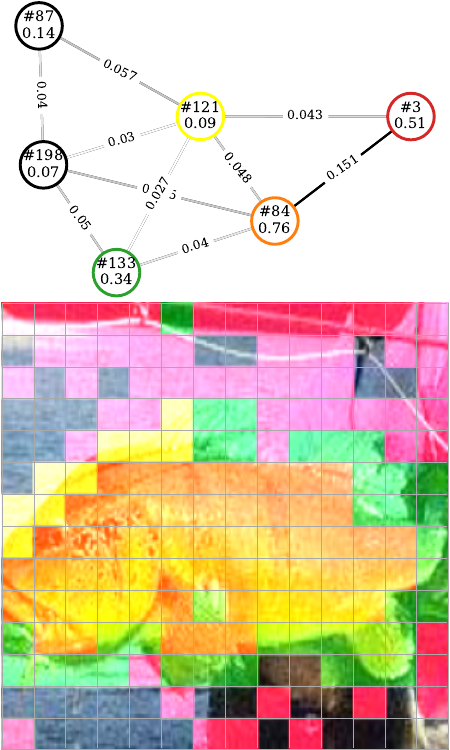} \\
    \midrule

    \includegraphics[width=30mm, height=50mm]{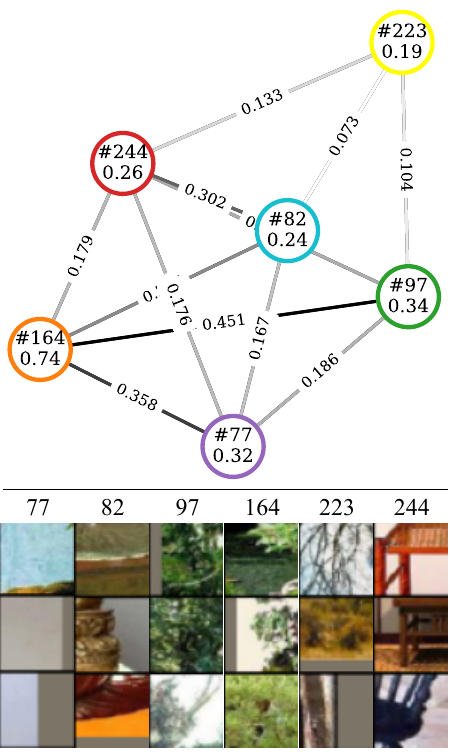}
    & \includegraphics[width=30mm]{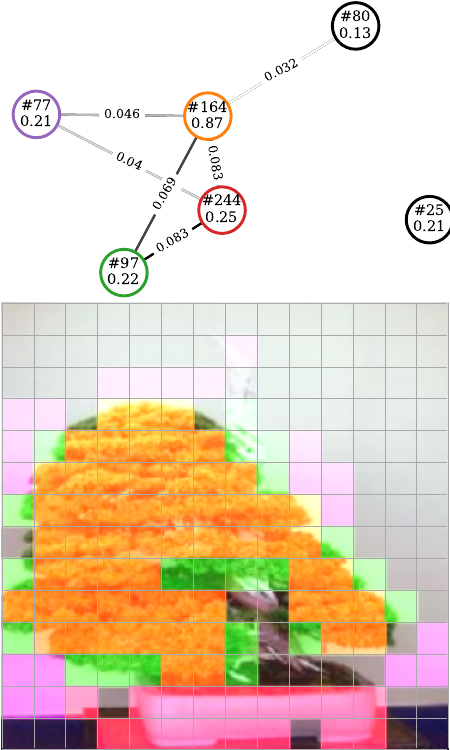}
    & \includegraphics[width=30mm]{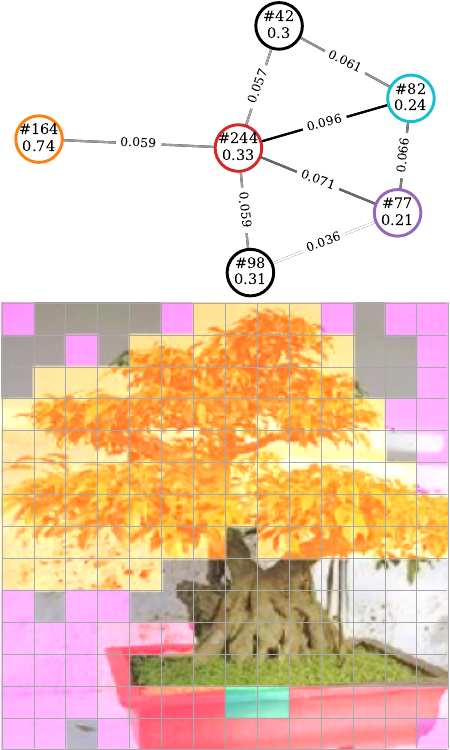}
    & \includegraphics[width=30mm]{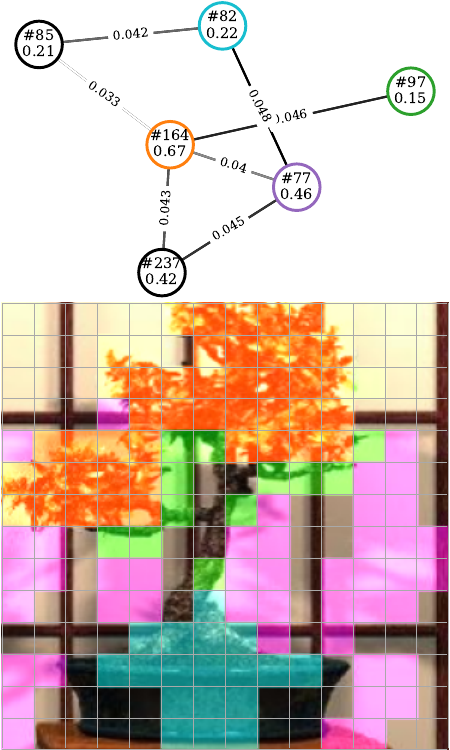}
    & \includegraphics[width=30mm]{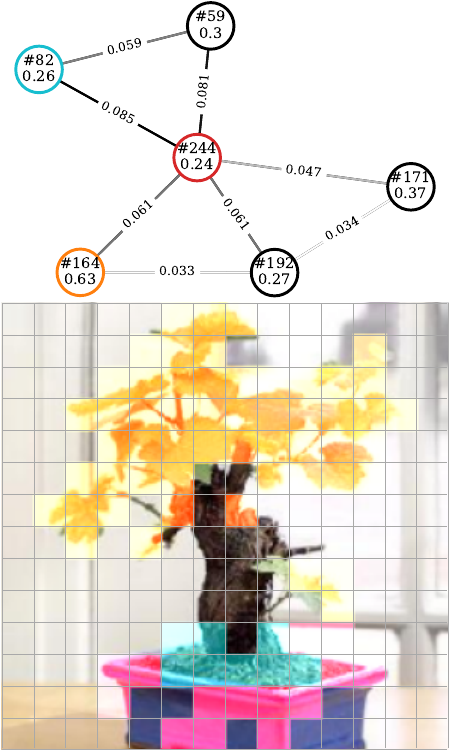}
    & \includegraphics[width=30mm]{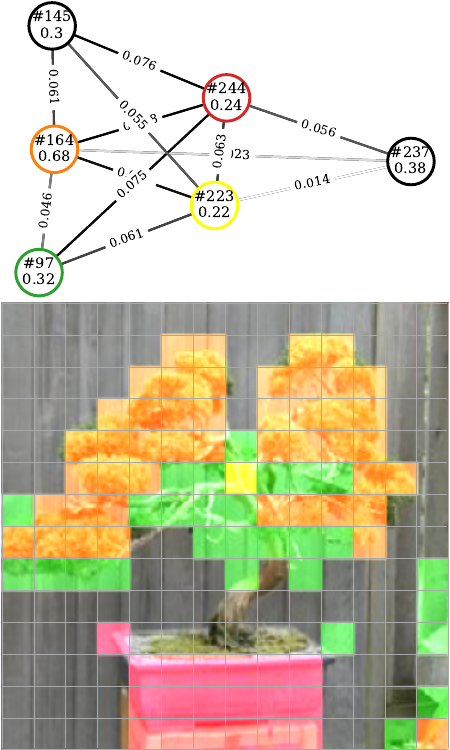} \\
    \midrule

    \includegraphics[width=30mm, height=50mm]{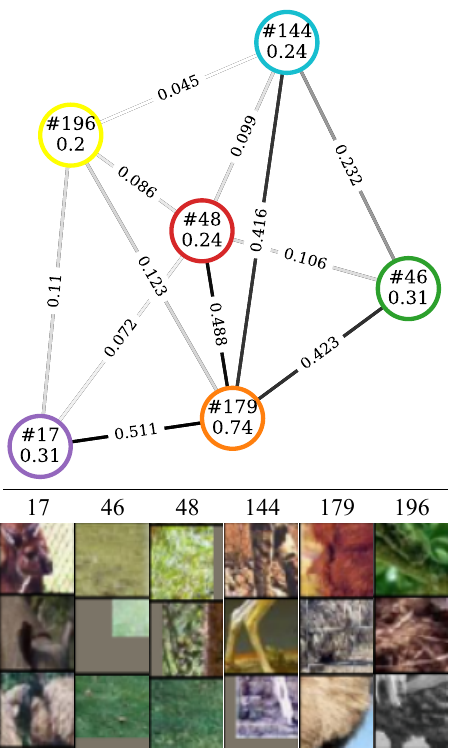}
    & \includegraphics[width=30mm]{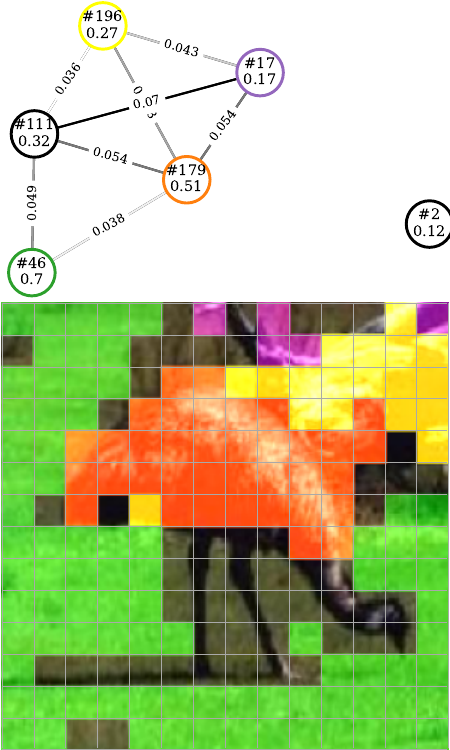}
    & \includegraphics[width=30mm]{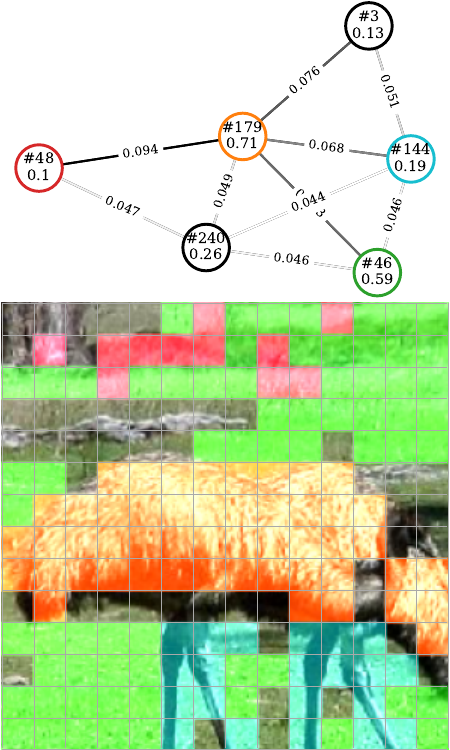}
    & \includegraphics[width=30mm]{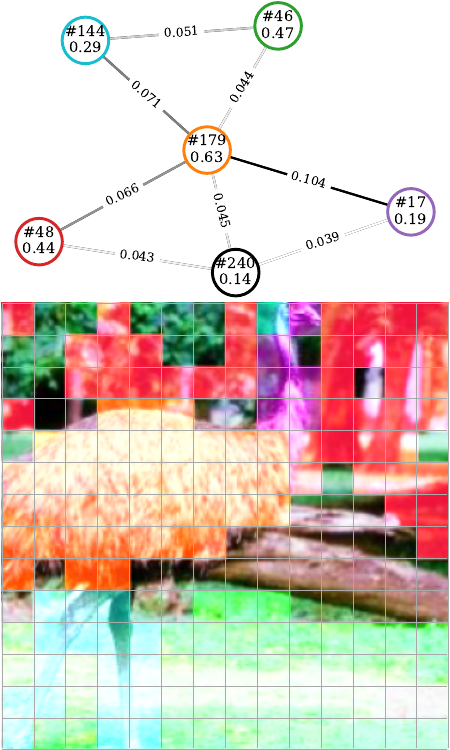}
    & \includegraphics[width=30mm]{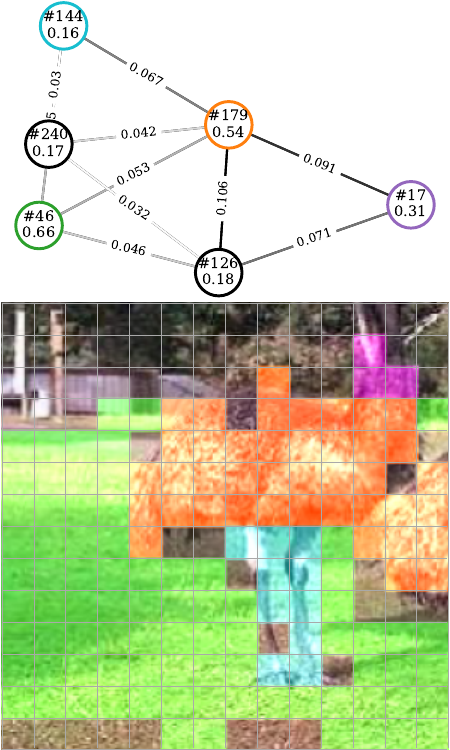}
    & \includegraphics[width=30mm]{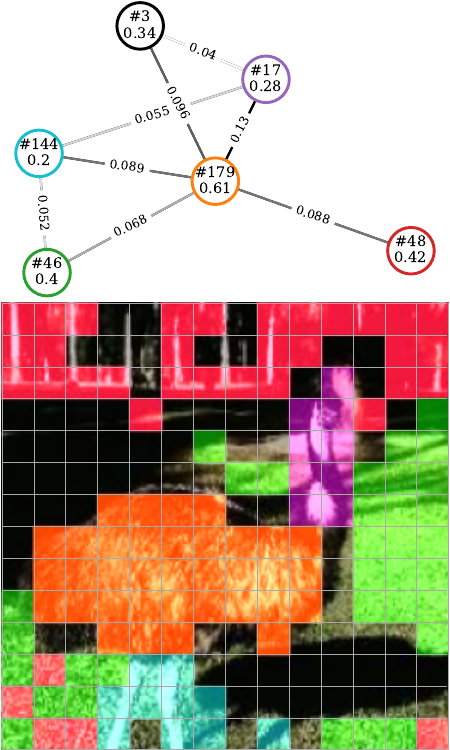} \\
    \midrule

    \includegraphics[width=30mm, height=50mm]{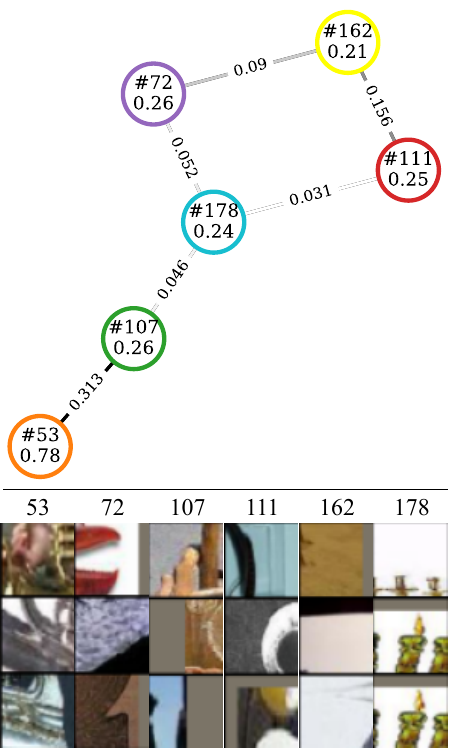}
    & \includegraphics[width=30mm]{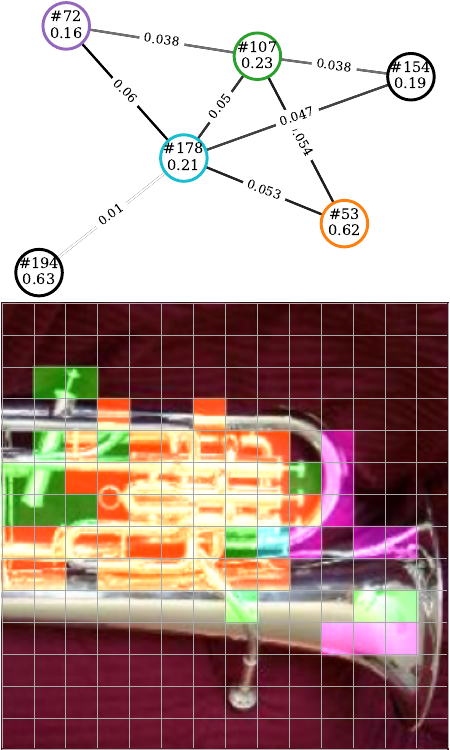}
    & \includegraphics[width=30mm]{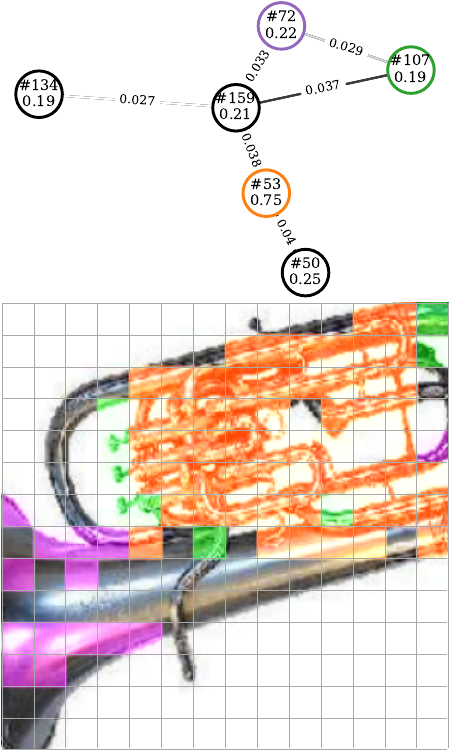}
    & \includegraphics[width=30mm]{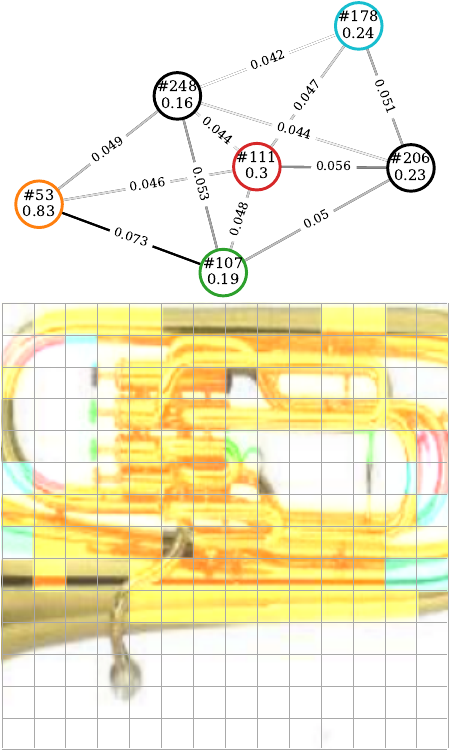}
    & \includegraphics[width=30mm]{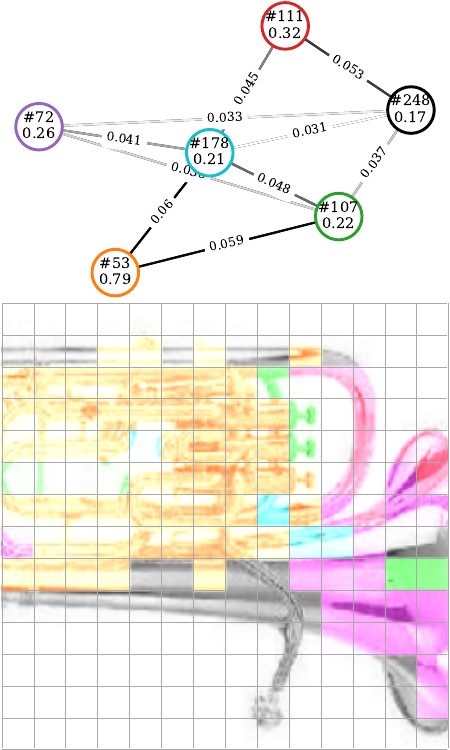}
    & \includegraphics[width=30mm]{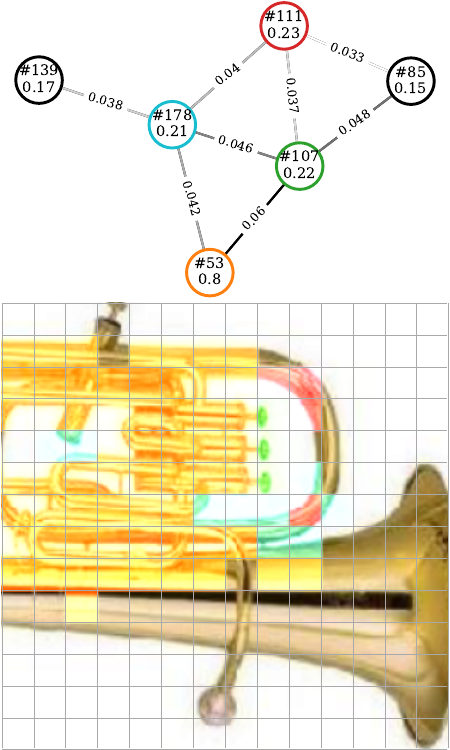} \\
    \midrule

    \includegraphics[width=30mm, height=50mm]{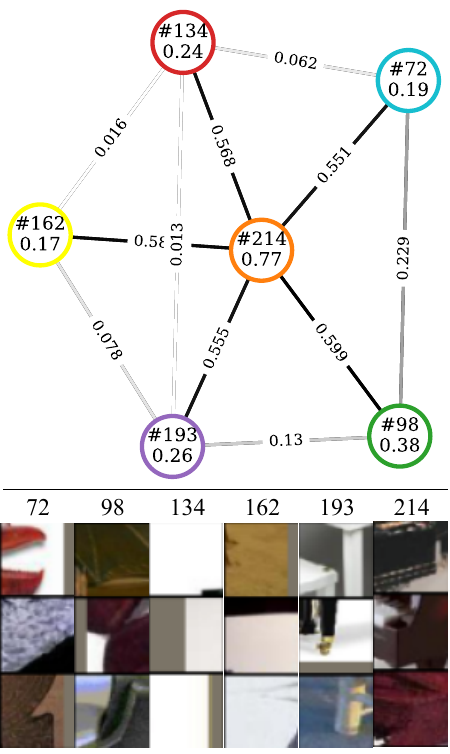}
    & \includegraphics[width=30mm]{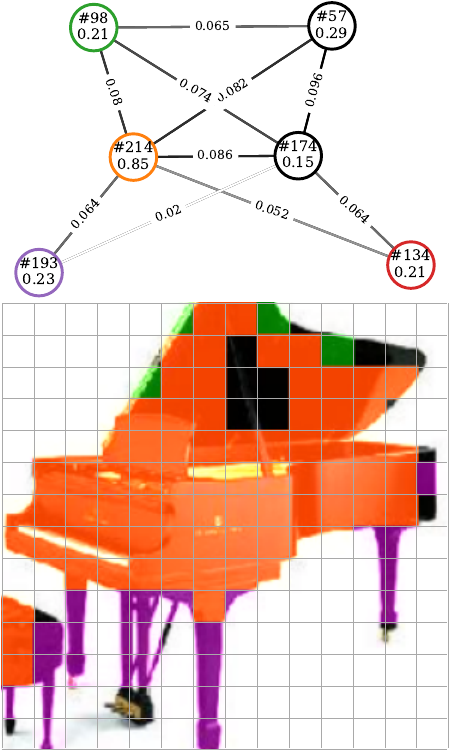}
    & \includegraphics[width=30mm]{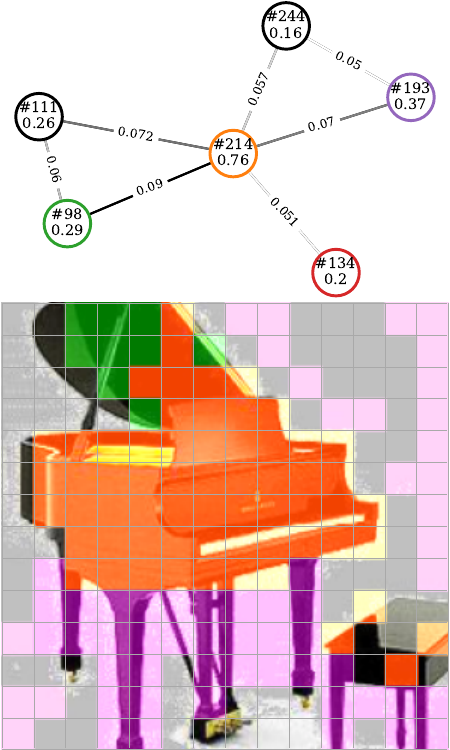}
    & \includegraphics[width=30mm]{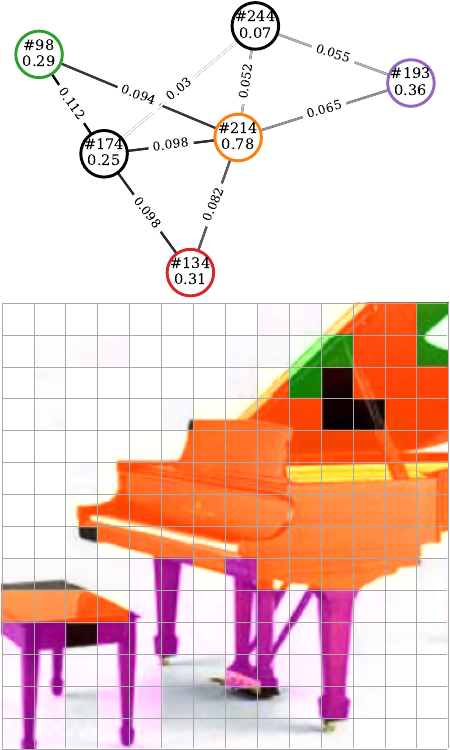}
    & \includegraphics[width=30mm]{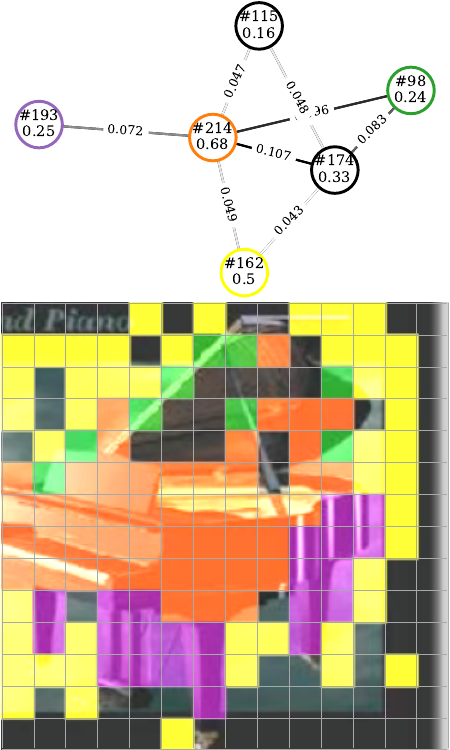}
    & \includegraphics[width=30mm]{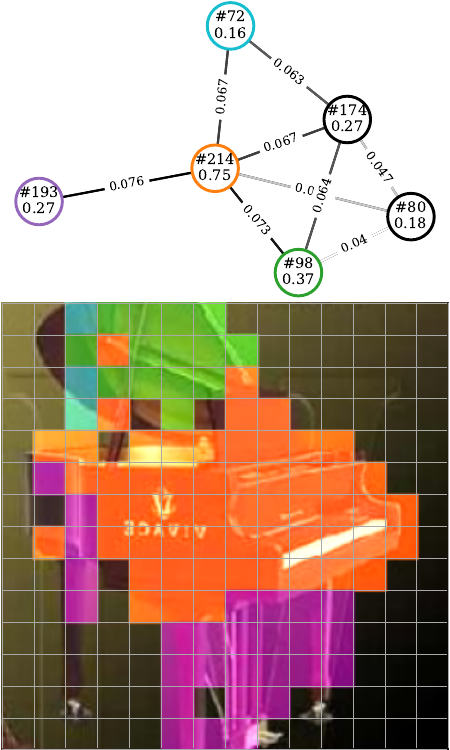} \\

    \bottomrule
    \end{tabular}%
}
\caption{%
Examples of the learned IR-Atlas and instance IR-Graphs randomly sampled from five categories (``bass'', ``bonsai'', ``emu'', ``euphonium'', and ``grand piano'') on Caltech-101.
}%
\label{fig:app_vis_1}%
\end{figure}

\begin{figure}[!p]
\centering%
\setlength{\tabcolsep}{1pt}%
\resizebox{\linewidth}{!}{
\begin{tabular}{c@{\hspace{2pt}}|@{\hspace{2pt}}ccccc}
    \toprule
    \textbf{IR-Atlas} & \multicolumn{5}{c}{\textbf{Instance IR-Graphs}} \\
    \midrule

    \includegraphics[width=30mm, height=50mm]{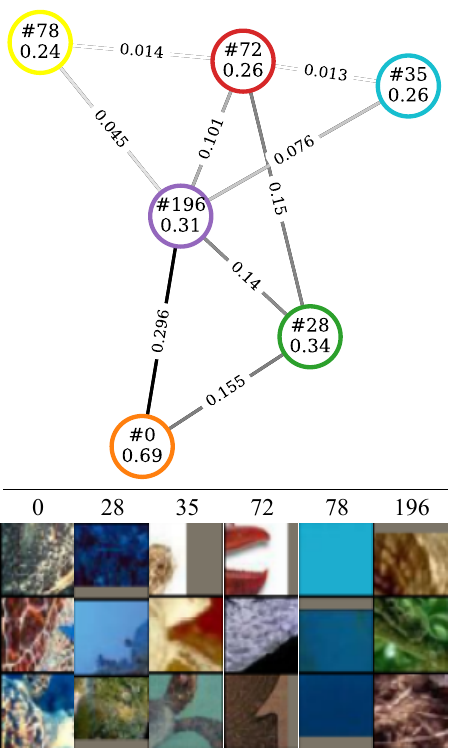}
    & \includegraphics[width=30mm]{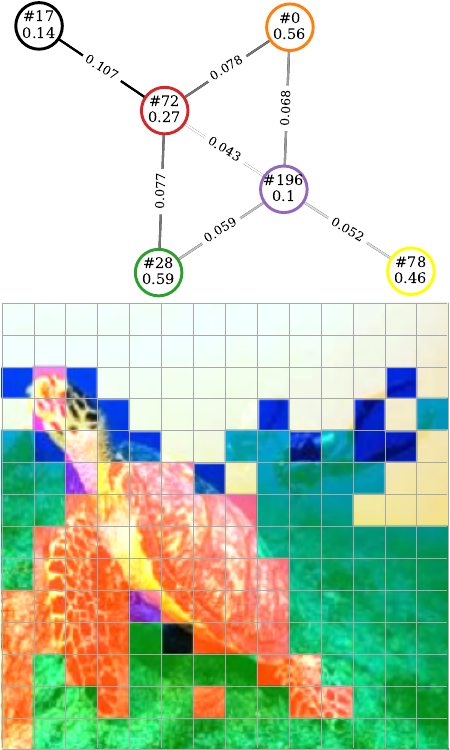}
    & \includegraphics[width=30mm]{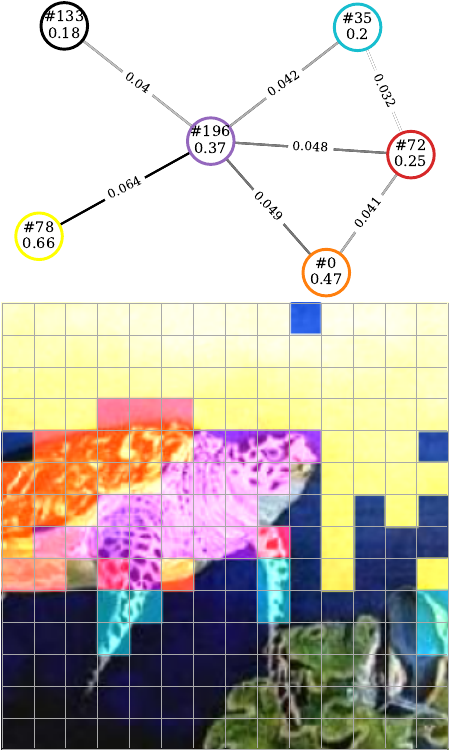}
    & \includegraphics[width=30mm]{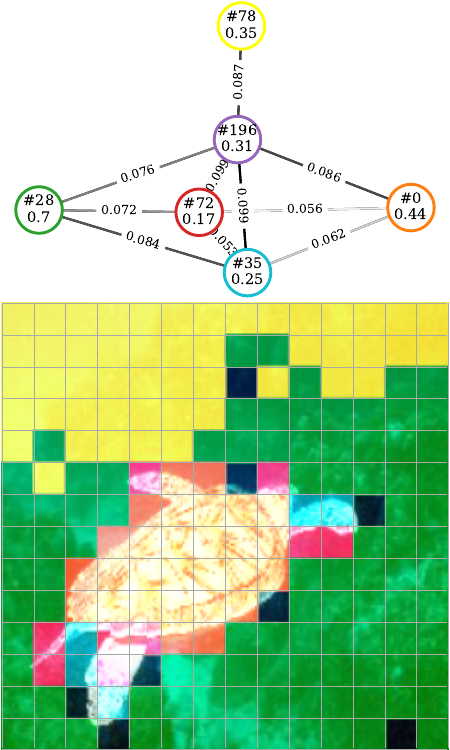}
    & \includegraphics[width=30mm]{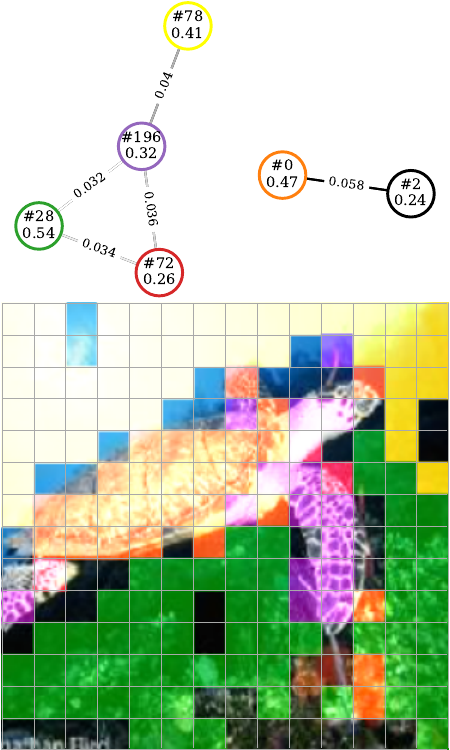}
    & \includegraphics[width=30mm]{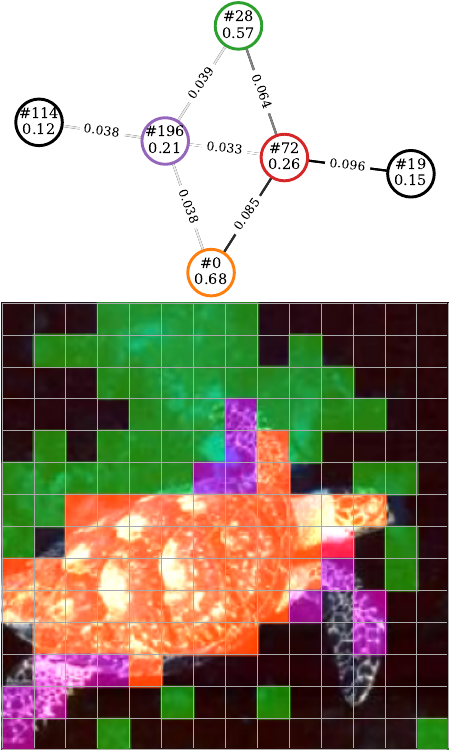} \\
    \midrule

    \includegraphics[width=30mm, height=50mm]{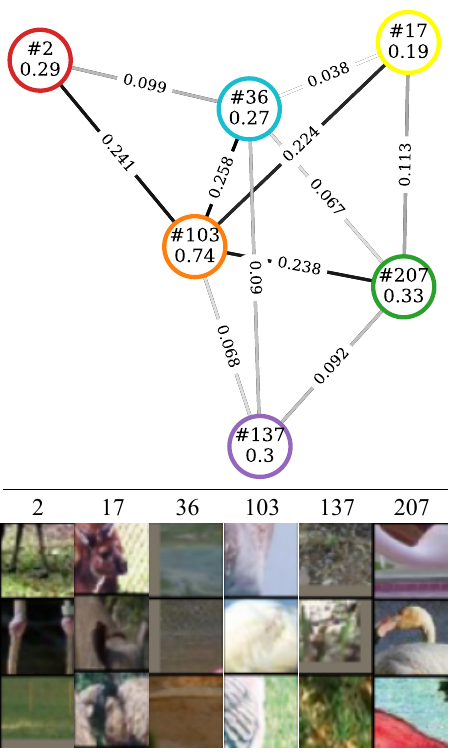}
    & \includegraphics[width=30mm]{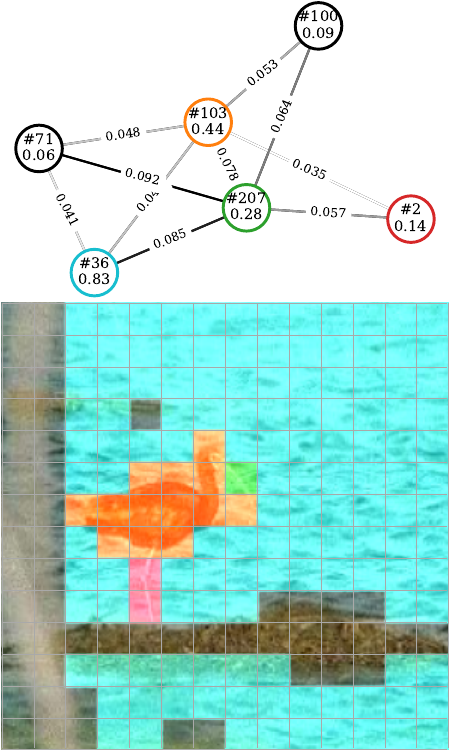}
    & \includegraphics[width=30mm]{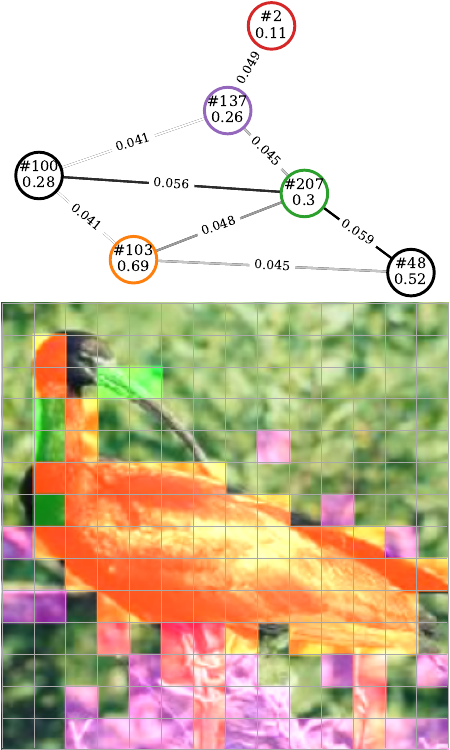}
    & \includegraphics[width=30mm]{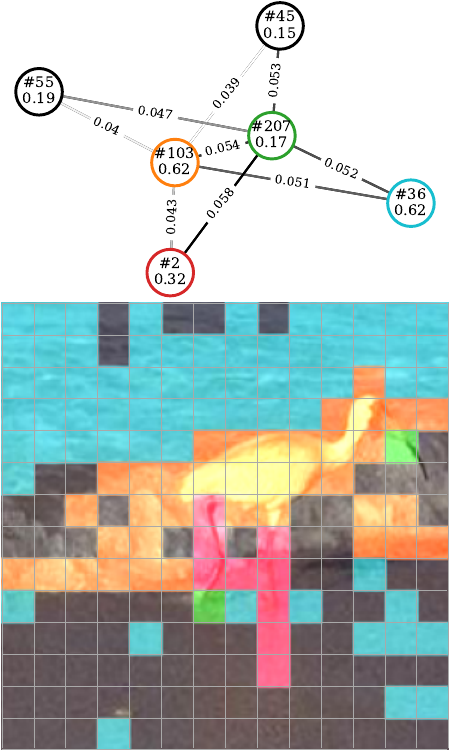}
    & \includegraphics[width=30mm]{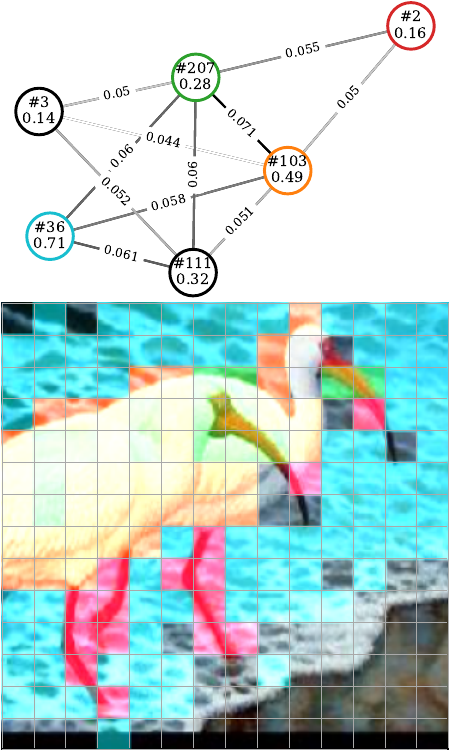}
    & \includegraphics[width=30mm]{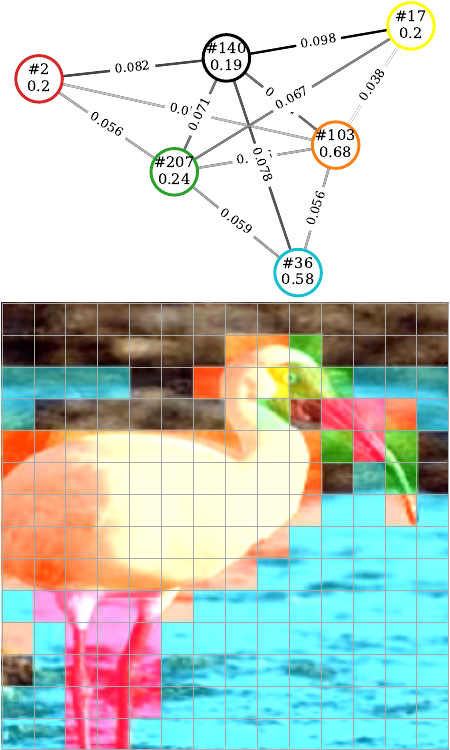} \\
    \midrule

    \includegraphics[width=30mm, height=50mm]{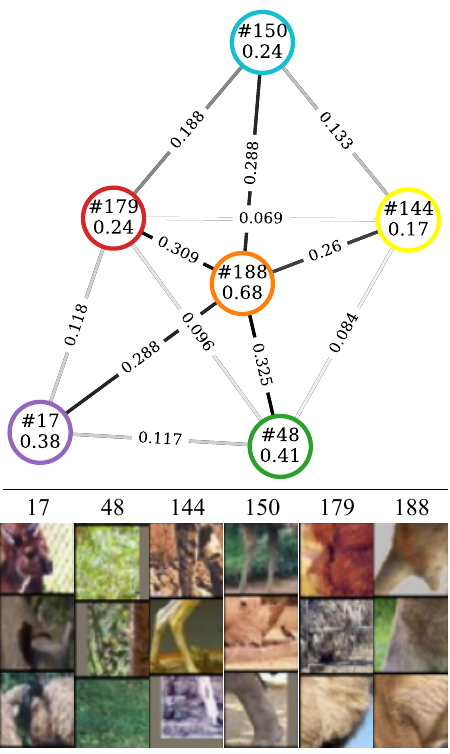}
    & \includegraphics[width=30mm]{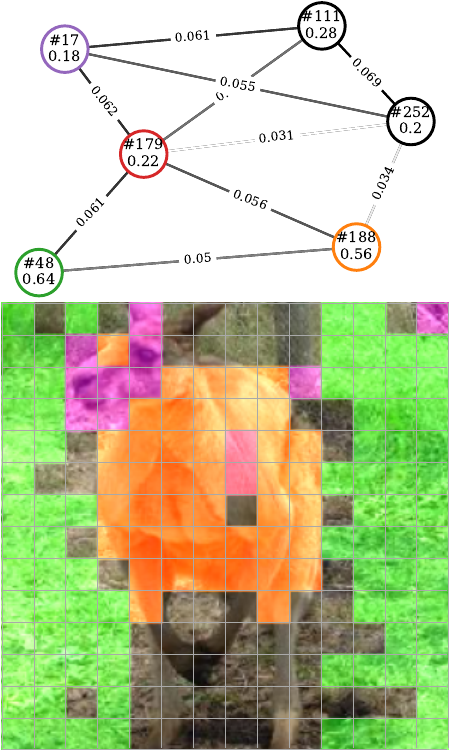}
    & \includegraphics[width=30mm]{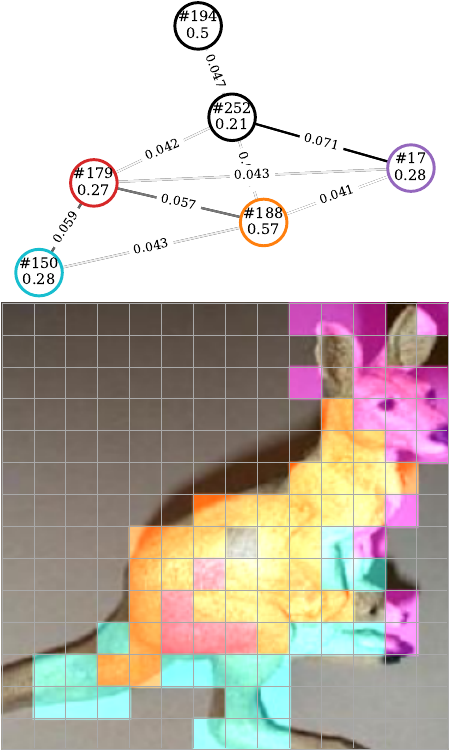}
    & \includegraphics[width=30mm]{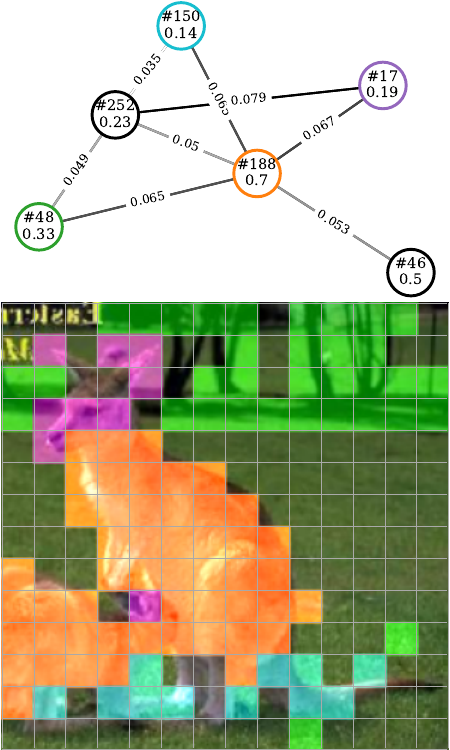}
    & \includegraphics[width=30mm]{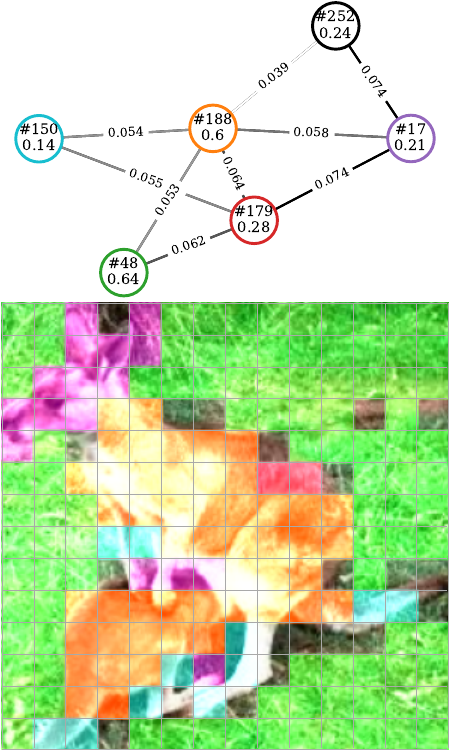}
    & \includegraphics[width=30mm]{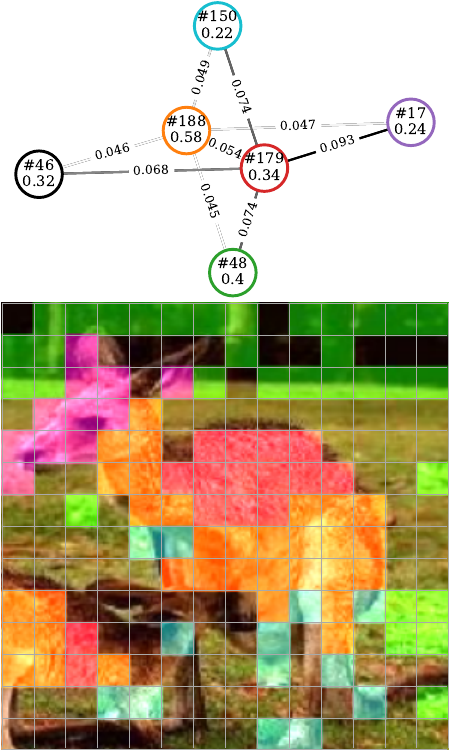} \\
    \midrule

    \includegraphics[width=30mm, height=50mm]{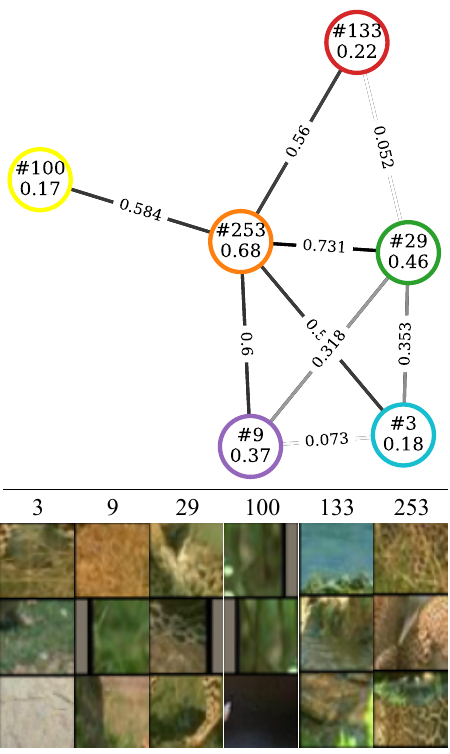}
    & \includegraphics[width=30mm]{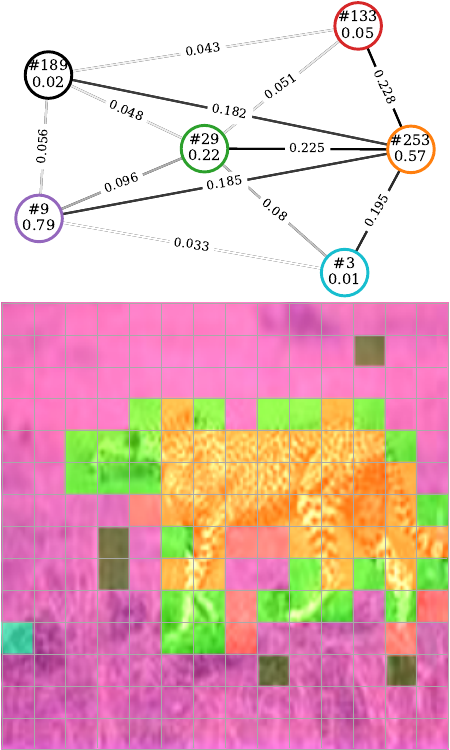}
    & \includegraphics[width=30mm]{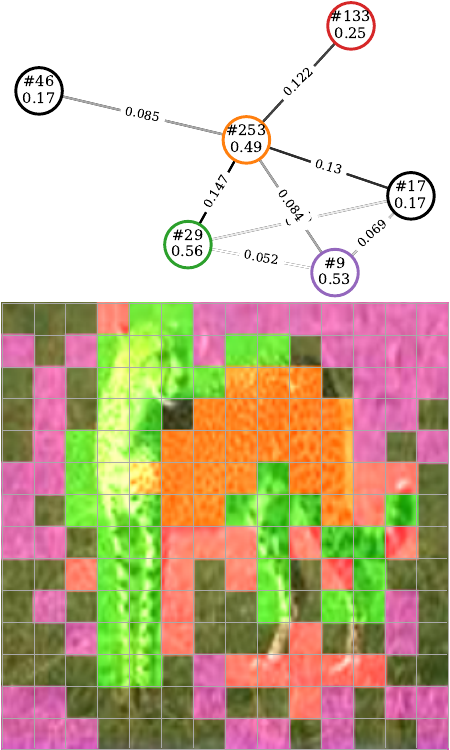}
    & \includegraphics[width=30mm]{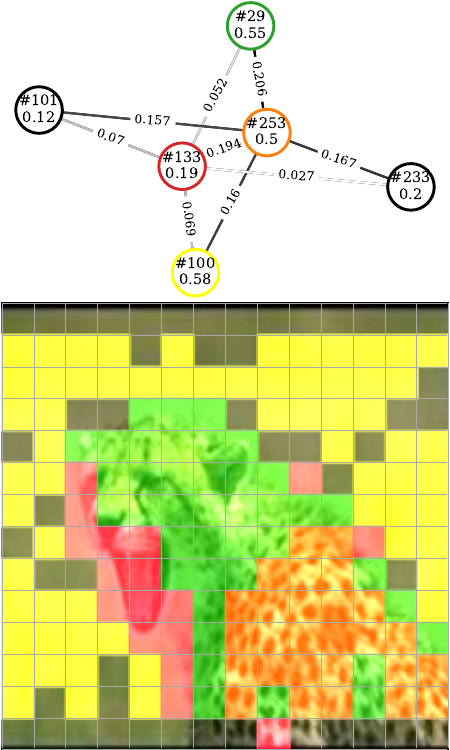}
    & \includegraphics[width=30mm]{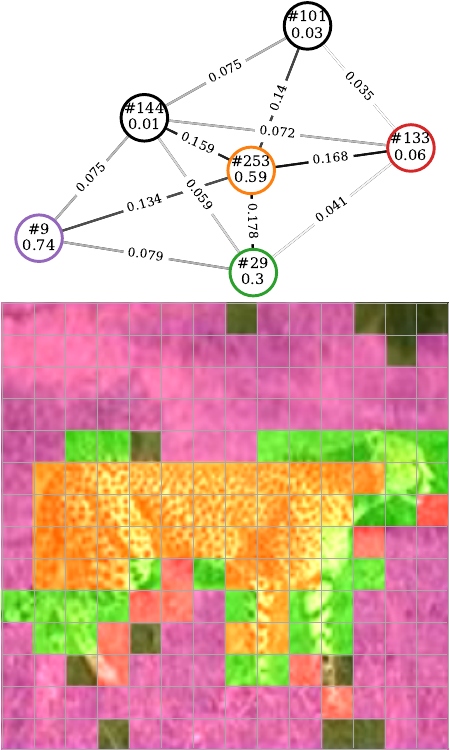}
    & \includegraphics[width=30mm]{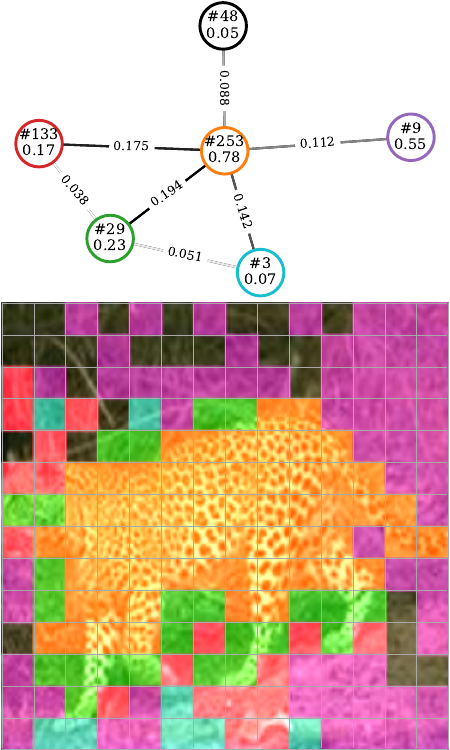} \\
    \midrule

    \includegraphics[width=30mm, height=50mm]{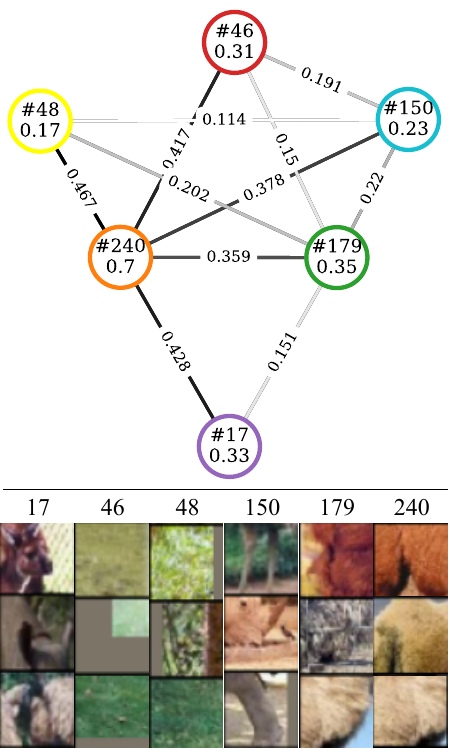}
    & \includegraphics[width=30mm]{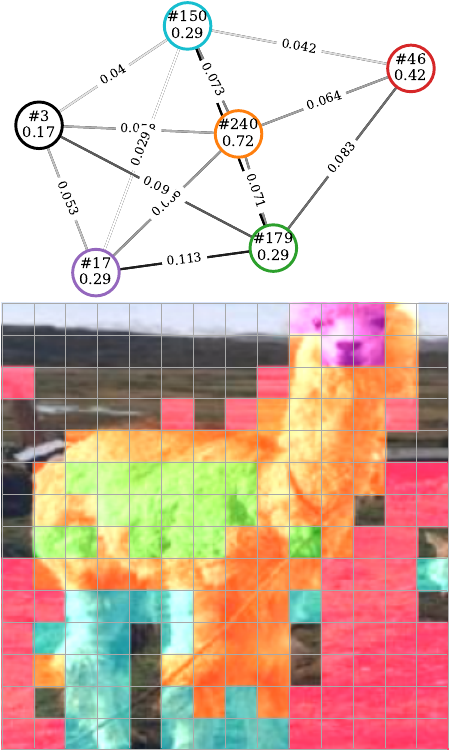}
    & \includegraphics[width=30mm]{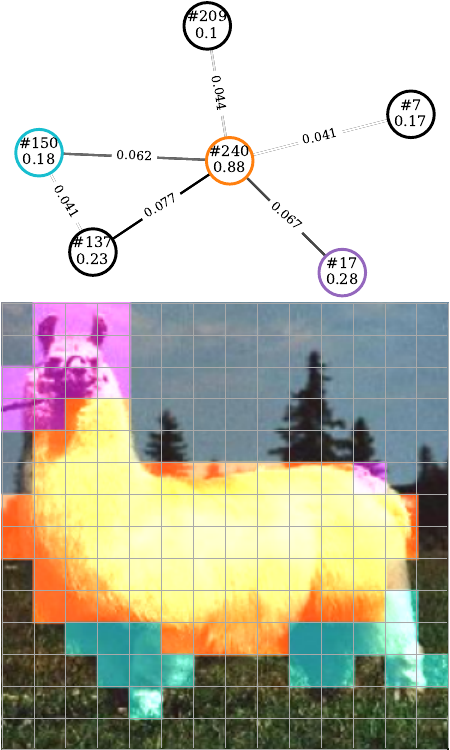}
    & \includegraphics[width=30mm]{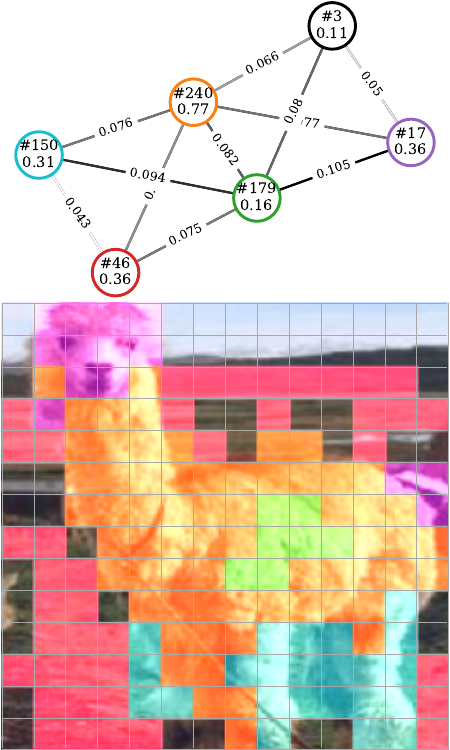}
    & \includegraphics[width=30mm]{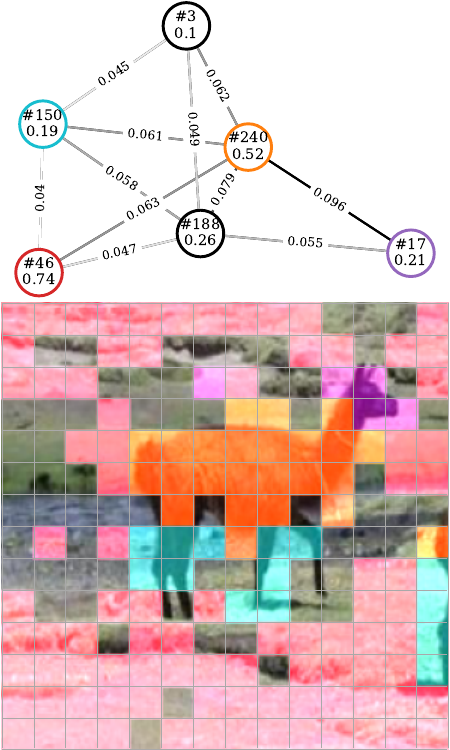}
    & \includegraphics[width=30mm]{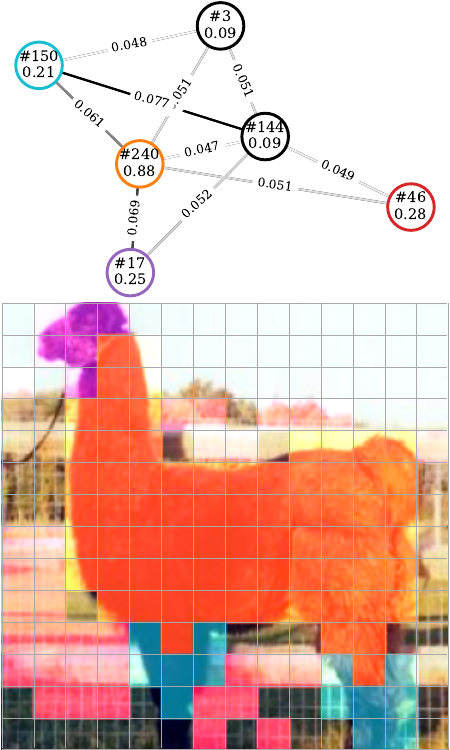} \\

    \bottomrule
    \end{tabular}%
}
\caption{%
Examples of the learned IR-Atlas and instance IR-Graphs randomly sampled from five categories (``hawksbill'', ``ibis'', ``kangaroo'', ``leopards'', and ``llama'') on Caltech-101.
}%
\label{fig:app_vis_2}%
\end{figure}

\begin{figure}[!p]
\centering%
\setlength{\tabcolsep}{1pt}%
\resizebox{\linewidth}{!}{
\begin{tabular}{c@{\hspace{2pt}}|@{\hspace{2pt}}ccccc}
    \toprule
    \textbf{IR-Atlas} & \multicolumn{5}{c}{\textbf{Instance IR-Graphs}} \\
    \midrule

    \includegraphics[width=30mm, height=50mm]{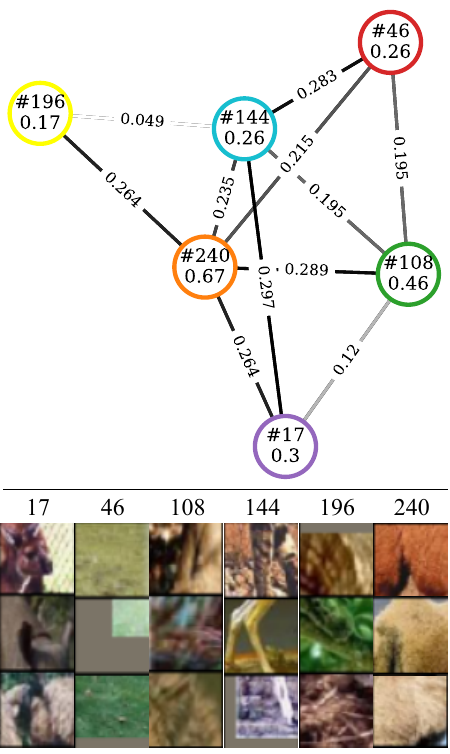}
    & \includegraphics[width=30mm]{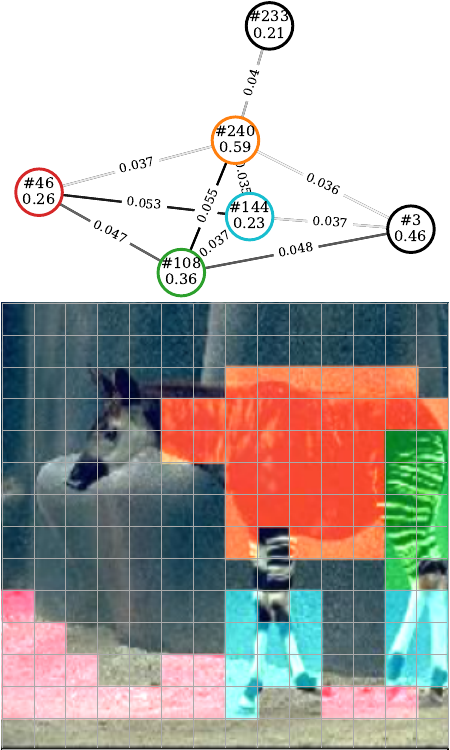}
    & \includegraphics[width=30mm]{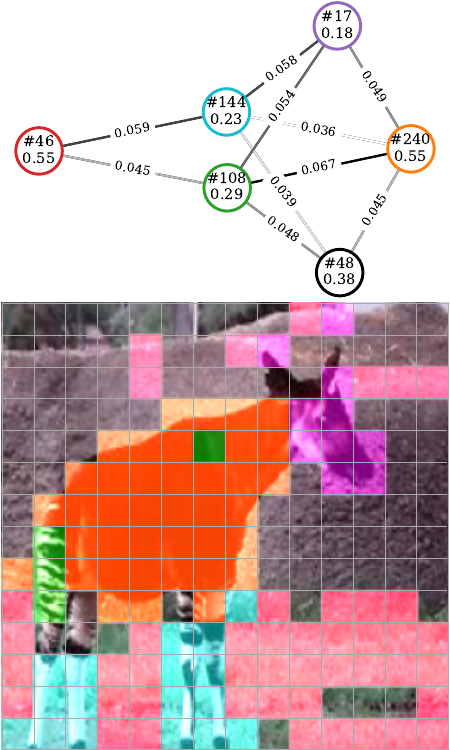}
    & \includegraphics[width=30mm]{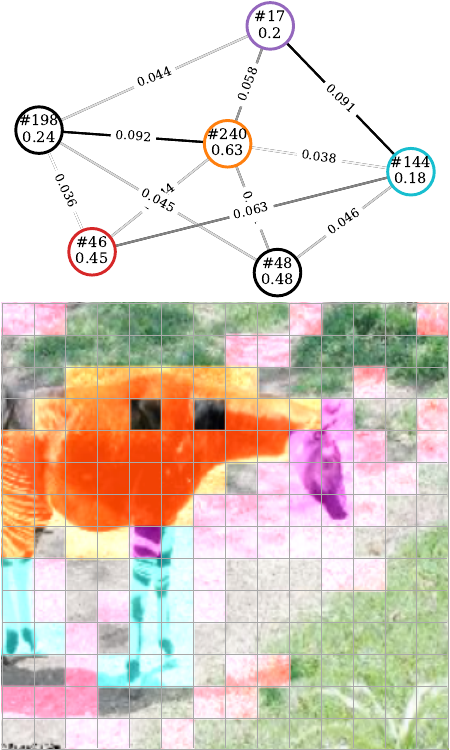}
    & \includegraphics[width=30mm]{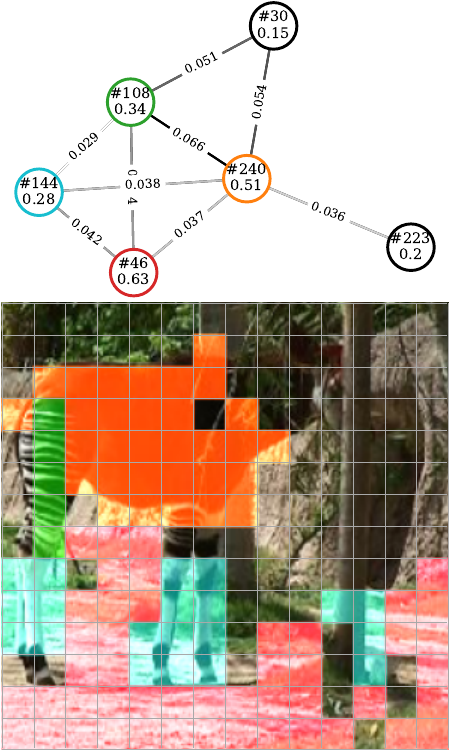}
    & \includegraphics[width=30mm]{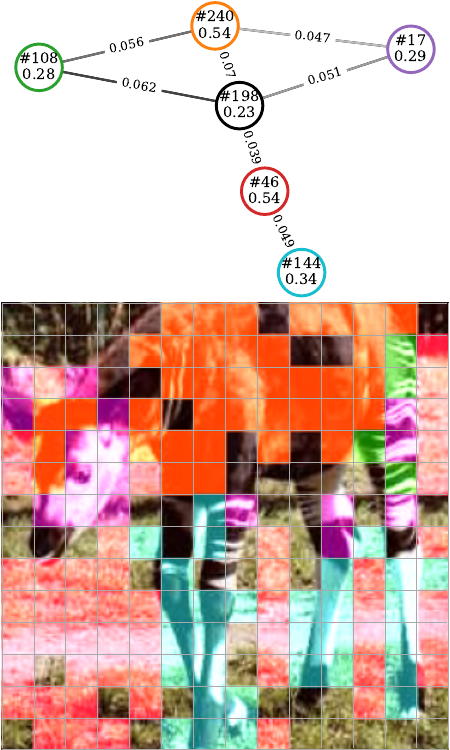} \\
    \midrule

    \includegraphics[width=30mm, height=50mm]{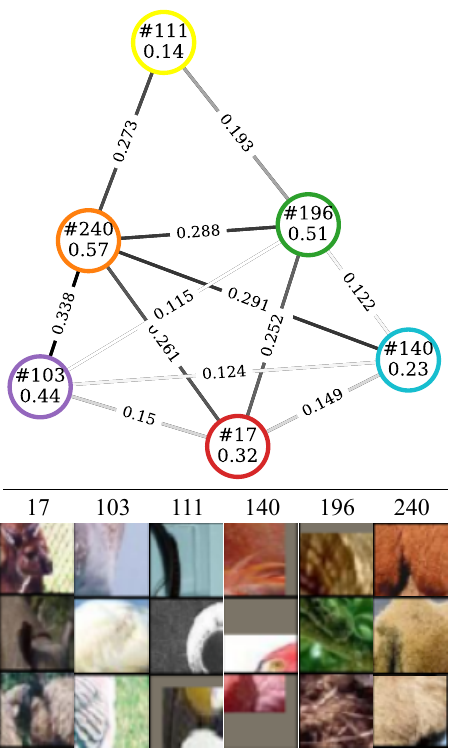}
    & \includegraphics[width=30mm]{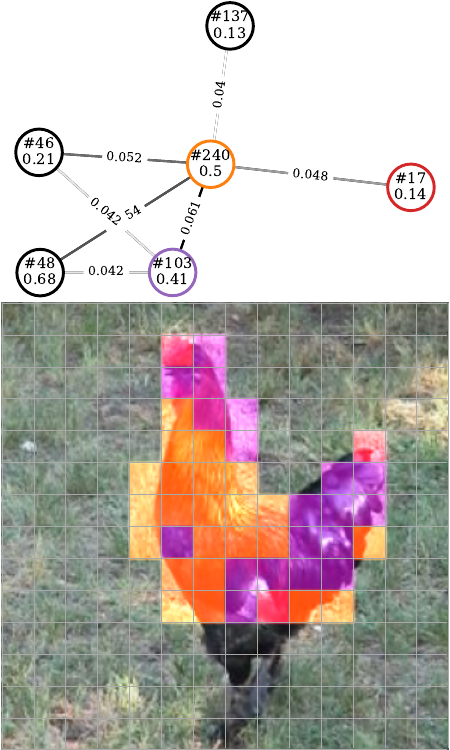}
    & \includegraphics[width=30mm]{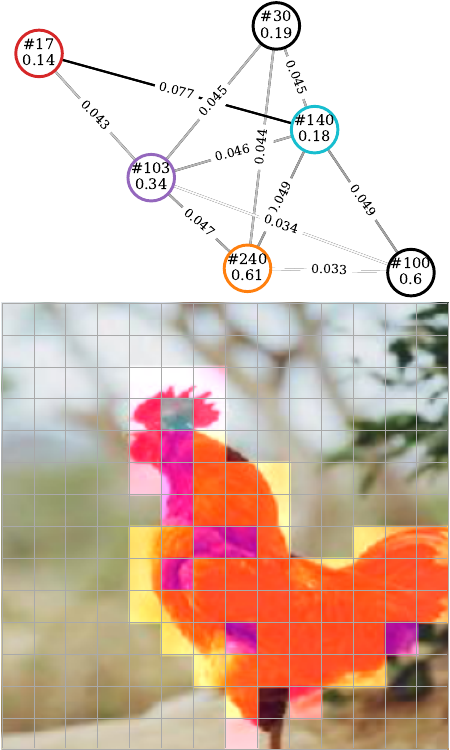}
    & \includegraphics[width=30mm]{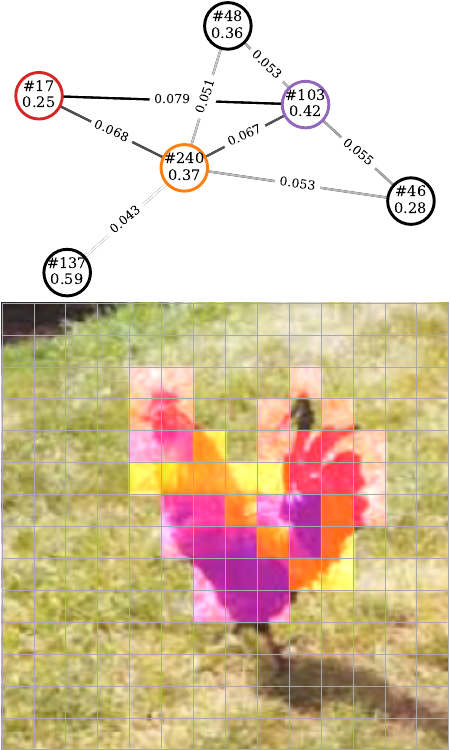}
    & \includegraphics[width=30mm]{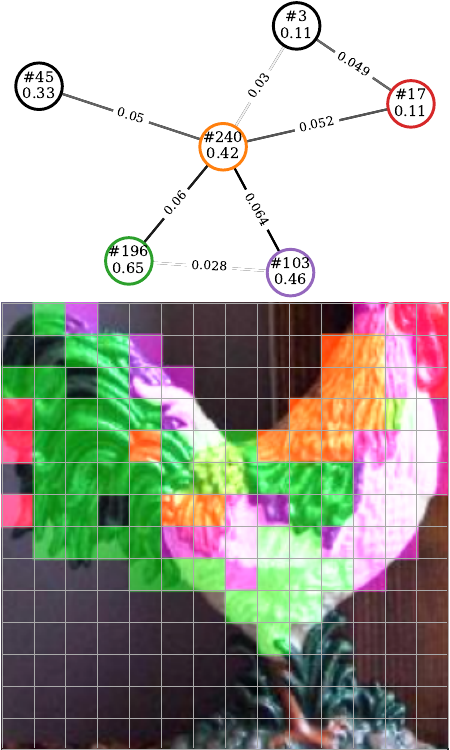}
    & \includegraphics[width=30mm]{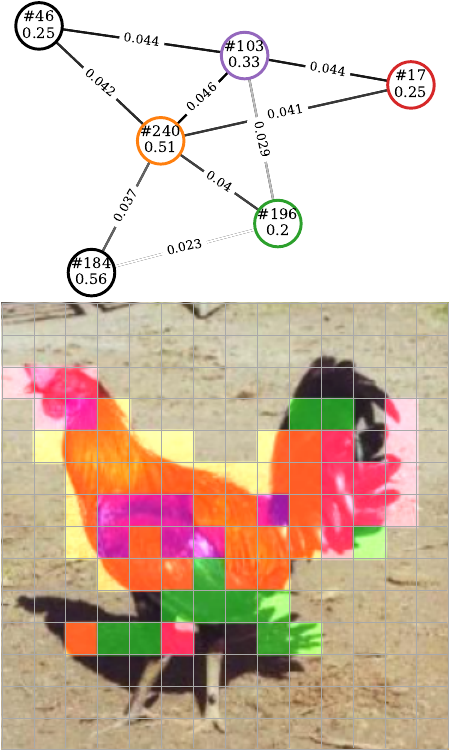} \\
    \midrule

    \includegraphics[width=30mm, height=50mm]{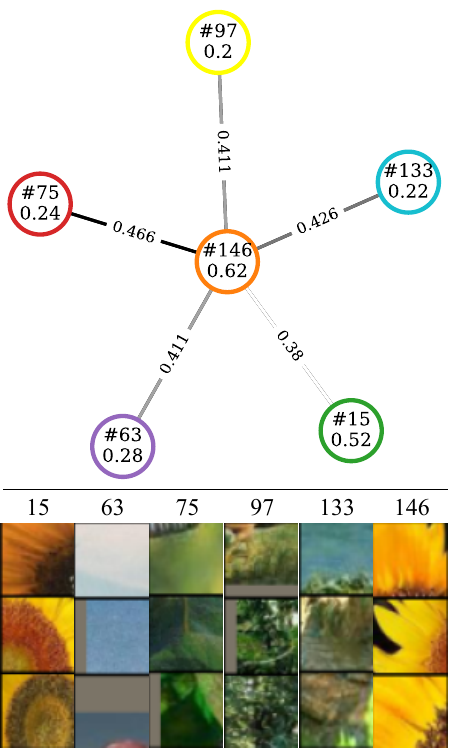}
    & \includegraphics[width=30mm]{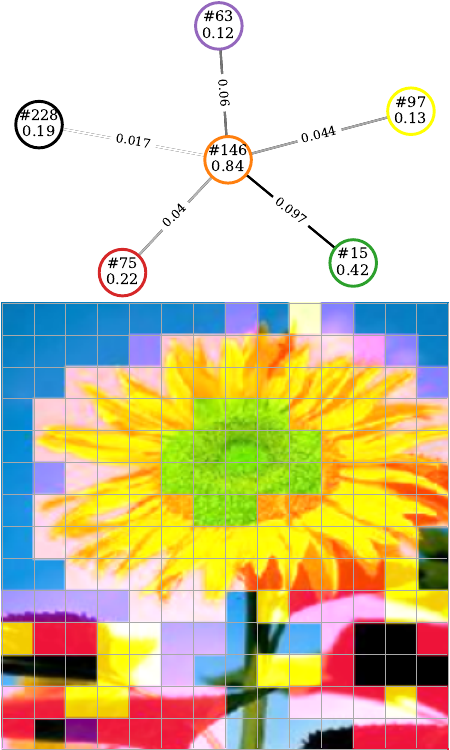}
    & \includegraphics[width=30mm]{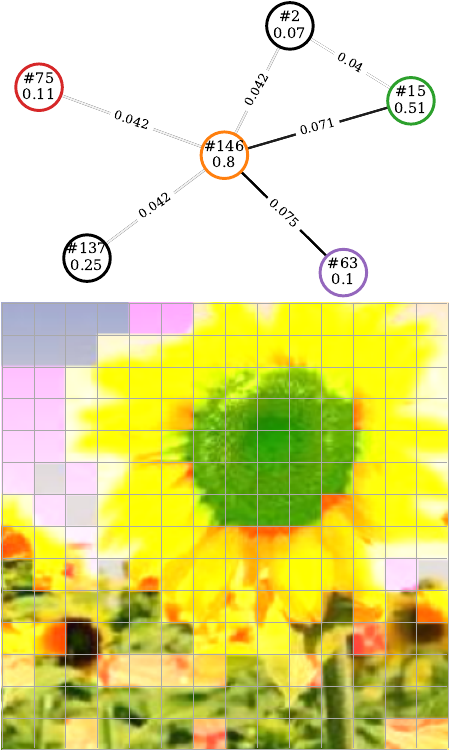}
    & \includegraphics[width=30mm]{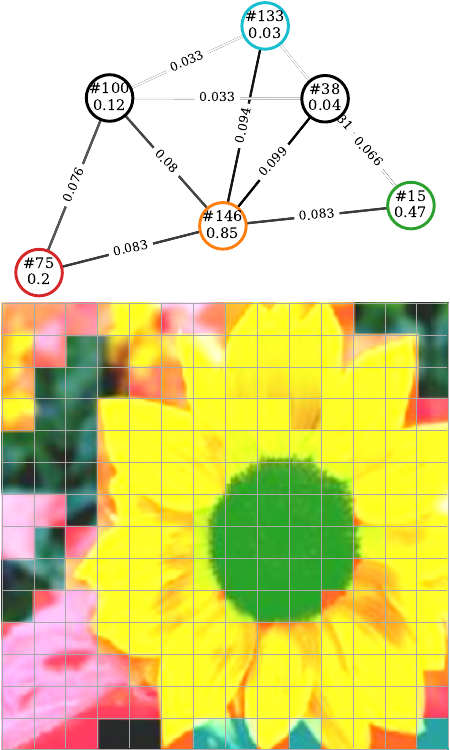}
    & \includegraphics[width=30mm]{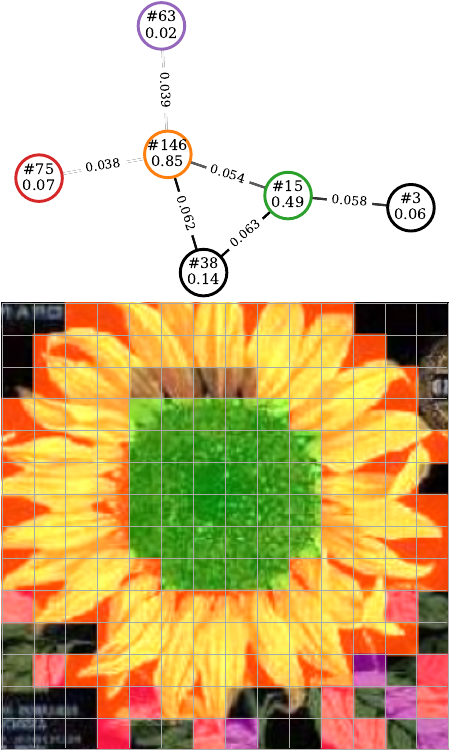}
    & \includegraphics[width=30mm]{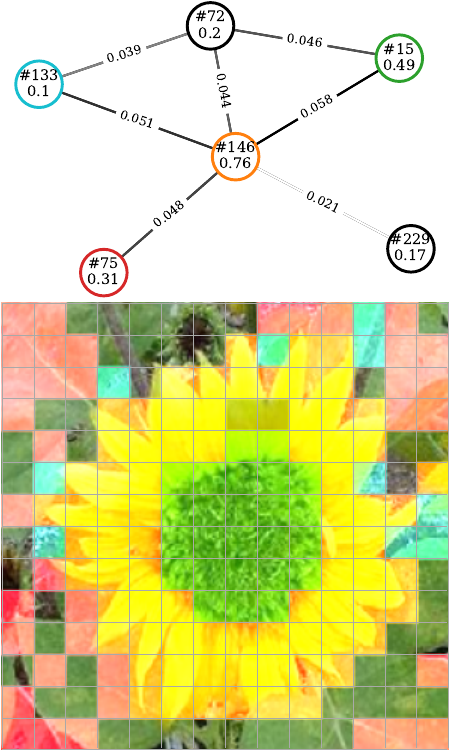} \\
    \midrule

    \includegraphics[width=30mm, height=50mm]{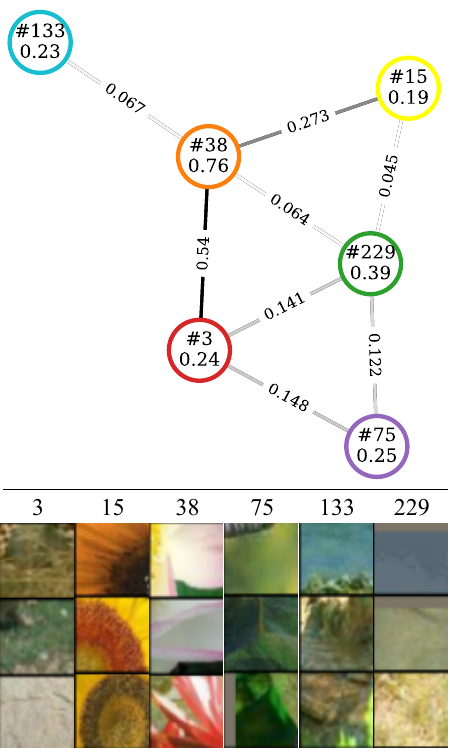}
    & \includegraphics[width=30mm]{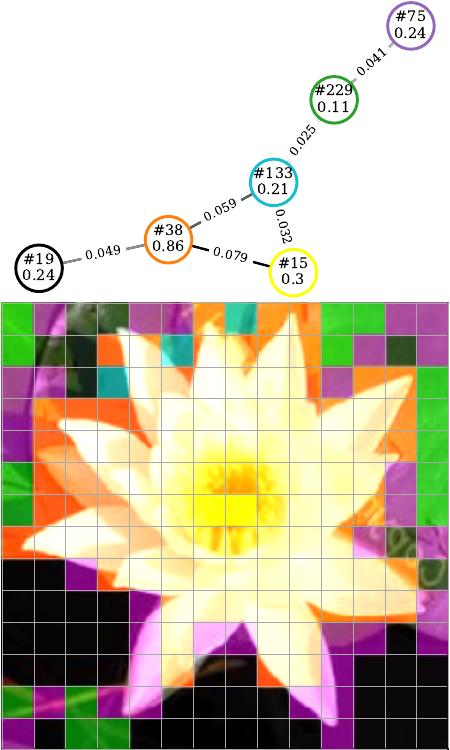}
    & \includegraphics[width=30mm]{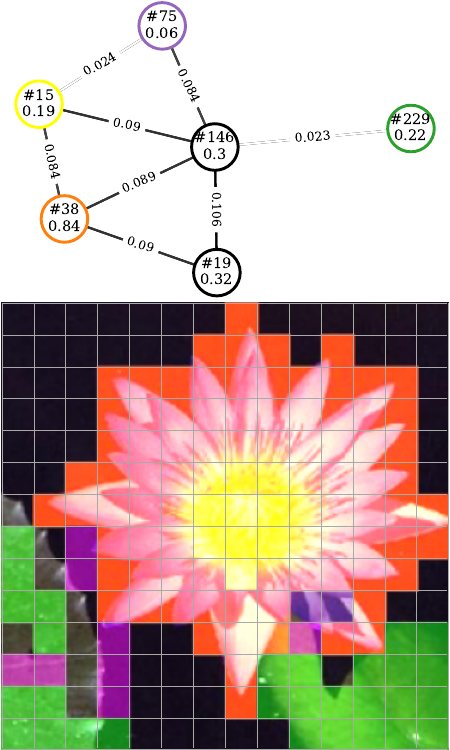}
    & \includegraphics[width=30mm]{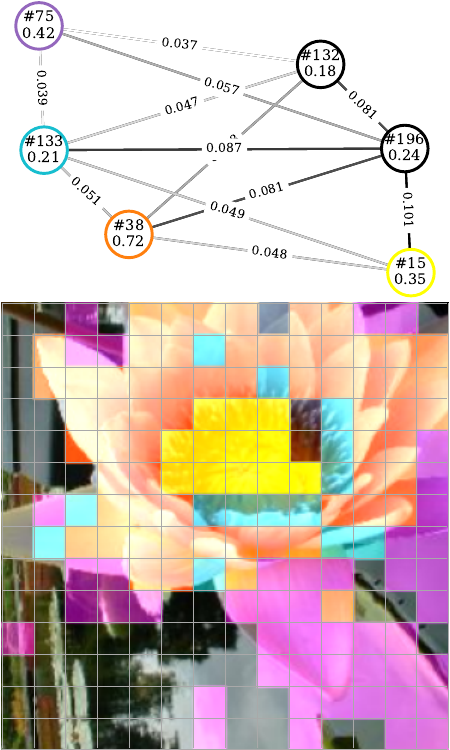}
    & \includegraphics[width=30mm]{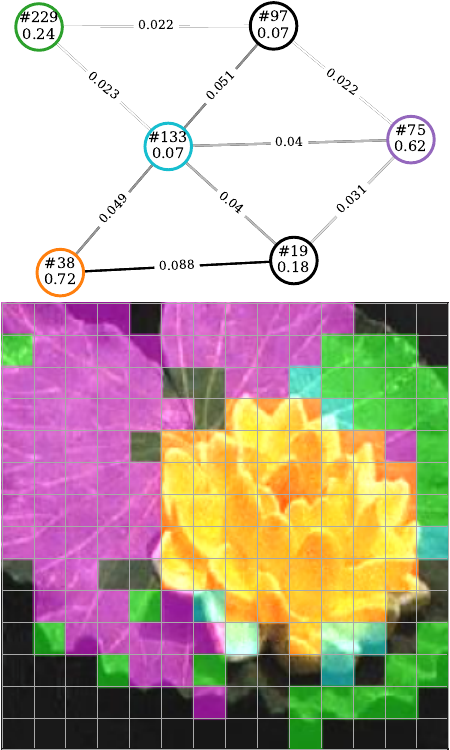}
    & \includegraphics[width=30mm]{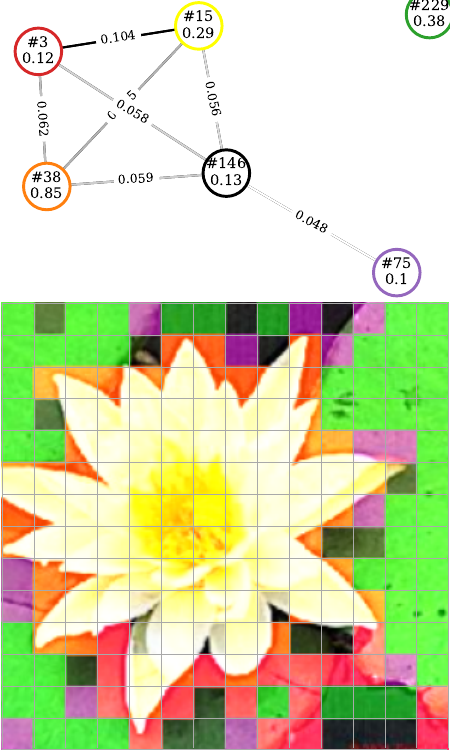} \\
    \midrule

    \includegraphics[width=30mm, height=50mm]{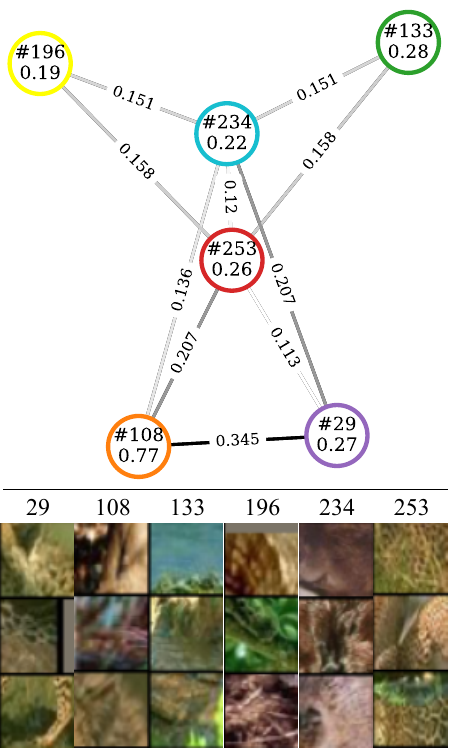}
    & \includegraphics[width=30mm]{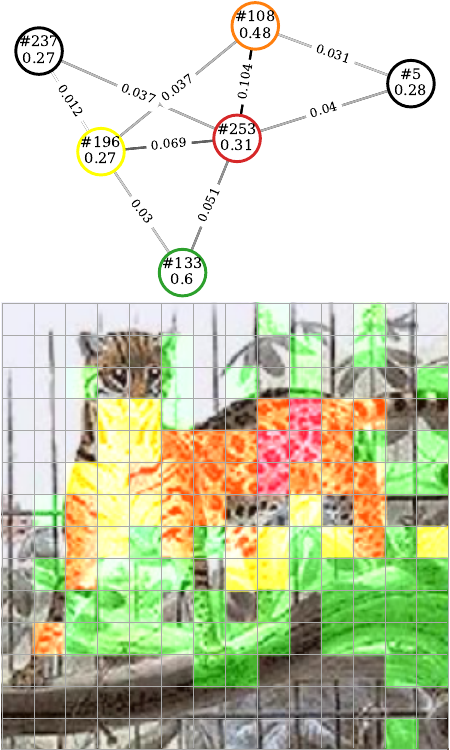}
    & \includegraphics[width=30mm]{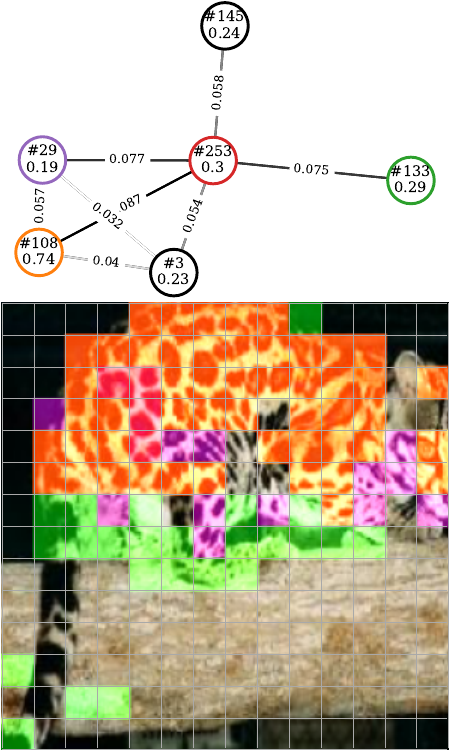}
    & \includegraphics[width=30mm]{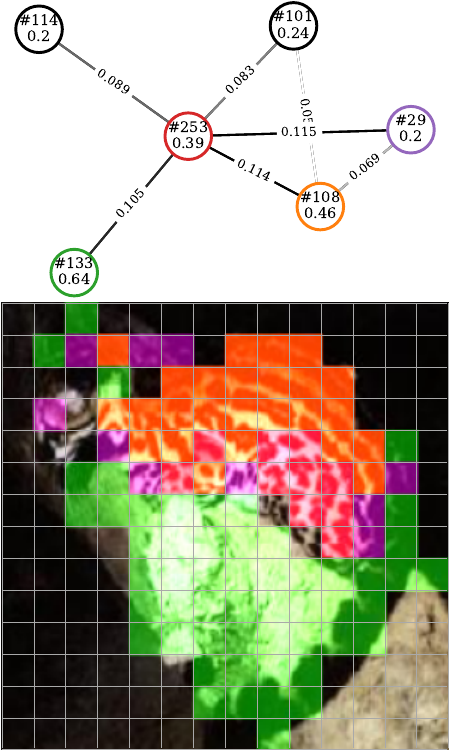}
    & \includegraphics[width=30mm]{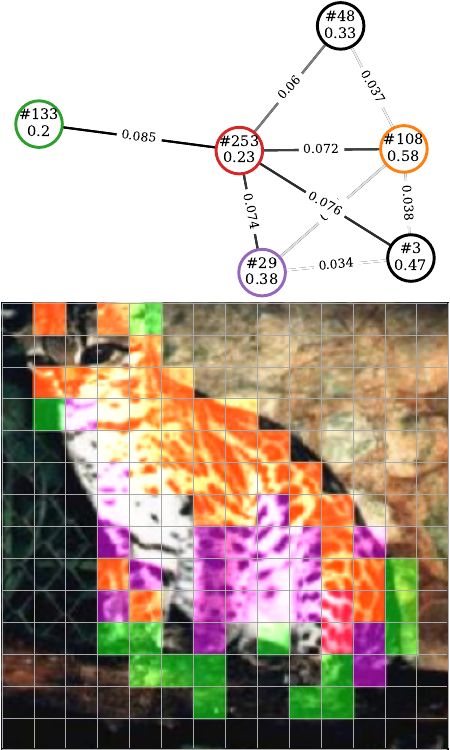}
    & \includegraphics[width=30mm]{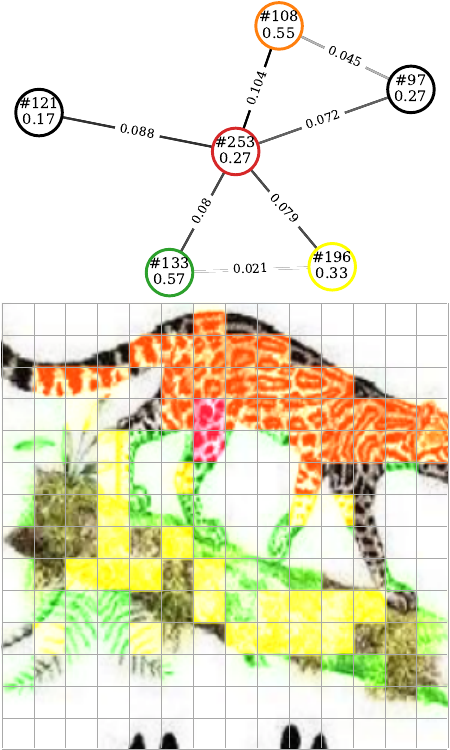} \\

    \bottomrule
    \end{tabular}%
}
\caption{%
Examples of the learned IR-Atlas and instance IR-Graphs randomly sampled from five categories (``okapi'', ``rooster'', ``sunflower'', ``water lilly'', and ``wild cat'') on Caltech-101.
}%
\label{fig:app_vis_3}%
\end{figure}

\end{document}